\documentclass{article}

\usepackage[preprint,nonatbib]{neurips_2026}

\usepackage[utf8]{inputenc} 
\usepackage[T1]{fontenc}    
\usepackage{hyperref}       
\usepackage{url}            
\usepackage{booktabs}       
\usepackage{amsfonts}       
\usepackage{nicefrac}       
\usepackage{microtype}      
\usepackage{xcolor}         

\usepackage[numbers,sort&compress]{natbib}
\usepackage{amssymb}
\usepackage{amsmath}
\usepackage{algorithm}
\usepackage{algpseudocode}
\usepackage{caption}
\usepackage{soul}
\usepackage{wrapfig}
\usepackage{makecell}
\usepackage{subcaption}
\usepackage{graphicx}
\usepackage{enumitem}
\usepackage{multirow}
\usepackage{wrapfig}
\usepackage{titlesec}

\titlespacing*{\section}
  {0pt}{*0.4}{*0.4}  
\titlespacing*{\subsection}
  {0pt}{*0.3}{*0.3}

\newcommand{\norm}[1]{\left\lVert#1\right\rVert}
\newcommand{\abs}[1]{\left|#1\right|}
\usepackage{amsthm}
\newtheorem{proposition}{Proposition}
\newtheorem{lemma}{Lemma}

\title{Neural Fields for NV-Center Inverse Sensing}

\author{%
  Zhixuan Zhao$^{*1,2}$\quad
  Tao Zhong\thanks{Equal Contribution.}~~$^1$\quad
  Yixun Hu$^{1}$\\
  \textbf{Nathalie P. de Leon$^{1}$\quad
  Christine Allen-Blanchette$^{1}$} \\
  $^1$Princeton University,\quad  $^2$Tsinghua University\\
  \texttt{\{tzhong, ca15\}@princeton.edu}
}

\begin{document}

\maketitle

\begin{abstract}
    Inverse problems in scientific sensing are often solved with either hand-designed regularizers or supervised networks trained on simulated labels, yet both can fail when the forward model is nonlinear, spectrally coupled, and physically delicate. We study this issue for noise sensing based on nitrogen-vacancy (NV) centers in diamond, where a quantum sensor measures magnetic-noise spectra generated by sparse spin sources. We show that replacing a common scalar/coherent forward approximation with a tensor power-summed dipolar operator changes the inverse landscape and exposes a center-collapse failure mode in free-density optimization. We propose NeTMY, an amortization-free coordinate neural field coupled to the differentiable NV forward model, with annealed positional encoding, multiscale optimization, sparsity/gating, and spectrum-fidelity losses. Across sparse synthetic reconstructions generated by the corrected operator, NeTMY achieves the best localization and distributional metrics in the tested benchmark. Mechanism experiments show that NeTMY does not directly execute the raw density-space gradient; its parameterization smooths and redistributes updates, mitigating the center-collapse pathology. These results position NV quantum sensing as a useful testbed for physics-faithful neural inverse problems.
\end{abstract}
\begin{figure}[h]
    \centering
    \includegraphics[width=\textwidth]{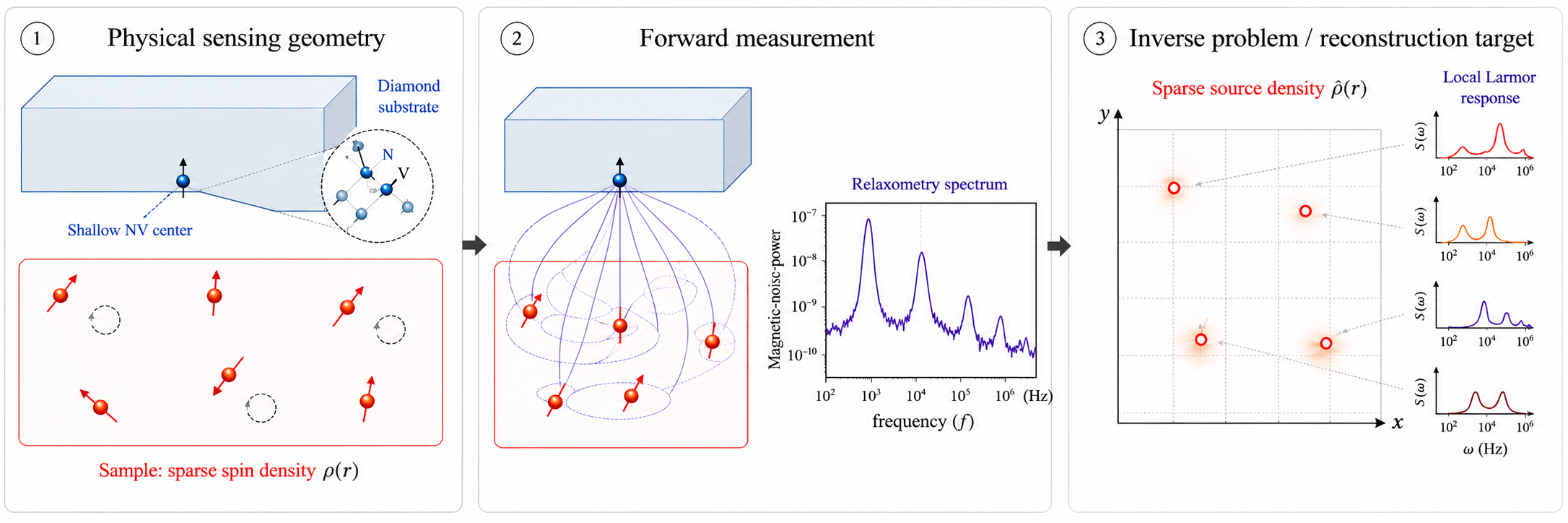}
    \caption{
    NV relaxometry maps sparse fluctuating spins to a noisy frequency-domain measurement.
    The inverse problem aims to recover the spin density and local Larmor response from the spectrum. 
    }
    \label{fig:nv-relaxometry-teaser}
    \vspace{-8pt}
\end{figure}
\section{Introduction}
Scientific machine learning is increasingly being used in settings where the quantity of interest is never observed directly. Instead, one observes the output of an instrument: a low-dimensional, noisy, spatially blurred, and often spectrally transformed measurement generated by a known but imperfect physical process~\cite{arridge2019solving,karniadakis2021physics}. This setting is fundamentally different from standard supervised prediction. Paired labels are expensive or impossible to obtain, synthetic data may not match experimental conditions, and seemingly minor approximations in the forward model can change not only the measurement distribution, but also the geometry of the inverse objective. As a result, a method that performs well under a convenient simulator may fail when the physical operator is made more faithful~\cite{kaipio2007statistical}. This interaction between forward-model fidelity, representation, and optimization is a central challenge for scientific ML.

We study this issue in nitrogen-vacancy (NV) center noise sensing~\cite{doherty2013nitrogen,degen2017quantum}. This problem is closely related to the better-established task of physical reconstruction from static magnetic-field images, where NV magnetic maps have been inverted to recover underlying current or magnetization distributions~\cite{midha2024optimized}. Here, however, the target is not a static field map but a sparse distribution of fluctuating spin sources and their local Larmor response inferred from frequency-dependent magnetic-noise spectra. As illustrated in Figure~\ref{fig:nv-relaxometry-teaser}, in this toy model, NV centers measure spectra induced by nearby fluctuating spins, while the scientific target is the spatial and spectral structure of the spin sources~\cite{casola2018probing,mzyk2022relaxometry,tetienne2013spin}. Recent non-ML NV noise/correlation measurements further motivate this reconstruction setting by showing that multiplexed NV platforms can access noise spectra and magnetic-field correlations at multiple spatial locations~\cite{cheng2025massively}. The resulting inverse problem is highly ill-conditioned~\cite{fuchs2023incomplete}: the dipolar response is low-pass and rapidly decaying~\cite{mzyk2022relaxometry}, nearby sources can merge below the effective point-spread width~\cite{grech2008review}, finite-window effects create spatial bias, and spectrum normalization introduces scale ambiguity and nonlocal gradients~\cite{zhang2018structured}. Thus, even with a differentiable forward model, direct reconstruction from magnetic noise spectra can have misleading descent directions and physically implausible local optima.

Existing approaches leave an important gap. Classical sparse inverse methods such as Tikhonov regularization~\cite{hoerl1970ridge,hoerl1970bridge} or ADMM~\cite{neal2011distributed} optimize the density field directly, but in our setting, this free-density parameterization can follow pathological gradients and collapse to central artifacts under the physical objective. Supervised image-to-image predictors, such as U-Nets~\cite{ronneberger2015u} or GANs~\cite{goodfellow2014generative,radford2015unsupervised}, can be strong when paired training data match the test distribution, but paired spin-density labels are scarce, and synthetic dense training distributions may not generalize to sparse scenes. Generic untrained neural priors~\cite{heckel2018deep,ulyanov2018deep} avoid supervision, but do not by themselves address the tensor relaxometry operator, Larmor-spectrum coupling, or scale ambiguity of this sensing problem.

To this end, we propose NeTMY, an amortization-free neural-field solver for NV relaxometry inversion. NeTMY represents the unknown density and spectral fields with a coordinate MLP and optimizes its parameters separately for each measurement through a differentiable tensor power-summed forward operator. The method uses annealed positional encoding~\cite{tancik2020fourier} and multiscale optimization to recover sparse high-frequency structure while maintaining stable descent, together with spectrum consistency, sparsity, TV regularization~\cite{rudin1992nonlinear}, density gating, and measurement-derived scale correction. Crucially, the forward operator sums tensor-channel magnetic-noise powers rather than squaring a coherently summed scalar field, avoiding nonphysical cross terms that appear in the simplified solver~\cite{guan2024solving,midha2024optimized}. This produces a more faithful inverse objective and exposes failure modes that are hidden or distorted under the scalar approximation.

Our experiments suggest that NeTMY's advantage comes from optimization geometry as much as expressivity. Direct density optimization follows the raw density-space gradient, whereas a neural-field update induces an image-space step filtered by the representation Jacobian. Our contributions are summarized as follows:
\begin{itemize}[leftmargin=*]
    \item We formulate NV noise sensing inversion as a physics-faithful differentiable inverse problem and introduce a tensor power-summed forward operator that avoids nonphysical cross terms in simplified scalar solvers.
    \item We propose NeTMY, an amortization-free coordinate neural-field solver that reconstructs sparse density and spectral fields from individual measurements without paired density labels.
    \item We show empirically and mechanistically that forward-model fidelity changes inverse-problem landscapes, and that NeTMY's parameterization mitigates center-collapse while isolating the contributing components and the remaining failure modes.
\end{itemize}

\section{Related Work}
\label{sec:related}
\textbf{NV centers} have become a standard tool for nanoscale magnetic sensing, mapping local magnetic-noise into optical / $T_1$ and $T_2$ measurement with single-defect resolution~\cite{casola2018probing,degen2017quantum,doherty2013nitrogen,tetienne2013spin,van2015nanometre}. Recent applications include spin-bath sensing~\cite{mzyk2022relaxometry,aharon2019nv,homrighausen2023edge}, nanoscale sensing for condensed matter physics and materials science~\cite{rovny2024nanoscale}, and high-resolution nanoscale measurements~\cite{liu2026high,mamin2013nanoscale,avila2025enhancing,smooha2026sensing,tsukamoto2022accurate,midha2024optimized}. These works establish the sensing physics and forward models but do not study learning-centric inverse algorithms or operator-fidelity-induced optimization-geometry changes, while we treat NV noise sensing as a differentiable inverse-problem benchmark.

\textbf{Sparse optimization algorithms in classical inverse problems}, such as Tikhonov and total-variation regularization~\citep{rudin1992nonlinear,bunks1995multiscale}, ADMM~\citep{neal2011distributed}, and proximal methods~\citep{chambolle2011first,chi2019nonconvex} are standard for ill-posed reconstruction~\citep{arridge2019solving,kaipio2007statistical,virieux2010overview}, with a long line of compressed-sensing~\citep{candes2006robust,donoho2006compressed} and sparse-recovery results~\citep{ahmed2013blind,jagatap2019algorithmic,bora2017compressed,ling2015self,lee2016blind}. These tools have strong guarantees in convex / well-conditioned linear inverse settings, but the NV inverse objective is nonlinear, max-normalized, finite-windowed, tensorial, and spectrally coupled. Standard convexity-based stability analysis does not transfer, and free-density local methods can be attracted to center-biased gradients.

\textbf{Data-driven inverse solvers}, such as supervised image-to-image networks~\citep{ronneberger2015u,mccann2017convolutional}, learned unrolling and primal-dual networks~\citep{monga2021algorithm,hammernik2018learning,yang2018admm,gupta2018cnn}, plug-and-play priors~\citep{venkatakrishnan2013plug,romano2017little,mataev2019deepred,zhang2017learning}, and GAN-based reconstruction~\citep{ongie2020deep,gilton2021deep} provide strong baselines when paired training data matches test data. The field has separately documented reconstruction instabilities under perturbation and model shift~\citep{antun2020instabilities}, robustness gaps~\citep{henderson2018deep,recht2019imagenet}, and benchmarking pitfalls~\citep{liao2021we,raji2021ai,dehghani2021benchmark}. In NV relaxometry, paired spin-density labels are scarce~\cite{mzyk2022relaxometry}. We document train-on-dense / test-on-sparse failure modes and use per-instance physics-driven optimization to side-step them.

\textbf{Untrained neural priors, neural fields, and physics-informed differentiable scientific ML.}
On the parameterization-as-prior side, Deep Image Prior~\citep{ulyanov2018deep}, Deep Decoder~\citep{heckel2018deep,heckel2020compressive}, Fourier features / SIREN~\citep{tancik2020fourier,sitzmann2020implicit}, NeRF~\citep{mildenhall2021nerf}, INR-based reconstruction~\citep{park2021nerfies,martel2021acorn,molaei2023implicit,shi2024improved,reed2021dynamic,shen2022nerp,alkhouri2025understanding,habring2024neural,zhao2024single}, and 3D Gaussian Splatting~\citep{kerbl20233d} demonstrate that the structure of a coordinate or generator network itself biases reconstruction. On the physics-in-loop side, physics-informed neural networks~\citep{raissi2019physics,karniadakis2021physics,cuomo2022scientific,inda2022physics} and differentiable simulators~\citep{zhong2026neural,zhao2025revealing} incorporate known physical laws as losses or as differentiable forward models. Generic neural priors do not encode NV tensor physics, Larmor-spectrum coupling, or scale ambiguity, and the physics-informed thread has rarely studied how forward-operator fidelity reshapes inverse-problem optimization geometry. We connect the two threads on a concrete differentiable scientific inverse problem and ground the parameterization-geometry effect in measurable optimization signatures.

\section{Preliminaries and Problem Formulation}
\label{sec:prelim}

We fix notation, the forward measurement model, the canonical inverse objective, and the structural sources of ill-posedness that the Method (Section~\ref{sec:method}) is designed to address. Detailed derivations and diagnostics are deferred to Appendices~\ref{app:operator-derivation}, \ref{app:scale-correction}, and \ref{app:illposed-diagnostics}.

\subsection{Measurement Model}
\label{sec:meas-model}
Let $\Omega\subset\mathbb{R}^{2}$ be a finite spatial grid and let $\rho:\Omega\to\mathbb{R}_{\geq 0}$ denote the unknown spin-source density. A widefield NV array sits at standoff height $z_{0}$ above $\Omega$ and reads out a frequency-dependent noise spectrum $S_{\mathrm{obs}}(\omega,\mathbf{r})=\mathcal{F}(\rho,\omega_{L})(\omega,\mathbf{r})+\varepsilon$ on a discrete grid $W\subset\mathbb{R}_{+}$, where $\omega_{L}:\Omega\to\mathbb{R}_{+}$ is a per-pixel Larmor field encoding local detuning, $\varepsilon\sim\mathcal{N}(0,\sigma_{\varepsilon}^{2})$ is sensor noise with standard deviation $\sigma_{\varepsilon}$, and $\mathcal{F}$ is a differentiable forward operator. The dipolar Green tensor coupling sample spins to the NV $z$-axis is
\begin{equation}
\label{eq:green}
G_{ia}(\mathbf{R}) \;=\; \frac{\mu_{0}}{4\pi}\,\frac{3R_{i}R_{a}-\norm{\mathbf{R}}^{2}\delta_{ia}}{\norm{\mathbf{R}}^{5}},\qquad i,a\in\{x,y,z\},
\end{equation}
where $\mathbf{R}(\mathbf{r},\mathbf{r}_{\mathrm{src}})=(\mathbf{r}-\mathbf{r}_{\mathrm{src}},z_{0})$, $\mu_{0}$ is the vacuum permeability and $\delta_{ia}$ the Kronecker delta. The spectral response of a dipole resonant at $\omega_{L}$ with linewidth $\gamma$ is the Lorentzian $L(\omega;\omega_{L})=\gamma^{2}/((\omega-\omega_{L})^{2}+\gamma^{2})$~\citep{tetienne2013spin,doherty2013nitrogen,casola2018probing,mzyk2022relaxometry}.

We compare two FFT-based forward operators that differ in how transverse-field noise is aggregated across the channels of $G$ and across distinct sources:
\begin{align}
\label{eq:s1s2}
\mathcal{F}_{1}(\rho,\omega_{L})(\omega,\mathbf{r})
&\;=\;\bigl|(G_{\mathrm{nv}}\!\ast\rho)(\mathbf{r})\bigr|^{2}\,L\bigl(\omega;\omega_{L}(\mathbf{r})\bigr),\\
\mathcal{F}_{2}(\rho,\omega_{L})(\omega,\mathbf{r})
&\;=\;\biggl[\sum_{a\in\{x,y,z\}}\!\bigl(\abs{G_{az}}^{2}\!\ast\rho\bigr)(\mathbf{r})\biggr]\,L\bigl(\omega;\omega_{L}(\mathbf{r})\bigr).\nonumber
\end{align}
where $\ast$ denotes 2D spatial convolution. The scalar/coherent operator $\mathcal{F}_{1}$ squares a single NV-axis projection $G_{\mathrm{nv}}=\sum_{a}n_{a}G_{az}$ ($\mathbf{n}=(0,0,1)$ here) after spatial superposition, while the tensor/incoherent operator $\mathcal{F}_{2}$ sums per-channel power kernels and treats distinct sources as independent thermal fluctuators. Their pointwise difference is a kernel-product cross-term coupling pairs of distinct sources, which is the algebraic origin of the inverse-landscape difference exploited in Section~\ref{sec:experiments}. We treat $\mathcal{F}_{2}$ as the physically appropriate inversion operator. Because $\mathcal{F}_{2}$ factorizes the Lorentzian response outside the
kernel convolution, it is exact for locally constant $\omega_L$ and remains
controlled when
$\chi\sim\Delta\omega_L^{(\mathrm{kernel})}/\gamma\ll1$, which is suitable for the research scale. The scope of application and limitations of $\mathcal{F}_{2}$ is in Appendix~\ref{app:operator-lorentzian_robustness}; we additionally introduce a slower direct simulator $\mathcal{F}_{3}$ that evaluates the source-side superposition without FFT factorization as the most physically accurate operator(Appendix~\ref{app:operator-direct}). To avoid the inverse-crime risk of reusing the inversion operator as the data-generation operator~\citep{kaipio2007statistical}, the main-text benchmark generates spectra with $\mathcal{F}_{3}$ and inverts under $\mathcal{F}_{1}$ or $\mathcal{F}_{2}$; matched-operator results that generate and invert with the same $\mathcal{F}_{1}$ or $\mathcal{F}_{2}$ are reported in the appendix as a complementary view. Full pointwise derivations of $\mathcal{F}_{1}$, $\mathcal{F}_{2}$, $\mathcal{F}_{3}$, and the regime of physical applicability of each form are given in Appendix~\ref{app:operator-derivation}.

\subsection{Inverse Problem}
\label{sec:inverse-problem}
Following standard practice in NV sensing~\citep{kumar2024room,midha2024optimized,casola2018probing}, the spectrum is summed across $W$ and max-normalized before being compared on a logarithmic scale. Writing $N(\mathbf{r};S)=\sum_{\omega\in W}S(\omega,\mathbf{r})$ and $\widehat{N}(\mathbf{r};S)=N(\mathbf{r};S)/\max_{\mathbf{r}'}N(\mathbf{r}';S)$, the data-fidelity term $\mathcal{D}(\mathcal{F}(\rho,\omega_{L}),S_{\mathrm{obs}})$ is the pixelwise log-MSE between $\widehat{N}(\,\cdot\,;\mathcal{F}(\rho,\omega_{L}))$ and $\widehat{N}(\,\cdot\,;S_{\mathrm{obs}})$ (full form in Appendix~\ref{app:scale-correction}). The canonical regularized inverse problem is
\begin{equation}
\label{eq:invprob}
\min_{\rho\geq 0,\;\omega_{L}}\;
\mathcal{D}\bigl(\mathcal{F}(\rho,\omega_{L}),S_{\mathrm{obs}}\bigr)
\;+\;\lambda_{1}\norm{\rho}_{1}
\;+\;\lambda_{\mathrm{TV}}\,\mathrm{TV}(\rho),
\end{equation}
where $\norm{\rho}_{1}=\sum_{\mathbf{r}\in\Omega}\abs{\rho(\mathbf{r})}$ is the $\ell_{1}$ norm of the discretized density, $\mathrm{TV}(\rho)=\sum_{\mathbf{r}\in\Omega}\bigl(\abs{\rho(\mathbf{r}+\mathbf{e}_{x})-\rho(\mathbf{r})}+\abs{\rho(\mathbf{r}+\mathbf{e}_{y})-\rho(\mathbf{r})}\bigr)$ is the anisotropic total variation, $\mathbf{e}_{x},\mathbf{e}_{y}$ are unit grid steps, and $\lambda_{1},\lambda_{\mathrm{TV}}\geq 0$ are scalar weights. Eq.~\eqref{eq:invprob} is the objective shared by every method in our benchmark; classical baselines (Tikhonov, ADMM, GaussianSplat) and the proposed neural field method differ in (i) how $\rho$ is parameterized and (ii) which method-specific physics losses they add (Section~\ref{sec:method}).

A direct consequence of the max-normalization in $\widehat{N}$ is that the linearity $\mathcal{F}_{2}(c\rho,\omega_{L})=c\,\mathcal{F}_{2}(\rho,\omega_{L})$ leaves $\widehat{N}$ invariant for every $c>0$, so the pre-normalization scale of $\rho$ is unidentifiable from $\widehat{N}$ alone~\citep{ahmed2013blind,ling2015self}. This is a property of the problem; the resolution adopted by the methods compared here is described in Section~\ref{sec:method} and proved in Appendix~\ref{app:scale-correction}.

\subsection{Ill-posedness}
\label{sec:illposed}
Even with $\mathcal{F}_{2}$ in place, four structural properties make Eq.~\eqref{eq:invprob} severely ill-conditioned. We name and locate them here; the formal Lemma~\ref{lem:freqdecay} (frequency decay), the per-pixel Jacobian-column squared norm, the max-normalization gradient, and a heuristic estimate of the effective point-spread width are in Appendix~\ref{app:illposed-diagnostics}.

\textbf{(P1) Exponential frequency suppression.} Each tensor channel $|G_{az}|^{2}$ in $\mathcal{F}_{2}$ has 2D Fourier transform with exponential envelope $e^{-k z_{0}}$ up to algebraic factors in $k$, where $k=\norm{\mathbf{k}}$ is the magnitude of the spatial-frequency vector $\mathbf{k}$~\citep{van2015nanometre}, so high-spatial-frequency density features carry vanishingly little forward signal (Lemma~\ref{lem:freqdecay}).

\textbf{(P2) Finite-window center bias.} The convolutional footprint of a source pixel is more fully observable at the window center than at the boundary, so the per-pixel forward sensitivity $\norm{\partial \mathcal{F}/\partial\rho(\mathbf{r}_{0})}$ is larger near the center, producing a centered descent direction even from a uniform initialization.

\textbf{(P3) Max-normalization peak coupling.} The derivative of the max-normalized noise map contains a nonlocal term concentrated at the current peak pixel, sharpening any incipient peak and amplifying the (P2) center bias once a peak appears.

\textbf{(P4) Resolution-limited merging and joint $(\rho,\omega_{L})$ ambiguity.} Source pairs separated by less than $\mathcal{O}(z_{0})$ cannot be disambiguated, and $\omega_{L}$ is identifiable only on the support of $\rho$.

These pathologies, together with $\mathcal{F}_{2}$'s spectrally coupled, finite-window structure, are why the parameterization of $\rho$ matters more than ordinary regularizer tuning in Eq.~\eqref{eq:invprob}. Section~\ref{sec:method} addresses (P1)--(P3) through representation geometry and the curriculum of the neural-field optimizer, and (P4) through density-masked Larmor handling.

\section{Method}
\label{sec:method}

NeTMY solves the inverse problem of Eq.~\eqref{eq:invprob} \emph{per measurement}: rather than learning an amortized inverse map from a paired dataset, it represents the unknown density $\rho$ and Larmor field $\omega_{L}$ as the output of a coordinate MLP and optimizes the MLP parameters against a single observed spectrum through a differentiable forward operator $\mathcal{F}\in\{\mathcal{F}_{1},\mathcal{F}_{2}\}$. We then introduce design choices that, in combination, address the structural pathologies (P1)--(P4) of \S~\ref{sec:illposed} through the optimization geometry of the parameterization rather than through additional regularizer tuning. A method overview is given in Figure~\ref{fig:method-overview}; full architecture, loss, schedule, and Jacobian-filtering derivations are in Appendix~\ref{app:method}.

\begin{figure}[t]
    \centering
    \includegraphics[width=\textwidth]{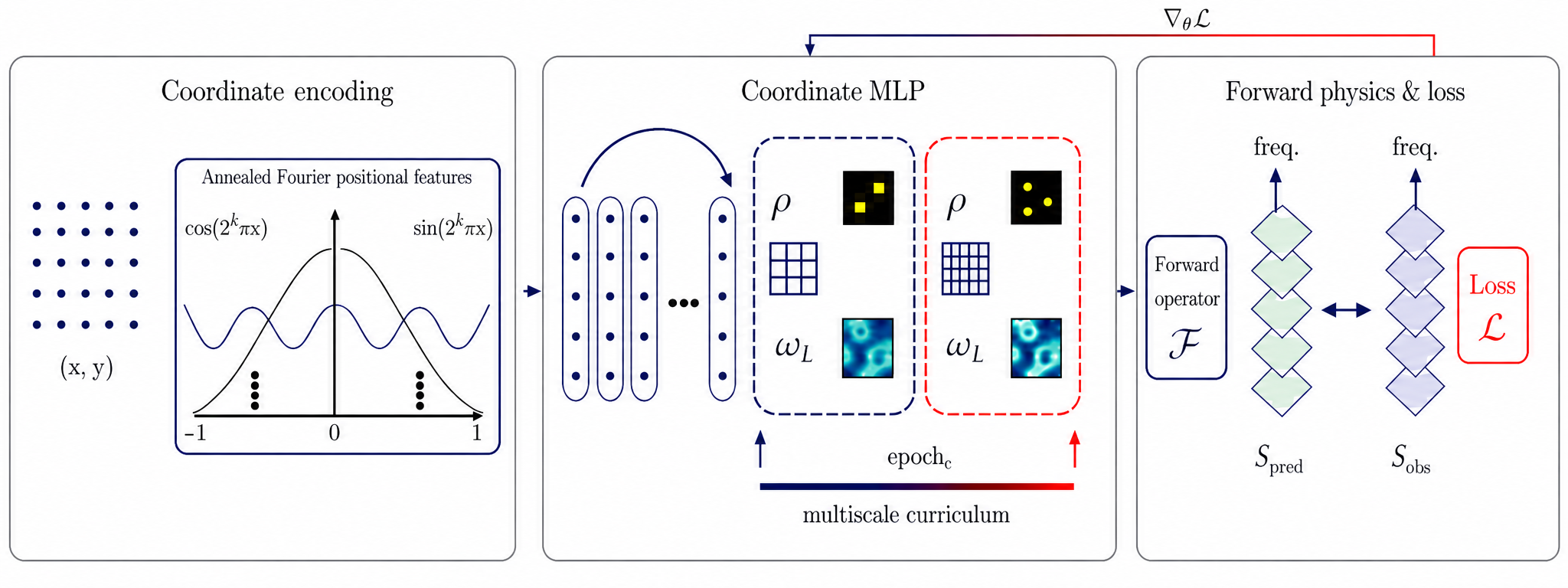}
    \caption{NeTMY pipeline. Coordinates pass through annealed Fourier features and a coordinate MLP $f_{\theta}$ to a density $\rho$ and Larmor field $\omega_{L}$; a differentiable forward operator $\mathcal{F}$ predicts a spectrum compared with the observation under loss $\mathcal{L}$. The same MLP is trained per instance across two resolutions in sequence; gradients flow back through the operator, with no paired training data.}
    \label{fig:method-overview}
    \vspace{-10pt}
\end{figure}

\subsection{Coordinate Neural Field}
\label{sec:method-field}
We parameterize the unknowns as an MLP of grid coordinates,
\begin{equation}
\label{eq:method-field}
f_{\theta}:(x,y)\;\mapsto\;\bigl(\rho_{\theta}(x,y),\;\omega_{L,\theta}(x,y)\bigr),
\qquad
(x,y)\in\Omega,
\end{equation}
with the network input augmented by annealed Fourier positional features $\gamma_{\beta}(x,y)$ whose high-frequency bands are gated on smoothly through a training-progress scalar $\beta$~\citep{tancik2020fourier,park2021nerfies,sitzmann2020implicit}. Annealing addresses (P1) by fitting low-$\norm{\mathbf{k}}$ structure of $\rho$ before high-$\norm{\mathbf{k}}$ detail. The MLP is a fully connected backbone with a skip connection re-injecting the encoded input partway through.

The two output heads use bounded transforms tailored to the physical priors. The density head is a gated softplus,
\begin{equation}
\label{eq:method-density-head}
\rho_{\theta}(x,y)\;=\;\mathrm{softplus}\bigl(h_{\rho}(x,y)\bigr)\,\sigma\bigl(g(x,y)\bigr),
\end{equation}
where $h_{\rho}$ is the raw density logit, $g$ is a separate gate logit, and $\sigma$ is the elementwise sigmoid. The softplus enforces $\rho_{\theta}\geq 0$, while the gate factor $\sigma(g)\in(0,1)$ lets the network drive pixel regions to near zero without saturating, which is the operational counterpart of the (P3) self-reinforcing peak observation. The Larmor head sigmoid-maps a logit into the device-determined band $[\omega_{L,\min},\omega_{L,\max}]$ and is masked to the support of $\rho$ so that gradient flows into $\omega_{L}$ only where the data constrains it (P4).

\subsection{Multiscale Curriculum}
\label{sec:method-multiscale}
Optimization proceeds in two stages on the same MLP parameters, with the coordinate grid resolution increasing between stages. Stage~1 fits coarse spatial structure under the coarsened forward operator at the base learning rate; Stage~2 refines high-frequency detail at the higher resolution at a reduced learning rate. By the bandlimit argument of Lemma~\ref{lem:freqdecay}, low-$\norm{\mathbf{k}}$ modes carry the bulk of the forward signal and are well-conditioned, so fitting them first delivers a coarse minimum closer to the true support before the high-frequency curriculum activates. Per-stage resolutions, epoch counts, and learning rates are reported in Appendix~\ref{app:method-multiscale} (Table~\ref{tab:app-method-schedule}).

\subsection{Per-Measurement Objective}
\label{sec:method-loss}
NeTMY minimizes a NeTMY-specific extension of Eq.~\eqref{eq:invprob} by gradient descent on $\theta$ alone, with no oracle access to $\rho_{\star}$ or $\omega_{L,\star}$. The objective combines the canonical fidelity $\mathcal{D}$ with two method-specific physics losses motivated by the ill-posedness analysis,
\begin{equation}
\label{eq:method-loss}
\mathcal{L}(\theta)\;=\;\mathcal{D}\bigl(\mathcal{F}(\rho_{\theta},\omega_{L,\theta}),S_{\mathrm{obs}}\bigr)\;+\;\lambda_{1}\norm{\rho_{\theta}}_{1}\;+\;\lambda_{\mathrm{TV}}\,\mathrm{TV}(\rho_{\theta})\;+\;\lambda_{N}\,\mathcal{R}_{\mathrm{nm}}\;+\;\lambda_{\rho}\,\mathcal{R}_{\mathrm{ds}},
\end{equation}
where $\mathcal{R}_{\mathrm{nm}}$ is a mean-normalized noise-map loss (a scale-stable companion to $\mathcal{D}$ that anchors gradients on the support; full form in Appendix~\ref{app:method-losses}) and $\mathcal{R}_{\mathrm{ds}}$ is a direct-density loss matching $\rho_{\theta}^{2}$ to the mean-normalized observed noise map (a soft, oracle-free support proxy whose theoretical scope is discussed in Appendix~\ref{app:method-losses}). The five loss-term weights are fixed across all reported runs (Appendix~\ref{app:method-losses}, Table~\ref{tab:app-method-losses}); the relative weight of $\mathcal{D}$ vs.\ $\mathcal{R}_{\mathrm{nm}}$ is rebalanced between Stage~1 and Stage~2 to reflect the different roles of log-MSE support discovery (coarse stage) and mean-normalized refinement (fine stage), as detailed in Appendix~\ref{app:method-losses}. Identifiability of $\omega_{L}$ on the support of $\rho$ (P4) is enforced architecturally: the Larmor head is multiplied by the hard support mask $\mathbf{1}\{\rho_{\theta}(\mathbf{r})>\tau\,\max_{\mathbf{r}'}\rho_{\theta}(\mathbf{r}')\}$, with the mask treated as a stop-gradient constant during back-prop, so the nondifferentiable indicator is not differentiated and the Larmor logit receives gradients only on predicted support (Appendix~\ref{app:method-arch}, Eq.~\eqref{eq:app-larmor-mask}). The loss weights $\lambda_{1},\lambda_{\mathrm{TV}}\geq 0$ are the standard $\ell_{1}$ and TV weights of Eq.~\eqref{eq:invprob}, and $\lambda_{N},\lambda_{\rho}\geq 0$ are the NeTMY-specific weights for $\mathcal{R}_{\mathrm{nm}},\mathcal{R}_{\mathrm{ds}}$ respectively.

\textbf{Energy-anchored scale correction.} The pre-normalization scale of $\rho$ is unidentifiable from $\widehat{N}$ (Section~\ref{sec:inverse-problem}; Proposition~\ref{prop:scale}). After the per-measurement optimization terminates, NeTMY rescales the predicted density by the energy ratio $\widehat{\rho}\leftarrow\alpha\widehat{\rho}$, $\alpha=E_{\mathrm{obs}}/E_{\mathrm{pred}}$ for $\mathcal{F}_2$, where $E_{\mathrm{obs}}=\sum_{\omega,\mathbf{r}}S_{\mathrm{obs}}$ and $E_{\mathrm{pred}}=\sum_{\omega,\mathbf{r}}\mathcal{F}(\widehat{\rho},\omega_{L,\theta})$. We use the operator-matched convention in the main-body tables; This correction mainly affects density MSE; the primary localization-oriented metrics
are insensitive to the absolute density scale. The physically matched closed-form correction for the quadratic operator $\mathcal{F}_{1}$ is $\sqrt{E_{\mathrm{obs}}/E_{\mathrm{pred}}}$ and is derived alongside the linear-$\mathcal{F}_{2}$ form in Appendix~\ref{app:scale-correction} (Eq.~\eqref{eq:app-c1}). Operating this as a one-shot post-correction rather than a soft penalty inside Eq.~\eqref{eq:method-loss} avoids reweighting the data-fidelity gradient and gives identical post-processing across baselines (Appendix~\ref{app:method-scale}).

\subsection{Optimization Geometry: a Filtering View of the Update}
\label{sec:method-geometry}
Free-density solvers (Tikhonov, ADMM) minimize Eq.~\eqref{eq:invprob} by stepping $\rho$ directly, $\rho_{t+1}=\rho_{t}-\eta\,\nabla_{\rho}\mathcal{L}(\rho_{t})$, and therefore execute the raw density-space gradient that inherits the (P2) center bias and the (P3) max-normalization peak coupling. NeTMY instead steps the parameters $\theta$, with chain rule
\begin{equation}
\label{eq:method-geometry}
\Delta\rho\;\approx\;J_{\theta}\,\Delta\theta\;=\;-\eta\,J_{\theta}J_{\theta}^{\top}\,\nabla_{\rho}\mathcal{L}\;=\;-\eta\,G_{\theta}\,\nabla_{\rho}\mathcal{L},
\qquad J_{\theta}\;=\;\partial f_{\theta}/\partial\theta,
\end{equation}
so the realized image-space update is the raw density gradient \emph{filtered} by the positive semidefinite kernel $G_{\theta}=J_{\theta}J_{\theta}^{\top}\in\mathbb{R}^{|\Omega|\times|\Omega|}$ induced by the parameterization (Appendix~\ref{app:method-geometry}). For a coordinate MLP with annealed Fourier features, $G_{\theta}$ is a smoothing, structurally coupled operator across pixels: a sharp center spike in $\nabla_{\rho}\mathcal{L}$ is redistributed by $G_{\theta}$ rather than executed as a singular update, which is precisely the geometry under which the iter-$0$ centered gradient documented in (P2) does not need to be amplified by (P3). This view connects the architectural choices of Sections~\ref{sec:method-field}--\ref{sec:method-multiscale} to the empirical center-collapse / energy-barrier mechanism diagnosis in Section~\ref{sec:experiments}: NeTMY does not eliminate the pathological gradient but bends its realization in image space.

A complete training algorithm, the explicit functional forms of $\mathcal{R}_{\mathrm{nm}}$ and $\mathcal{R}_{\mathrm{ds}}$, the architecture and PE-annealing schedule, and the formal version of Eq.~\eqref{eq:method-geometry} are given in Appendix~\ref{app:method}.

\section{Experiments}
\label{sec:experiments}

We evaluate NeTMY against three families of inverse solvers under the formulation of Eq.~\eqref{eq:invprob} and the ill-posedness pathologies of Section~\ref{sec:illposed}, organized around four questions: (i) does the choice of inversion operator $\mathcal{F}\in\{\mathcal{F}_{1},\mathcal{F}_{2}\}$ change which methods recover sparse spin sources reliably (Section~\ref{sec:exp-benchmark}); (ii) what is the optimization-geometry mechanism behind the ranking (Section~\ref{sec:exp-mechanism}); (iii) which components of the NeTMY design carry the gain (Section~\ref{sec:exp-ablation}); and (iv) does the operator-fidelity gap survive on a real NV-relaxometry dataset where the ground-truth spin source is independently constrained (Section~\ref{sec:exp-realdata}). Implementation details, full per-method per-sample results, hyperparameter sweeps, the supervised-baseline distribution-shift study, and runtime statistics are in Appendix~\ref{app:experiments}.

\subsection{Setup}
\label{sec:exp-setup}

\textbf{Datasets.} Two regimes in the main text (full construction in Appendix~\ref{app:exp-datasets}). (i) A \emph{cross-fidelity benchmark} of $512$ synthetic samples in which spectra are generated by the source-side direct simulator $\mathcal{F}_{3}$ (Eq.~\eqref{eq:app-s3}) and methods invert under either $\mathcal{F}_{1}$ or $\mathcal{F}_{2}$, avoiding the inverse-crime risk of reusing the inversion operator as the data-generation operator~\citep{kaipio2007statistical}. The samples cover eight scene classes (few/medium/many sources $\times$ close/medium/far minimum separations relative to the standoff $z_{0}$). (ii) A \emph{real-data cross-check} on $\alpha$-RuCl$_{3}$ (8 NVs, \citet{kumar2024room}) with an independent SRIM depth prior $14\pm 5$\,nm. Two complementary matched-operator benchmarks (data generation and inversion by the same $\mathcal{F}_{1}$ or $\mathcal{F}_{2}$, $512$ samples each, with broader baseline coverage and density MSE only) are reported alongside in Table~\ref{tab:matched} and in Appendix~\ref{app:exp-matched}.

\textbf{Baselines.} Four reconstruction families that share Eq.~\eqref{eq:invprob} but differ in how $\rho$ is parameterized: Tikhonov (free-density gradient descent with $\ell_{2}$/TV)~\citep{hoerl1970ridge,rudin1992nonlinear}; ADMM (variable-splitting with augmented Lagrangian and box constraints)~\citep{neal2011distributed}; GaussianSplat (explicit Gaussian-splat primitives with prune/split/clone/merge)~\citep{kerbl20233d}; and NeTMY (this work). L-BFGS~\citep{liu1989limited} is included only in the mechanism analysis. Untrained-prior (DeepDecoder~\citep{heckel2018deep}) and supervised baselines (U-Net~\citep{ronneberger2015u}, GAN~\citep{goodfellow2014generative}, HybridNeTMY) are reported in Appendix~\ref{app:exp-supervised}.

\textbf{Metrics.} Three primary metrics that emphasize sparse-localization and distributional fidelity over pixelwise mean-square error: GMSD (gradient-magnitude similarity deviation)~\citep{xue2013gradient}; Hungarian F1 (one-to-one matched localization at fixed radius)~\citep{kuhn1955hungarian}; and Sliced Wasserstein distance (SWD) over $128$ random projections~\citep{bonneel2015sliced}. Pixelwise density MSE is a secondary metric included for comparability with prior NV-relaxometry work~\citep{midha2024optimized,kumar2024room}; on sparse scenes, it is dominated by background and undersells localization quality. Definitions and matching tolerances are in Appendix~\ref{app:exp-metrics}.

\subsection{Operator Fidelity Reshapes the Reconstruction Benchmark}
\label{sec:exp-benchmark}

Tables~\ref{tab:main} and~\ref{tab:matched} report the cross-fidelity and matched-operator benchmarks; we highlight 3 observations.

\begin{table}[t]
\centering
\scriptsize
\setlength{\tabcolsep}{2.2pt}
\renewcommand{\arraystretch}{0.94}
\caption{Cross-fidelity benchmark on $\mathcal{F}_{3}$-generated samples; methods invert under $\mathcal{F}_{1}$ or $\mathcal{F}_{2}$. Best per metric in bold. We report the mean and 95\% CI across 3 seeds.}
\label{tab:main}

\resizebox{\textwidth}{!}{%
\begin{tabular}{lcccc|cccc}
\toprule
& \multicolumn{4}{c}{Inversion under $\mathcal{F}_{1}$ (scalar)}
& \multicolumn{4}{c}{Inversion under $\mathcal{F}_{2}$ (tensor)} \\
\cmidrule(lr){2-5}\cmidrule(lr){6-9}
Method 
& GMSD$\downarrow$ 
& F1$\uparrow$ 
& SWD$\downarrow$ 
& MSE$\downarrow$ 
& GMSD$\downarrow$ 
& F1$\uparrow$ 
& SWD$\downarrow$ 
& MSE$\downarrow$ \\
\midrule

\multicolumn{9}{l}{\textit{Classical regularized solvers}} \\
Tikhonov 
& $0.178{\pm}0.041$ 
& $0.465{\pm}0.436$ 
& $0.119{\pm}0.043$ 
& $0.0119{\pm}0.0155$ 
& $0.202{\pm}0.043$ 
& $0.762{\pm}0.318$ 
& $0.100{\pm}0.041$ 
& $0.0082{\pm}0.0118$ \\

ADMM     
& $0.235{\pm}0.088$ 
& $0.267{\pm}0.389$ 
& $0.052{\pm}0.028$ 
& $0.0183{\pm}0.0177$ 
& $0.197{\pm}0.061$ 
& $0.084{\pm}0.208$ 
& $0.134{\pm}0.048$ 
& $3.8364{\pm}4.9726$ \\

\midrule
\multicolumn{9}{l}{\textit{Parameterized solvers}} \\

GaussianSplat   
& $\mathbf{0.144{\pm}0.082}$ 
& $\mathbf{0.816{\pm}0.213}$ 
& $\mathbf{0.044{\pm}0.037}$ 
& $0.0076{\pm}0.0122$ 
& $0.140{\pm}0.070$ 
& $0.740{\pm}0.241$ 
& $0.061{\pm}0.039$ 
& $0.0117{\pm}0.0171$ \\

\textbf{NeTMY (Ours)} 
& $0.189{\pm}0.116$ 
& $0.733{\pm}0.334$ 
& $0.051{\pm}0.048$ 
& $\mathbf{0.0066{\pm}0.0080}$ 
& $\mathbf{0.125{\pm}0.058}$ 
& $\mathbf{0.882{\pm}0.102}$ 
& $\mathbf{0.036{\pm}0.023}$ 
& $\mathbf{0.0078{\pm}0.0082}$ \\

\bottomrule
\end{tabular}%
}
\vspace{-12pt}
\end{table}

\begin{table}[t]
\small
\begin{minipage}[t]{0.59\textwidth}
\centering
\setlength{\tabcolsep}{4pt}
\caption{Matched-operator benchmarks. The matched operator regime is operator-dependent.} 
\label{tab:matched}
\begin{tabular}{lcc|cc}
\toprule
& \multicolumn{2}{c}{$\mathcal{F}_{1}/\mathcal{F}_{1}$} & \multicolumn{2}{c}{$\mathcal{F}_{2}/\mathcal{F}_{2}$} \\
\cmidrule(lr){2-3}\cmidrule(lr){4-5}
Method & MSE$\downarrow$ & NoiseMSE$\downarrow$ & MSE$\downarrow$ & NoiseMSE$\downarrow$ \\
\midrule
Tikhonov     & 0.015 & 2.25 & 0.012 & 0.81 \\
ADMM         & \textbf{0.006} & \textbf{0.32} & 0.256 & 22.9 \\
GaussianSplat       & 0.010 & 0.35 & 0.007 & 1.09 \\
HybridNeTMY  & 0.025 & 6.47 & 1.011 & 0.23 \\
DeepDecoder  & 1.61   & 31.38  & 0.256 & 24.3 \\
UntrainedGAN & 0.026 & 43.0 & 0.026   &  24.2 \\
\textbf{NeTMY (Ours)} & 0.009 & 1.15 & \textbf{0.004} & \textbf{0.21} \\
\bottomrule
\end{tabular}
\end{minipage}\hfill
\begin{minipage}[t]{0.39\textwidth}
\centering
\setlength{\tabcolsep}{4pt}
\caption{Cumulative ablation. Each row removes one component cumulatively.}
\label{tab:ablation}
\begin{tabular}{lcc}
\toprule
Configuration & MSE$\downarrow$ & SSIM$\uparrow$ \\
\midrule
\textbf{Full model}     & \textbf{.0039} & \textbf{.913} \\
$-$TV               & .0042 & .893 \\
$-$annealed PE      & .0086 & .897 \\
$-$PE               & .0091 & .895 \\
$-$multiscale       & .0103 & .890 \\
$-\ell_{1}$         & .0104 & .880 \\
$-$gate             & .0118 & .876 \\
$-\mathcal{R}_{\mathrm{ds}}$    & .0116 & .890 \\
\bottomrule
\end{tabular}
\end{minipage}
\vspace{-12pt}
\end{table}

\textbf{(i) The physics-corrected operator changes the performance of the methods.} Moving from $\mathcal{F}_{1}$ to $\mathcal{F}_{2}$ inversion lifts Hungarian F1 and lowers SWD for two of four; the largest gain is NeTMY (Hungarian-F1: $0.733\!\to\!0.882$; SWD: $0.051\!\to\!0.036$). ADMM's density-MSE \emph{significantly worsens} under $\mathcal{F}_{2}$, because the more faithful operator sharpens the centered collapse of Section~\ref{sec:exp-mechanism}.

\textbf{(ii) Under $\mathcal{F}_{2}$, NeTMY tops every primary metric.} NeTMY beats GaussianSplat by $19\%$ on F1 and $41\%$ on SWD (Table~\ref{tab:main}); the third-decimal density-MSE gap reflects MSE being background-dominated on sparse scenes, while the localization-aware metrics measure what the design targets.

\textbf{(iii) The matched-operator regime is operator-dependent.} On the matched-operator benchmark (Table~\ref{tab:matched}), ADMM is the lowest-MSE method under $\mathcal{F}_{1}/\mathcal{F}_{1}$ (no centering pathology) but degrades by two orders of magnitude under $\mathcal{F}_{2}/\mathcal{F}_{2}$, while NeTMY is the lowest-MSE method on all $\mathcal{F}_{2}/\mathcal{F}_{2}$ metrics; the full $\mathcal{F}_{2}/\mathcal{F}_{2}$ leaderboard with secondary metrics and runtime is in Appendix~\ref{app:exp-matched}.

\begin{figure}[t]
    \centering
    \includegraphics[width=\textwidth]{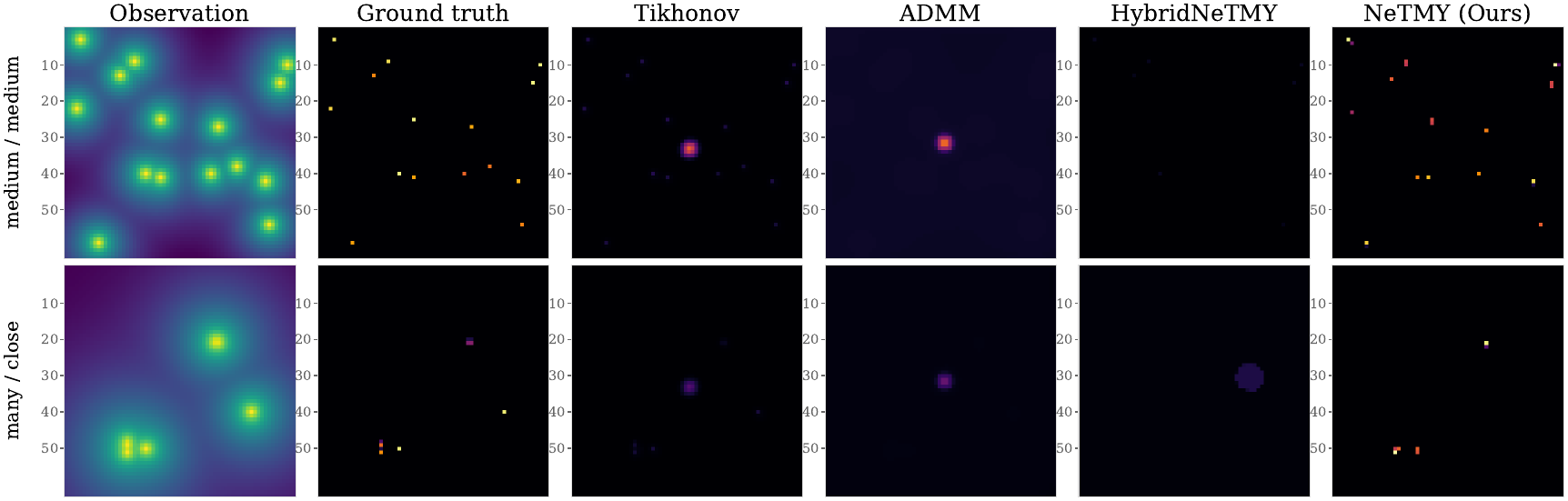}
    \caption{Qualitative reconstructions on two $\mathcal{F}_{3}$-generated samples inverted under $\mathcal{F}_{2}$. Free-density solvers center-collapse; HybridNeTMY and NeTMY preserve off-center structure.}
    \label{fig:qualitative-main}
    \vspace{-13pt}
\end{figure}

Figure~\ref{fig:qualitative-main} illustrates the pixel-level pattern: free-density solvers execute the centered iter-0 gradient of (P2)--(P3) and end at a centrally collapsed minimum, while NeTMY preserves off-center structure and reconstructs finer details. The matched-operator results in Table~\ref{tab:matched} reuse the inversion operator as the data-generation operator and are therefore subject to the inverse-crime caveat~\citep{kaipio2007statistical}. More qualitative examples are in Appendix~\ref{app:exp-examples}. We use Table~\ref{tab:main} as primary evidence and the real-data check (Section~\ref{sec:exp-realdata}) as a simulator-free test.

\subsection{Optimization Geometry: Why Free-Density Solvers Center-Collapse at an Early Stage}
\label{sec:exp-mechanism}

Lemma~\ref{lem:app-filter} predicts that the iter-$0$ centered gradient produced by (P2)--(P3) is executed verbatim by free-density solvers and filtered by the parameterization Jacobian for NeTMY. We measure three quantities on representative many-source samples (Figure~\ref{fig:mechanism}).

\begin{figure}[t]
    \centering
    \includegraphics[width=\textwidth]{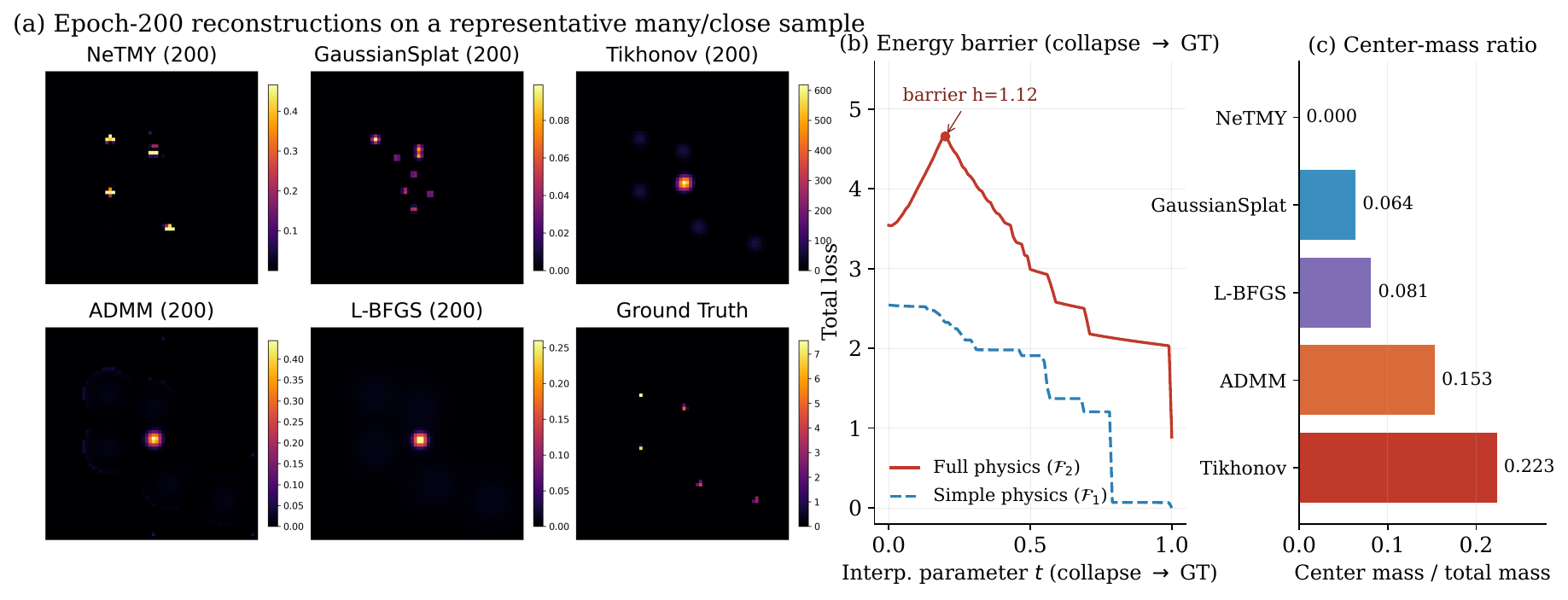}
    \caption{Optimization-geometry diagnostics. (a) Epoch-200 reconstructions. (b) Loss along $\rho(t)=(1-t)\rho_{\mathrm{collapse}}+t\,\rho_{\star}$. (c) Center-mass ratio at epoch~200.}
    \label{fig:mechanism}
    \vspace{-13pt}
\end{figure}

\textbf{Iter-0 gradient.} On a uniform initialization, the $\mathcal{F}_{2}$ data-fidelity gradient has a center-to-outer-ring magnitude ratio of $18.29\times$, with the peak at the grid center and not at any source: the (P2) signature predicted by Eq.~\eqref{eq:app-window}, executed verbatim by free-density solvers.

\textbf{Energy barrier.} The path from a centered-collapse iterate to the ground truth (Figure~\ref{fig:mechanism}b) crosses an $h\!\approx\!1.12$ barrier under $\mathcal{F}_{2}$ at $t\!=\!0.20$ but is monotonically decreasing under $\mathcal{F}_{1}$: a free-density solver in the centered minimum is hard to escape by gradient descent.

\textbf{Fixed budget.} At a $200$-iteration budget (Figure~\ref{fig:mechanism}c), center-mass ratios order as NeTMY ($0.00$) $<$ GaussianSplat ($0.064$) $<$ L-BFGS ($0.081$) $<$ ADMM ($0.153$) $<$ Tikhonov ($0.223$). For the parameterized solvers (NeTMY, GaussianSplat), the ranking tracks the effective smoothness of $G_{\theta}=J_{\theta}J_{\theta}^{\top}$. L-BFGS is a free-density solver and uses a quasi-Newton inverse-Hessian approximation, but still empirically suffers from the centered-collapse basin. The realized NeTMY iter-0 update has no singular center spike (App.~\ref{app:exp-mechanism}, Fig.~\ref{fig:app-netmy-delta}); a 32-run ADMM sweep confirms the operator effect (mean center-mass ratio $0.55\!\to\!0.78$ from $\mathcal{F}_{1}$ to $\mathcal{F}_{2}$; App.~\ref{app:exp-mechanism}).

\subsection{Component Ablation}
\label{sec:exp-ablation}

We ablate seven design choices on the full dataset (Table~\ref{tab:ablation}). Cumulatively removing \{annealing schedule, PE, multiscale, $\ell_{1}$, gate, $\mathcal{R}_{\mathrm{ds}}$\} degrades density MSE by $2$--$3\times$. The ordering tracks the ill-posedness mechanisms that motivated each component: annealed PE and PE address (P1), multiscale and gate address (P1) and (P3), and the direct-density loss provides the missing-amplitude proxy. TV has a small aggregate effect ($\times 1.1$) but qualitatively controls cross-shaped artifacts on dense scenes. Per-class and hyperparameter sweeps (LR, hidden width, PE octaves) are in Appendix~\ref{app:exp-ablation}.

\subsection{Real-Data Cross-Check on \texorpdfstring{$\alpha$-RuCl$_{3}$}{alpha-RuCl3}}
\label{sec:exp-realdata}

We test the operator-fidelity gap on the $\alpha$-RuCl$_{3}$ dataset of~\citet{kumar2024room} ($8$ NVs, SRIM depth prior $14\pm 5$\,nm), along two independent axes under each operator. We frame the result as a multi-axis consistency check, not a validation; the full four-step protocol (the two reported axes, an operator-neutral spectral calibration, and an additional power-law slope axis which is ambiguous on its own) is in Appendix~\ref{app:exp-realdata-full}.

\begin{wrapfigure}{r}{0.46\textwidth}
    \centering
    \vspace{-6pt}
    \includegraphics[width=0.46\textwidth]{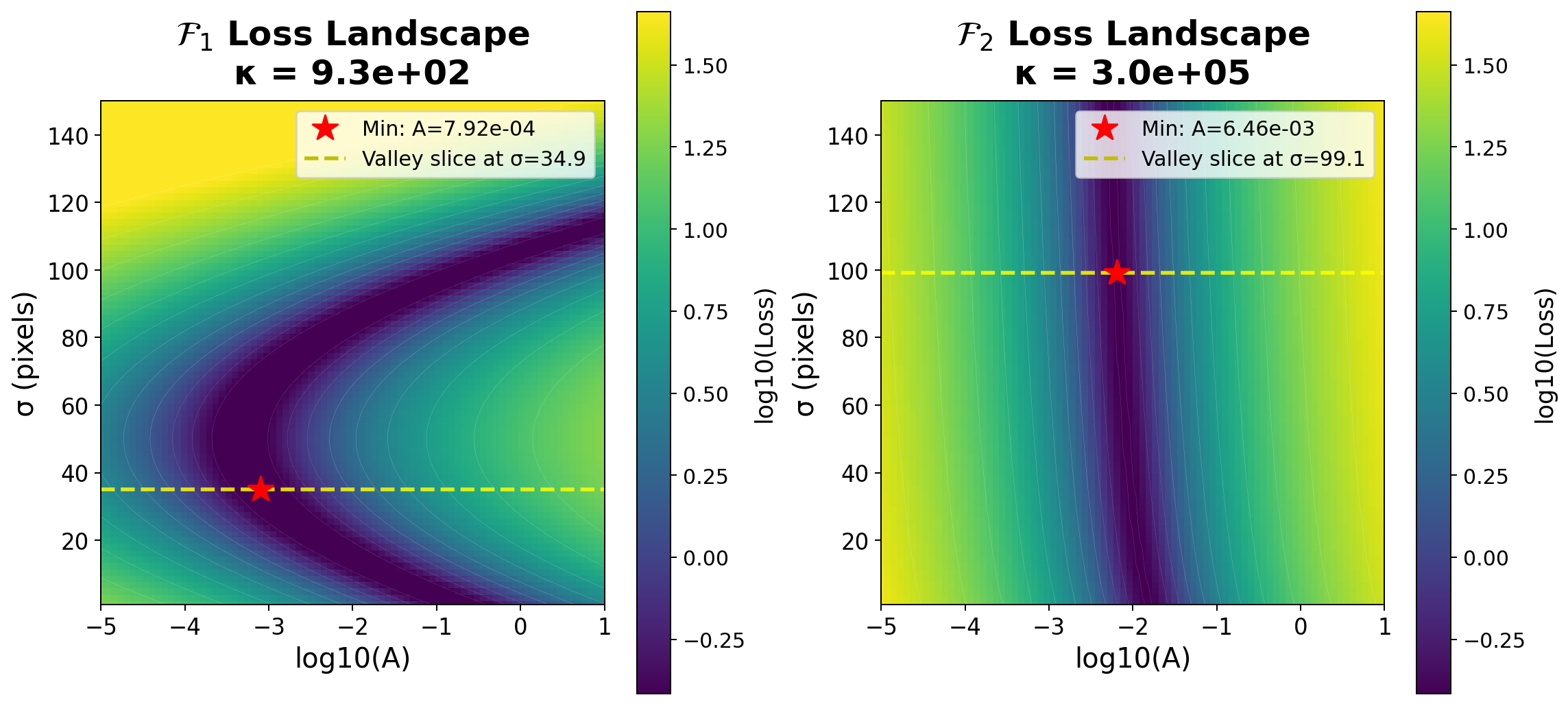}
    \caption{Loss landscape on $\alpha$-RuCl$_{3}$: $\mathcal{F}_{2}$ (left) is well-conditioned, $\mathcal{F}_{1}$ (right) is a degenerate valley.}
    \label{fig:realdata}
    \vspace{-15pt}
\end{wrapfigure}

\textbf{(i) Depth-amplitude.} Solving $\Gamma_{\mathrm{Ru}}^{\mathrm{pred}}(d)=\Gamma_{\mathrm{Ru}}^{\mathrm{meas}}$ for the implied NV depth $d^{\star}$ at fixed crystallographic density and moment puts $1/8$ NVs in $[9,20]$\,nm under $\mathcal{F}_{2}$ vs $0/8$ under $\mathcal{F}_{1}$, with point amplitude ratio $\Gamma^{\mathcal{F}_{1}}\!/\Gamma^{\mathcal{F}_{2}}\!\approx 210\times$ at $d=14.5$\,nm; the $\mathcal{F}_{1}$ inversion has no real solution for $7$ of the $8$ NVs. 

\textbf{(ii) Hessian conditioning.} On the Gaussian-density ansatz $\rho(\mathbf{r};A,\sigma_g)=A\exp(-\norm{\mathbf{r}}^{2}/2\sigma_g^{2})$, where $\sigma_g$ denotes the Gaussian width, over the $8$ NVs (Figure~\ref{fig:realdata}), the log-MSE condition numbers are $\kappa_{\mathcal{F}_{2}}=931$ and $\kappa_{\mathcal{F}_{1}}=301{,}139$ (ratio $\sim\!323\times$): the $\mathcal{F}_{2}$ landscape is a parabolic bowl while $\mathcal{F}_{1}$ is a degenerate valley along $A^{2}\sigma_g^{2}\!\approx\!\mathrm{const}$, a direct algebraic consequence of $\Gamma\propto A^{2}$ in $\mathcal{F}_{1}$.

Both reported axes (amplitude, conditioning) disfavor $\mathcal{F}_{1}$. The multi-axis evidence is consistent with \S~\ref{sec:exp-benchmark}--\ref{sec:exp-mechanism}. A complementary, non-NeTMY-specific consistency check that reproduces the dipolar $1/T_{1}\propto \sigma_b/D^{4}$ scaling under the forward operator is in App.~\ref{app:operator-further_variation}. A dense-to-sparse distribution-shift comparison and runtime are reported in App.~\ref{app:experiments}.

\section{Conclusion}
\label{sec:conclusion}

We framed NV-noise sensing inversion as a physics-faithful, ill-posed inverse problem in which forward-operator choice reshapes both the measurement distribution and the optimization geometry. Moving from a scalar/coherent approximation $\mathcal{F}_{1}$ to a tensor power-summed dipolar operator $\mathcal{F}_{2}$ exposes a center-collapse pathology in free-density solvers, traced to a centered iter-$0$ gradient and a max-normalization peak coupling. We proposed NeTMY, an amortization-free coordinate neural-field solver whose parameterization implements a positive semidefinite filter on the raw density-space gradient and, combined with annealed positional encoding, a multiscale curriculum, density gating, and energy-anchored scale correction, attains the best localization and distributional metrics on the cross-fidelity benchmark; an ablation isolates each component, and a multi-axis check on $\alpha$-RuCl$_{3}$~\citep{kumar2024room} disfavors $\mathcal{F}_{1}$ along amplitude and conditioning axes. More broadly, the operator-aware ranking, the gradient-filtering view, and the cross-fidelity protocol position NV relaxometry as a testbed for physics-faithful neural inverse problems.

\textbf{Limitations and Broader Impact.} NeTMY is per-measurement and trades wall-clock cost (roughly $100\times$ slower than classical baselines, App.~\ref{app:exp-runtime}) for label-free generalization. However, since the main goal of this method is to achieve a more precise reconstruction under the challenge of the lack of labeled data, this slowdown is not expected to dominate its use in research-grade reconstruction. Thus, it is acceptable for research-grade reconstruction; high-throughput deployment would benefit from amortization or warm-starting. The real-data check is a multi-axis consistency test on $8$ NVs under a Gaussian ansatz rather than a full validation, and we keep this scope explicit. The analysis is scoped to the fluctuation-dominated dipolar regime; coherent-coupling regimes and other quantum-sensing modalities are left to future work. NV noise sensing has numerous applications in materials science and biology, where better sparse reconstructions accelerate the study of spin-fluctuation phenomena and nanoscale magnetic textures, and we do not anticipate direct dual-use harms. The principal negative pathway is misplaced confidence in reconstructions outside the operator's modeled regime; we recommend reporting operator-fidelity caveats and consistency checks alongside downstream claims.



{\small
\bibliographystyle{plainnat}
\bibliography{main}
}


\appendix

\section{Forward Operator Derivation}
\label{app:operator-derivation}

This appendix expands the measurement model of Section~\ref{sec:meas-model} from first principles, makes the difference between the scalar $\mathcal{F}_{1}$ and tensor $\mathcal{F}_{2}$ operators explicit at the dipole level, and identifies the regimes in which each form is physically appropriate.

\subsection{Sample model and dipolar Green tensor}
\label{app:operator-setup}
We model the sample as a (possibly continuous) distribution of independent magnetic fluctuators with nonnegative density $\rho:\Omega\to\mathbb{R}_{\geq 0}$ over a 2D plane $\Omega\subset\mathbb{R}^{2}$. A widefield NV array sits at standoff height $z_{0}$, so the relative position from a sample point $\mathbf{r}_{\mathrm{src}}\in\Omega\times\{0\}$ to an NV pixel $\mathbf{r}\in\Omega\times\{z_{0}\}$ is $\mathbf{R}(\mathbf{r},\mathbf{r}_{\mathrm{src}})=(\mathbf{r}-\mathbf{r}_{\mathrm{src}},z_{0})$. The dipolar Green tensor
\begin{equation}
\label{eq:app-green}
G_{ia}(\mathbf{R}) \;=\; \frac{\mu_{0}}{4\pi}\,\frac{3R_{i}R_{a}-\norm{\mathbf{R}}^{2}\delta_{ia}}{\norm{\mathbf{R}}^{5}},\qquad i,a\in\{x,y,z\}
\end{equation}
maps a unit dipole moment along axis $a$ at $\mathbf{r}_{\mathrm{src}}$ to a magnetic-field component along axis $i$ at $\mathbf{r}$. For an NV with quantization axis $\mathbf{n}$, the projected response is $G_{\mathrm{nv},a}=\sum_{i}n_{i}G_{ia}$; for a $z$-aligned NV (the convention used throughout this paper) we have $G_{\mathrm{nv},a}=G_{za}=G_{az}$ by symmetry of $G$ in its two indices. Thermal magnetic fluctuations of a dipole at $\mathbf{r}_{\mathrm{src}}$ along axis $a$ produce a spectral density $\rho(\mathbf{r}_{\mathrm{src}})\,L(\omega;\omega_{L}(\mathbf{r}_{\mathrm{src}}))$ at the source, with $L$ the Lorentzian of Section~\ref{sec:meas-model}~\citep{tetienne2013spin,doherty2013nitrogen,casola2018probing}.

We retain three structural assumptions that are standard in the NV-relaxometry literature: (i) sample dipoles are independent thermal fluctuators (no inter-source coherence), (ii) the NV array reads out the noise spectrum at the readout pixel $\mathbf{r}$ rather than at the source pixel $\mathbf{r}_{\mathrm{src}}$ (a controlled approximation when $\omega_{L}$ varies slowly over the kernel support), and (iii) the source-spin fluctuation components $a\in\{x,y,z\}$ are independent thermal channels with equal variance, i.e., isotropic in the lab frame, so each channel contributes $\abs{G_{az}}^{2}$ to the NV-axis noise field at pixel $\mathbf{r}$. The physical regime in which these assumptions hold is discussed in Appendix~\ref{app:operator-regime}.

\subsection{Pointwise derivation of \texorpdfstring{$\mathcal{F}_{2}$}{F2} (incoherent thermal sum)}
\label{app:operator-incoherent}
For independent thermal fluctuators, discretize the source plane into cells of area $\Delta$ centered at $\{\mathbf{r}_{i}\}$ and write $\rho_{i}:=\rho(\mathbf{r}_{i})\,\Delta$ for the dimensionless source weight in cell $i$. The noise spectrum measured at NV position $\mathbf{r}$ is the sum of squared field amplitudes from all sources, summed over independent fluctuation channels~\citep{tetienne2013spin,van2015nanometre}:
\begin{align}
\mathcal{F}_{2}(\rho,\omega_{L})(\omega,\mathbf{r})
&\;=\;\sum_{a\in\{x,y,z\}}\sum_{i}\,\abs{G_{az}(\mathbf{R}_{i})}^{2}\,\rho_{i}\,L\!\bigl(\omega;\omega_{L}(\mathbf{r}_{i})\bigr) \nonumber\\
&\;\approx\;\biggl[\sum_{a\in\{x,y,z\}}\sum_{i}\abs{G_{az}(\mathbf{R}_{i})}^{2}\,\rho_{i}\biggr]\,L\!\bigl(\omega;\omega_{L}(\mathbf{r})\bigr),
\label{eq:app-s2}
\end{align}
where $\mathbf{R}_{i}=\mathbf{R}(\mathbf{r},\mathbf{r}_{i})$. The first line is the source-side exact form ($\mathcal{F}_{3}$ in our internal notation, used as the simulator $\mathcal{F}_{3}$ for ground-truth data generation in Section~\ref{sec:experiments}); the second line factors the Lorentzian out of the spatial sum, which is exact when $\omega_{L}$ is locally constant over the kernel support and equivalent (up to grid-area factors) to $[\sum_{a}(\abs{G_{az}}^{2}\!\ast\rho)(\mathbf{r})]\,L(\omega;\omega_{L}(\mathbf{r}))$ used by the FFT-based reconstruction stack. The contraction over $a$ is incoherent: distinct channels and distinct sources contribute independently, with no kernel-product cross-terms. Density enters Eq.~\eqref{eq:app-s2} \emph{linearly} in $\rho_{i}$, which is the property exploited by the scale-correction proof of Appendix~\ref{app:scale-correction}.
However, during the variation of the operator in the real-world dataset, the operator-implied internal-rate slopes are $b_{\mathrm{pred}}^{\mathcal{F}_{2}}=3.77$ and $b_{\mathrm{pred}}^{\mathcal{F}_{1}}=1.89$ via Kumar et al.~\citep{kumar2024room} $\alpha=1.06\pm 0.22$; the measured slope $b_{\mathrm{exp}}=1.43\pm 0.21$ is closer to $\mathcal{F}_{1}$. But slope alone does not penalize the amplitude error and is therefore ambiguous in isolation.

\subsection{Pointwise derivation of \texorpdfstring{$\mathcal{F}_{1}$}{F1} (scalar coherent square)}
\label{app:operator-coherent}
The simplified operator $\mathcal{F}_{1}$ retains only a single NV-axis projection and squares the convolved field as a final, post-superposition operation. Using the same discrete cell weights $\rho_i=\rho(\mathbf{r}_i)\Delta$, expanding the modulus square gives
\begin{align}
\mathcal{F}_{1}(\rho,\omega_{L})(\omega,\mathbf{r})
&\;=\;\biggl|\sum_{i}\,G_{\mathrm{nv}}(\mathbf{R}(\mathbf{r},\mathbf{r}_{i}))\,\rho_{i}\biggr|^{2}\,L(\omega;\omega_{L}(\mathbf{r}))\nonumber\\
&\;=\;\biggl[\underbrace{\sum_{i}\rho_{i}^{2}\,G_{\mathrm{nv}}(\mathbf{R}_{i})^{2}}_{\text{diagonal}}\;+\;\underbrace{\sum_{i\neq j}\rho_{i}\rho_{j}\,G_{\mathrm{nv}}(\mathbf{R}_{i})\,G_{\mathrm{nv}}(\mathbf{R}_{j})}_{\text{cross term}\,\mathcal{C}(\mathbf{r})}\biggr]\,L(\omega;\omega_{L}(\mathbf{r})),
\label{eq:app-s1}
\end{align}
where $\mathbf{R}_{i}=\mathbf{R}(\mathbf{r},\mathbf{r}_{i})$. The diagonal term reintroduces the squared cell weight $\rho_i^2$ instead of the linear cell weight $\rho_i$, so $\mathcal{F}_{1}(c\rho,\omega_{L})=c^{2}\,\mathcal{F}_{1}(\rho,\omega_{L})$; the cross-term $\mathcal{C}(\mathbf{r})$ couples pairs of distinct sources through the product of their NV-axis kernels, which has no counterpart in Eq.~\eqref{eq:app-s2}. When $\rho$ is sparse and well-separated relative to the support of $G_{\mathrm{nv}}$, $\mathcal{C}$ is small pointwise and $\mathcal{F}_{1}$ behaves as a (nonlinear-in-$\rho$) close cousin of $\mathcal{F}_{2}$; when sources are close, $\mathcal{C}$ dominates and the two operators differ in both magnitude and spatial structure.

\subsection{Direct simulator \texorpdfstring{$\mathcal{F}_{3}$}{F3} and the operator-fidelity check}
\label{app:operator-direct}
The two operators $\mathcal{F}_{1}$ and $\mathcal{F}_{2}$ are FFT-factorized: each evaluates a 2D convolution between a per-channel kernel and the source density, then multiplies by a Lorentzian factor at the readout pixel. The factorization is exact for $\mathcal{F}_{2}$ when the Larmor field $\omega_{L}$ is locally constant over the kernel support (Eq.~\eqref{eq:app-s2}), and is only an approximation when $\omega_{L}$ varies across the support. The direct simulator $\mathcal{F}_{3}$ removes this approximation by evaluating the source-side superposition pixel-by-pixel without FFT factorization,
\begin{align}
\label{eq:app-s3}
\mathcal{F}_{3}(\rho,\omega_{L})(\omega,\mathbf{r})
&\;=\;
\sum_{\mathbf{r}_{\mathrm{src}}\in\Omega}\rho(\mathbf{r}_{\mathrm{src}})\,P\!\bigl(\mathbf{R}(\mathbf{r},\mathbf{r}_{\mathrm{src}})\bigr)\,L\!\bigl(\omega;\omega_{L}(\mathbf{r}_{\mathrm{src}})\bigr),\\
P(\mathbf{R})
&\;=\sum_{a\in\{x,y,z\}}\abs{G_{az}(\mathbf{R})}^{2},\nonumber
\end{align}
where the Lorentzian is evaluated at each \emph{source} pixel $\mathbf{r}_{\mathrm{src}}$ rather than factored out at the readout pixel, restoring the per-source spectral response of the underlying physics. $\mathcal{F}_{3}$ is the source-side form (the first line of Eq.~\eqref{eq:app-s2}) without the locally-constant-$\omega_{L}$ factorization, and is implemented in float64 with a direct source-loop evaluator.

\textbf{Where \texorpdfstring{$\mathcal{F}_{3}$}{F3} is used.} The primary main-text benchmark in this paper is the cross-fidelity benchmark on a $512$-sample dataset (Table~\ref{tab:main}, Section~\ref{sec:exp-benchmark}, Appendix~\ref{app:exp-datasets}): spectra are produced by the source-side direct simulator $\mathcal{F}_{3}$ and inverted by $\mathcal{F}_{1}$ or $\mathcal{F}_{2}$, so the inversion operator is never reused as the data-generation operator and the inverse-crime risk~\citep{kaipio2007statistical} is avoided by construction. The matched-operator benchmarks ($\mathcal{F}_{1}/\mathcal{F}_{1}$ on $512$ samples and $\mathcal{F}_{2}/\mathcal{F}_{2}$ on $512$ samples; Table~\ref{tab:matched}, Appendix~\ref{app:exp-matched}) are reported alongside as a complementary view that includes amortized and untrained-prior baselines for which the $\mathcal{F}_{3}$ form is not implemented; they are intentionally inverse-crime by construction and we do not use them as the central evidence for the operator-fidelity claim.

\subsection{Operator difference}
\label{app:operator-diff}
Subtracting the discrete $\mathcal{F}_{2}$ approximation in Eq.~\eqref{eq:app-s2} from Eq.~\eqref{eq:app-s1} at fixed $\omega$ and $\mathbf{r}$,
\begin{equation}
\label{eq:app-opdiff}
\bigl(\mathcal{F}_{1}-\mathcal{F}_{2}\bigr)(\rho,\omega_{L})(\omega,\mathbf{r})
\;=\;\Bigl[\mathcal{C}(\mathbf{r})\;+\;\sum_{i}\bigl(\rho_{i}^{2}-\rho_{i}\bigr)G_{\mathrm{nv}}(\mathbf{R}_{i})^{2}\;-\;\sum_{a\neq z}\sum_{i}\rho_{i}\,\abs{G_{az}(\mathbf{R}_{i})}^{2}\Bigr]\,L(\omega;\omega_{L}(\mathbf{r})),
\end{equation}
i.e., the gap is the cross-term $\mathcal{C}$, plus the signed diagonal discrepancy $\rho_i^{2}-\rho_i$, minus the transverse-channel power $a\in\{x,y\}$ that is present in $\mathcal{F}_{2}$ and omitted by $\mathcal{F}_{1}$. These contributions are spatially structured across $\mathbf{r}$ and generally nonzero in multi-source scenes. Eq.~\eqref{eq:app-opdiff} is the algebraic origin of the inverse-landscape and ranking changes documented in Section~\ref{sec:experiments}.

\subsection{Physical regime of applicability}
\label{app:operator-regime}
Both forms are legitimate approximations in different physical regimes. $\mathcal{F}_{1}$ is appropriate for coherent-emitter superposition (e.g., a small number of phase-locked classical dipoles driven by the same coherent source), where the field amplitudes literally add before being squared by the detector. $\mathcal{F}_{2}$ is appropriate for incoherent thermal ensembles, where distinct fluctuators have independent random phases and only second moments of the field add~\citep{tetienne2013spin,van2015nanometre,casola2018probing}. NV relaxometry of nuclear, electronic, or magnonic spin baths is in the second regime~\citep{mzyk2022relaxometry,kumar2024room,midha2024optimized}, so $\mathcal{F}_{2}$ is the physically appropriate choice. We retain $\mathcal{F}_{1}$ as a benchmark only because it is computationally cheaper, has historically appeared in some open-source simulators~\citep{midha2024optimized}, and produces a meaningfully different inverse landscape that exposes the operator-fidelity sensitivity exploited in Section~\ref{sec:experiments}.

\subsection{Lorentzian and Larmor mapping}
\label{app:operator-lorentzian}
The spectral response in Eq.~\eqref{eq:app-s2} is the Lorentzian
\begin{equation}
\label{eq:app-lorentzian}
L(\omega;\omega_{L})\;=\;\frac{\gamma^{2}}{(\omega-\omega_{L})^{2}+\gamma^{2}},
\end{equation}
with linewidth $\gamma$ fixed by the experimental setup (we use $\gamma=0.5$\,GHz throughout, matching the simulator). The Larmor map $\omega_{L}:\Omega\to\mathbb{R}_{+}$ encodes the per-pixel quasi-static detuning of the local resonance and is treated as a second unknown field in the inverse problem. In the FFT-based factorization used by all reconstruction methods, $L(\omega;\omega_{L}(\mathbf{r}))$ is evaluated at the readout pixel $\mathbf{r}$ rather than at the source pixel $\mathbf{r}_{\mathrm{src}}$, which is exact when $\omega_{L}$ is approximately constant over the kernel support and a controlled approximation otherwise. The direct simulator $\mathcal{F}_{3}$ used to generate ground-truth data evaluates $L$ at $\mathbf{r}_{\mathrm{src}}$ and is therefore closer to the underlying physics; in the main-text cross-fidelity benchmark we use $\mathcal{F}_{3}$ for data generation and either $\mathcal{F}_{1}$ or $\mathcal{F}_{2}$ for reconstruction (Appendix~\ref{app:operator-direct}).

\subsection{Robustness to spatially varying Larmor frequencies}
\label{app:operator-lorentzian_robustness}
We further validated the robustness of the FFT-factorized solver $F_2$ against spatial variation of the Larmor frequency within one dipolar kernel footprint. In the real-data setting, the Larmor frequency is primarily set by the externally applied magnetic field and is therefore nearly uniform across the reconstruction field of view, with only weak local perturbations. The relevant control parameter is the dimensionless ratio
$\chi \sim \Delta \omega_L^{(\mathrm{kernel})}/\gamma$,
where $\Delta \omega_L^{(\mathrm{kernel})}$ denotes the in-kernel variation of $\omega_L$ and $\gamma$ is the effective linewidth. In a controlled stress test, the relative forward error of $F_2$ with respect to the direct source-side operator $F_3$ remained small when $\chi \ll 1$ (about $2.6\times 10^{-2}$ at $\chi \approx 0.3$), but increased substantially as $\chi$ approached unity (about $1.6\times 10^{-1}$ at $\chi \approx 1.08$). These results confirm that $F_2$ is reliable in the broad-linewidth, slowly varying regime relevant to our experiments.

\subsection{Further physical variation of the forward operator}
\label{app:operator-further_variation}
Figure~\ref{app:PV-1} shows the spectrum noise generated by different components of the green kernel. The shape and scale of the spectrum noise stay consistent with the physical formula
\begin{figure}[H]
\centering
\includegraphics[width=0.8\linewidth]{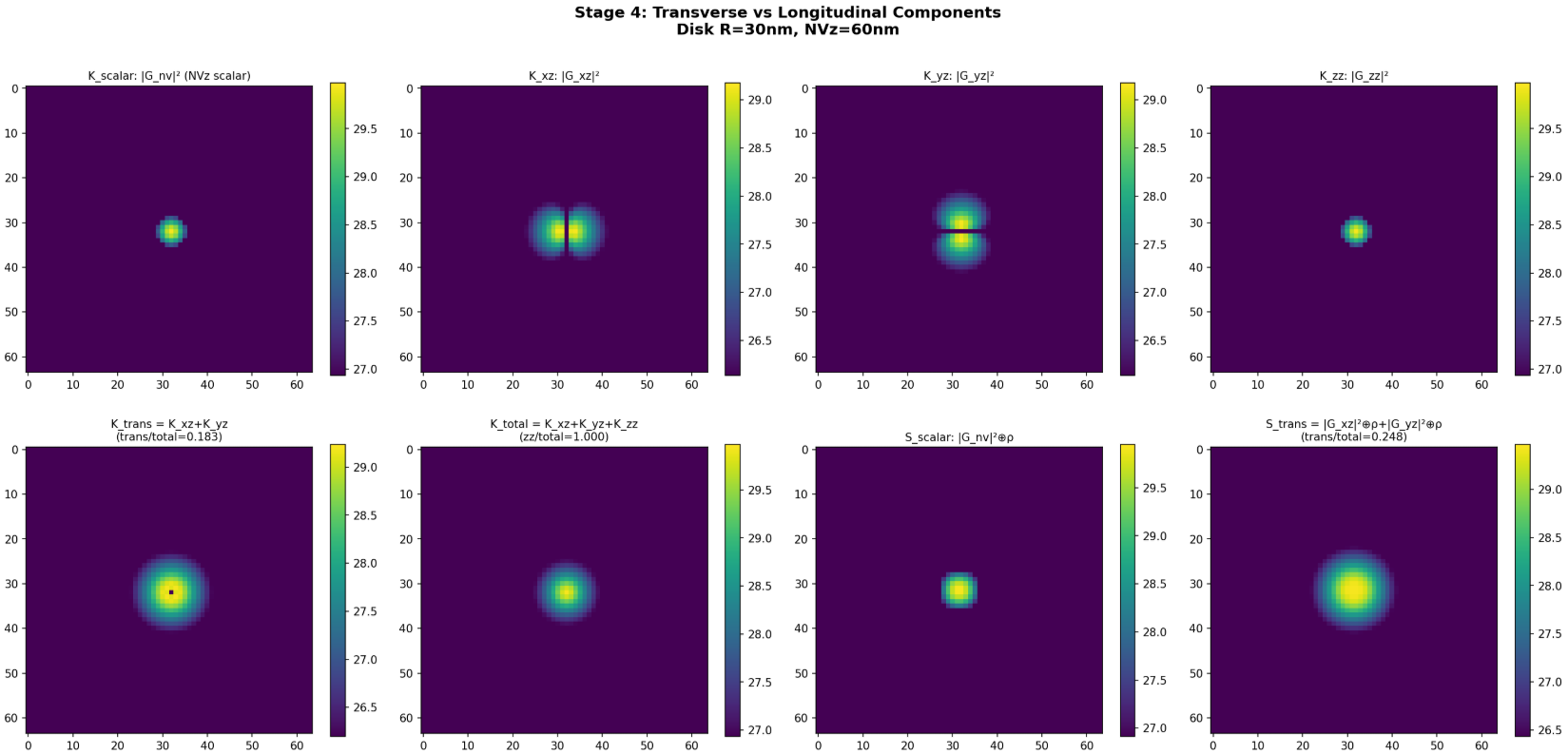}
\caption{Verification of the Green kernel in physical solver}
\label{app:PV-1}
\end{figure}
According to~\citep{tetienne2013spin}, the experiment detects the longitudinal relaxation time $T_1$ of the dipole spin bath (with bath surface density $\sigma_b$) on the nanodiamond with the NV center.

The cited work gives the formula
\begin{equation}
    \frac{1}{T_1}
    =
    \frac{1}{T_1^{\text{bulk}}}
    +
    \left(
    \frac{48 \mu_0^2 \gamma_e^4 \hbar^2 C_S}{\pi D^4}
    \right)
    \left(
    \frac{\sigma_b R(\sigma_b)}{\omega_0^2 + R(\sigma_b)^2}
    \right)
\end{equation}
and 
\begin{equation}
    B_\perp^2
    =
    \left(
    \frac{4 \mu_0 \gamma_e \hbar}{\pi}
    \right)^2
    \pi C_S \frac{\sigma_b}{D^4}
\end{equation}
where $C_S = \frac{1}{2S+1}\sum_{m=-S}^{S} m^2
    = \frac{S(S+1)}{3}$ and $R(\sigma_b) = \frac{1}{\tau_c}$. The following relationship has been clearly stated in the research paper
\begin{equation}
    B_\perp^2 \propto \frac{\sigma_b}{D^4}
    \label{eq:Spin_PV1}
\end{equation}
\begin{equation}
    \frac{1}{T_1} \propto \frac{\sigma_b}{D^4}
    \label{eq:Spin_PV2}
\end{equation}
To verify the physical correctness of the forward operator in the research, the experiment is reproduced. As shown in Figure~\ref{app:PV-2}, the noise power is inversely proportional to the fourth power of the diameter of the diamond $D$.
\begin{figure}[H]
    \centering
    \begin{subfigure}{0.28\textwidth}
        \centering
        \includegraphics[width=\linewidth]{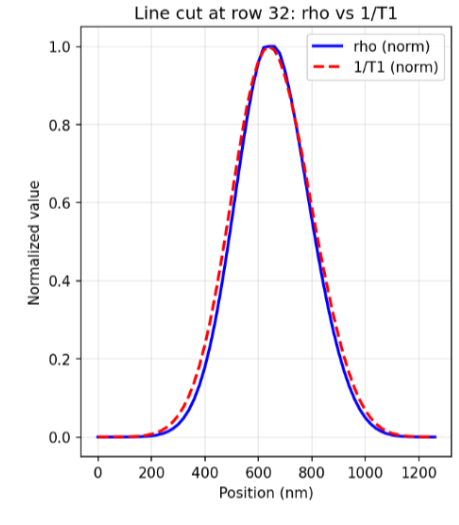}
        \caption{Normalized line cut at row 32 comparing $\rho$ and 
        $1/T_1$ spatial profiles, demonstrating the qualitative 
        agreement between the two distributions.}
        \label{fig:left}
    \end{subfigure}
    \hfill
    \begin{subfigure}{0.7\textwidth}
        \centering
        \includegraphics[width=\linewidth]{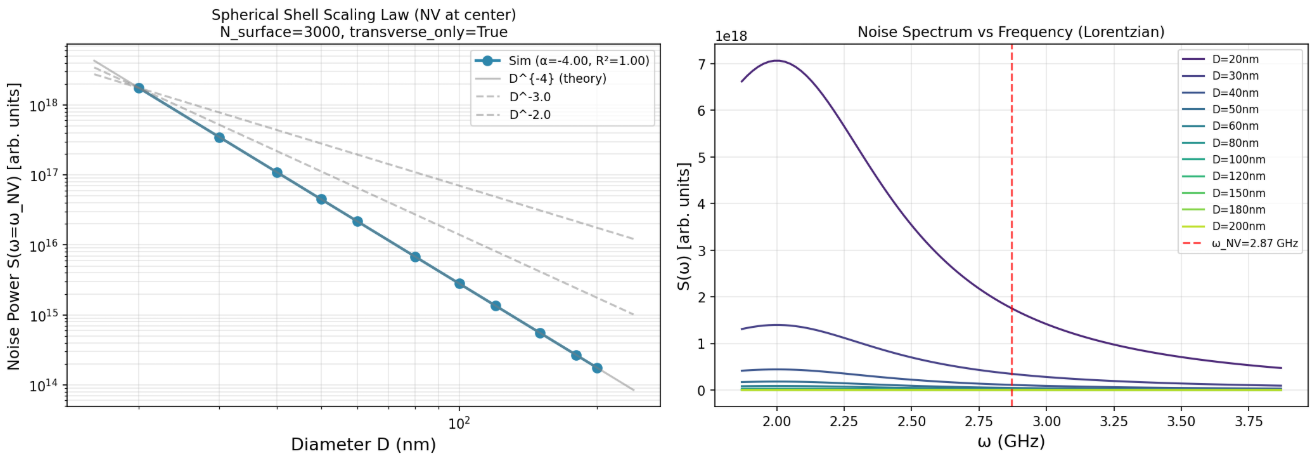}
        \caption{Forward operator reproduction of the Tetienne 
    \textit{et al.}\ dipole bath model~\cite{tetienne2013spin}. 
    \textbf{Left:} Simulated noise power (dots) follows the theoretical 
    $D^{-4}$ scaling, and the Lorentzian spectrum 
    $S(\omega)$ shows stronger noise for smaller $D$ near 
    $\omega_\text{NV}=2.87\,\text{GHz}$.
    \textbf{Right:} The reconstructed $1/T_1(x,y)$ map matches the input 
    $\rho(x,y)$ distribution (Pearson $r=99.55\%$), confirming 
    $1/T_1(x) \approx (|G|^2 * \rho)(x)$ }
        \label{fig:right}
    \end{subfigure}
    \caption{The reproduction of the relationship between magnetic noise and $\frac{1}{T_1}$ using the forward operator and the relationship between magnetic noise and diameter using the forward operator}
    \label{app:PV-2}
\end{figure}
From Eq.~\eqref{eq:Spin_PV1} and Eq.~\eqref{eq:Spin_PV2}, we can find out that the magnetic noise has the same order of magnitude, which means

\begin{equation}
    \frac{1}{T_1(x)} \approx S = \left( |G|^2 * \rho \right)(x)
\end{equation}
Thus for a stated dipole bath, the distribution of $\rho$ and $\frac{1}{T_1(x)}$ should be similar, which is also followed by the forward operator.

\section{Data Fidelity, Scale Ambiguity, and Energy-Ratio Correction}
\label{app:scale-correction}

This appendix gives the full form of the data-fidelity term $\mathcal{D}$ used in Eq.~\eqref{eq:invprob}, records the formal proof that the pre-normalization scale of $\rho$ is unidentifiable from $\mathcal{D}$ alone, and shows how an energy-ratio anchor restores identifiability. The proof is operator-agnostic and applies to any reconstruction method that minimizes Eq.~\eqref{eq:invprob}; the specific scale-correction \emph{post-processing step} adopted by NeTMY is described in Section~\ref{sec:method} and Appendix~\ref{app:method-scale}.

\subsection{Pixelwise log-MSE form of \texorpdfstring{$\mathcal{D}$}{D}}
\label{app:fidelity}
With $N(\mathbf{r};S)=\sum_{\omega\in W}S(\omega,\mathbf{r})$, $\widehat{N}(\mathbf{r};S)=N(\mathbf{r};S)/\max_{\mathbf{r}'}N(\mathbf{r}';S)$, and $\ell(\mathbf{r};S)=\log_{10}(\widehat{N}(\mathbf{r};S)+10^{-10})$, the data-fidelity discrepancy in Eq.~\eqref{eq:invprob} is
\begin{equation}
\label{eq:fidelity}
\mathcal{D}\bigl(\mathcal{F}(\rho,\omega_{L}),S_{\mathrm{obs}}\bigr)
\;=\;
\frac{1}{|\Omega|}\sum_{\mathbf{r}\in\Omega}\Bigl(\ell\!\bigl(\mathbf{r};\mathcal{F}(\rho,\omega_{L})\bigr)\;-\;\ell\bigl(\mathbf{r};S_{\mathrm{obs}}\bigr)\Bigr)^{2},
\end{equation}
where $|\Omega|$ denotes the number of grid pixels in $\Omega$. The two preprocessing steps (frequency summation followed by spatial max-normalization) are standard in NV relaxometry to suppress per-sample amplitude variation and emphasize spatial-distribution mismatch~\citep{kumar2024room,midha2024optimized,casola2018probing}; the small additive constant $10^{-10}$ avoids numerical singularity on background pixels.

\subsection{Setup: max-normalization induces a scale-flat ray}
\label{app:scale-setup}
Let $S_{\mathrm{obs}}$ be the noise spectrum produced by an unknown ground-truth pair $(\rho_{\star},\omega_{L,\star})$ under the linear operator $\mathcal{F}_{2}$, and let $\widehat{N}(\,\cdot\,;S)$ denote its frequency-summed, max-normalized noise map (Section~\ref{sec:inverse-problem}). Linearity gives $\mathcal{F}_{2}(c\rho,\omega_{L})=c\,\mathcal{F}_{2}(\rho,\omega_{L})$ for any $c>0$, so
\begin{equation}
\label{eq:app-scaleflat}
\widehat{N}(\,\cdot\,;\mathcal{F}_{2}(c\rho,\omega_{L}))\;=\;\widehat{N}(\,\cdot\,;\mathcal{F}_{2}(\rho,\omega_{L}))\qquad\forall\,c>0.
\end{equation}
The fidelity term $\mathcal{D}$ in Eq.~\eqref{eq:fidelity} is therefore exactly invariant along the ray $\{c\rho:c>0\}$, and the data alone cannot identify the absolute density scale. This is a finite-dimensional instance of the bilinear-identifiability symptom analyzed in compressed-sensing and blind-deconvolution literature~\citep{ahmed2013blind,ling2015self,lee2016blind}.

\subsection{Energy-ratio correction is exact under \texorpdfstring{$\mathcal{F}_{2}$}{F2} in the noiseless limit}
\label{app:scale-proof}
\begin{proposition}[Scale correction under $\mathcal{F}_{2}$]
\label{prop:scale}
Let $\mathcal{F}$ be linear in $\rho$ and assume the noiseless observation $S_{\mathrm{obs}}=\mathcal{F}(\rho_{\star},\omega_{L,\star})$. Let $\widehat{\rho}$ be a candidate reconstruction with the same shape as $\rho_{\star}$ up to scale, $\widehat{\rho}=c^{-1}\rho_{\star}$ for some unknown $c>0$. Define
\[
E_{\mathrm{obs}}\;=\;\sum_{\omega,\mathbf{r}}S_{\mathrm{obs}}(\omega,\mathbf{r}),\qquad
E_{\mathrm{pred}}\;=\;\sum_{\omega,\mathbf{r}}\mathcal{F}(\widehat{\rho},\omega_{L,\star})(\omega,\mathbf{r}),
\]
and $\alpha=E_{\mathrm{obs}}/E_{\mathrm{pred}}$. Then $\alpha\widehat{\rho}=\rho_{\star}$ exactly, and $\alpha$ depends only on the observation and the reconstruction.
\end{proposition}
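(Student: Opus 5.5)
The argument rests on linearity of $\mathcal{F}$ in $\rho$ at fixed $\omega_{L}$, together with linearity of the total-energy functional $S\mapsto\sum_{\omega,\mathbf{r}}S(\omega,\mathbf{r})$. The plan is to compute $E_{\mathrm{pred}}$ in terms of $E_{\mathrm{obs}}$ and the unknown scale $c$, then read off $\alpha$.

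First, substitute $\widehat{\rho}=c^{-1}\rho_{\star}$ into the forward operator. Since $\mathcal{F}(c'\rho,\omega_{L})=c'\,\mathcal{F}(\rho,\omega_{L})$ for every $c'>0$ (for $\mathcal{F}_{2}$ this is the homogeneity recorded in Eq.~\eqref{eq:app-scaleflat}), we obtain
\[
\mathcal{F}(\widehat{\rho},\omega_{L,\star})=c^{-1}\,\mathcal{F}(\rho_{\star},\omega_{L,\star})=c^{-1}S_{\mathrm{obs}}
\]
pointwise in $(\omega,\mathbf{r})$. Here the noiseless assumption is used, and so is the fact that the Larmor field is held at $\omega_{L,\star}$. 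Summing over the finite grid $W\times\Omega$ then gives $E_{\mathrm{pred}}=c^{-1}E_{\mathrm{obs}}$, hence $\alpha=c$ and $\alpha\widehat{\rho}=\rho_{\star}$.

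Second, I must check that $\alpha$ is well defined, i.e.\ $E_{\mathrm{pred}}\neq0$. This is the only real subtlety. I would argue that it holds whenever $\rho_{\star}\not\equiv0$:
\begin{itemize}
\item $\rho_{\star}\ge0$;
\item the kernel $P=\sum_{a}|G_{az}|^{2}$ is strictly positive, since $G_{zz}=\frac{\mu_{0}}{4\pi}\,(3z_{0}^{2}-\|\mathbf{R}\|^{2})/\|\mathbf{R}\|^{5}$ and the $G_{xz}$, $G_{yz}$ channels cannot all vanish simultaneously when $z_{0}>0$;
\item $L>0$.
\end{itemize}
Hence $S_{\mathrm{obs}}$ has strictly positive total mass. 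The degenerate case $\rho_{\star}\equiv0$ I would exclude by hypothesis, or note separately. For a general linear $\mathcal{F}$ I would simply state nonvanishing $E_{\mathrm{obs}}$ as the implicit assumption.

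Finally, for the claim that ``$\alpha$ depends only on the observation and the reconstruction'': $E_{\mathrm{obs}}$ is computed from $S_{\mathrm{obs}}$, and $E_{\mathrm{pred}}$ from $\widehat{\rho}$ by one forward evaluation, so no knowledge of $c$ or $\rho_{\star}$ enters. One caveat is that $E_{\mathrm{pred}}$ is formally evaluated at $\omega_{L,\star}$; I would remark that under the FFT-factorized $\mathcal{F}_{2}$ the Lorentzian factor multiplies the spatial convolution. The dependence on $\omega_{L}$ therefore does not affect the argument, as long as the same $\omega_{L}$ is used in both energies; in practice the reconstructed $\omega_{L,\theta}$ plays this role. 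I expect no step to be a genuine obstacle beyond stating this positivity/nondegeneracy condition cleanly.
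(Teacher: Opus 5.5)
Your proposal is correct and matches the paper's proof. Linearity gives $\mathcal{F}(\widehat{\rho},\omega_{L,\star})=c^{-1}S_{\mathrm{obs}}$ pointwise, summing gives $E_{\mathrm{pred}}=c^{-1}E_{\mathrm{obs}}$, and so $\alpha=c$; your additional check that $E_{\mathrm{obs}}>0$ (via strict positivity of $\sum_{a}|G_{az}|^{2}$ and of $L$ for nonzero $\rho_{\star}\geq 0$) is a sound point that the paper leaves implicit.
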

\begin{proof}
Linearity and $\widehat{\rho}=c^{-1}\rho_{\star}$ give $\mathcal{F}(\widehat{\rho},\omega_{L,\star})=c^{-1}\mathcal{F}(\rho_{\star},\omega_{L,\star})$ pointwise. Hence $E_{\mathrm{pred}}=c^{-1}E_{\mathrm{obs}}$, so $\alpha=E_{\mathrm{obs}}/E_{\mathrm{pred}}=c$ and $\alpha\widehat{\rho}=\rho_{\star}$.
\end{proof}
With additive sensor noise $S_{\mathrm{obs}}=\mathcal{F}(\rho_{\star},\omega_{L,\star})+\varepsilon$, write $E_{\varepsilon}=\sum_{\omega,\mathbf{r}}\varepsilon(\omega,\mathbf{r})$ and $E_{\star}=\sum_{\omega,\mathbf{r}}\mathcal{F}(\rho_{\star},\omega_{L,\star})$. Then $\alpha=c\,(1+E_{\varepsilon}/E_{\star})$, so no deterministic exactness is claimed; if $\mathbb{E}\varepsilon=0$, the correction is unbiased in expectation and its relative fluctuation is $O(\sigma_{\varepsilon}\sqrt{|W|\,|\Omega|}/E_{\star})$. A scale-correction regularizer that penalizes deviation of $\alpha$ from $1$ (or, equivalently, anchors $\rho$ to the observed energy after every step) restores identifiability without any oracle quantity. Two further caveats are worth recording independently of method. (i) The proof requires linearity of $\mathcal{F}$ in $\rho$, which holds for $\mathcal{F}_{2}$ but not $\mathcal{F}_{1}$ (handled below). (ii) The shape assumption $\widehat{\rho}=c^{-1}\rho_{\star}$ is exact only when the reconstruction recovers the true support; in practice we observe graceful degradation and a residual scale bias proportional to the shape error.

\subsection{Square-root correction under \texorpdfstring{$\mathcal{F}_{1}$}{F1}}
\label{app:scale-s1}
For the quadratic operator $\mathcal{F}_{1}(c\rho,\omega_{L})=c^{2}\mathcal{F}_{1}(\rho,\omega_{L})$, the same noiseless calculation with $\widehat{\rho}=c^{-1}\rho_{\star}$ gives $E_{\mathrm{pred}}=c^{-2}E_{\mathrm{obs}}$, so the physically matched closed form is
\begin{equation}
\label{eq:app-c1}
c_{1}\;=\;\sqrt{\,E_{\mathrm{obs}}/E_{\mathrm{pred}}\,},\qquad c_{2}\;=\;E_{\mathrm{obs}}/E_{\mathrm{pred}},
\end{equation}
i.e., the $\mathcal{F}_{1}$-form correction is the square root of the $\mathcal{F}_{2}$-form correction in the noiseless limit. The two forms agree only at the trivial fixed point $E_{\mathrm{obs}}=E_{\mathrm{pred}}$. We use the operator-matched convention in the main-body tables:
$c_1$ for $\mathcal F_1$ and $c_2$ for $\mathcal F_2$; numerically, the asymmetry is small for the localization-oriented metrics (GMSD, Hungarian F1, Sliced-WD) reported in Section~\ref{sec:experiments} because none of them depend on the absolute density scale, but it does affect density MSE.

\section{Ill-posedness Diagnostics}
\label{app:illposed-diagnostics}

This appendix gives the formal counterparts of the four ill-posedness statements (P1)--(P4) in Section~\ref{sec:illposed}, and quantifies the resulting forward-Jacobian and gradient structure that motivates the Method of Section~\ref{sec:method}.

\subsection{(P1) Exponential frequency suppression}
\label{app:illposed-frequency}
\begin{lemma}[Frequency decay of the tensor channel power]
\label{lem:freqdecay}
For $z_{0}>0$ and $a\in\{x,y,z\}$, the 2D Fourier transform of the source-plane Green-tensor channel evaluated at standoff $z_{0}$ satisfies $|\widehat{G_{az}}(\mathbf{k};z_{0})|=O(k\,e^{-k z_{0}})$ for $k=\norm{\mathbf{k}}$. Consequently, for fixed $z_{0}$ there exist constants $C_{a,z_{0}}$ and $m<\infty$ such that
\begin{equation}
\label{eq:app-decay}
\bigl|\widehat{\,\abs{G_{az}}^{2}\,}(\mathbf{k};z_{0})\bigr|
\;\leq\; C_{a,z_{0}}\,(1+k)^{m}\,e^{-k z_{0}},
\end{equation}
so the linear forward map $\rho\mapsto\sum_{a}|G_{az}|^{2}\!\ast\rho$ suppresses spatial mode $\mathbf{k}$ with exponential envelope $e^{-k z_{0}}$ up to algebraic factors in $k$.
\end{lemma}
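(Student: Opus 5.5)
The plan is to work entirely in the 2D Fourier domain. I would first express each channel as a second derivative of the Coulomb potential. A direct computation shows that $\partial_{i}\partial_{a}\norm{\mathbf{R}}^{-1}=(3R_{i}R_{a}-\norm{\mathbf{R}}^{2}\delta_{ia})/\norm{\mathbf{R}}^{5}$. Hence $G_{az}(\mathbf{R})=\frac{\mu_{0}}{4\pi}\,\partial_{a}\partial_{z}\,\norm{\mathbf{R}}^{-1}$, with $\mathbf{R}=(\mathbf{x},z)$ evaluated at $z=z_{0}>0$.

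\textbf{Step 1: transforms of the channels.} The classical identity $\int_{\mathbb{R}^{2}}(\norm{\mathbf{x}}^{2}+z^{2})^{-1/2}e^{-i\mathbf{k}\cdot\mathbf{x}}\,d\mathbf{x}=2\pi e^{-kz}/k$ holds for $z>0$. Differentiating in $z$ multiplies the transform by $-k$. An in-plane derivative $\partial_{a}$ with $a\in\{x,y\}$ multiplies it by $ik_{a}$, and $\partial_{z}^{2}$ multiplies it by $k^{2}$. This gives $\widehat{G_{xz}}=c\,ik_{x}e^{-kz_{0}}$, $\widehat{G_{yz}}=c\,ik_{y}e^{-kz_{0}}$, and $\widehat{G_{zz}}=c'\,k\,e^{-kz_{0}}$, each up to a sign and the constant $\mu_{0}/2$. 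This yields the first claim, $|\widehat{G_{az}}(\mathbf{k};z_{0})|=O(k\,e^{-kz_{0}})$. The $G_{zz}$ and $\partial_{z}$ differentiations under the integral sign are justified for $z_{0}>0$ by dominated convergence. Each $G_{az}$ decays like $R^{-3}$, so it lies in $L^{2}$; it is also in $L^{1}$ up to the mild $R^{-3}$ tail, and the identities can be read in $L^{2}$.

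\textbf{Step 2: the power kernel as an autocorrelation.} Since $G_{az}$ is real, $\widehat{|G_{az}|^{2}}=(2\pi)^{-2}\,\widehat{G_{az}}\ast\widetilde{\widehat{G_{az}}}$, where the tilde denotes the reflected conjugate. This is legitimate because $G_{az}\in L^{2}$ and $|G_{az}|^{2}\in L^{1}$, the latter decaying like $R^{-6}$. Step 1 then gives
\[
\bigl|\widehat{|G_{az}|^{2}}(\mathbf{k})\bigr|\le C\int_{\mathbb{R}^{2}}|\mathbf{q}|\,|\mathbf{k}-\mathbf{q}|\,e^{-z_{0}(|\mathbf{q}|+|\mathbf{k}-\mathbf{q}|)}\,d\mathbf{q}.
\]

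\textbf{Step 3: extracting $e^{-kz_{0}}$.} This is the main step and the one requiring care. Write $s=|\mathbf{q}|+|\mathbf{k}-\mathbf{q}|=k+u$ with $u\ge 0$, by the triangle inequality. Then $e^{-z_{0}s}=e^{-kz_{0}}e^{-z_{0}u}$, and it remains to show that the remaining integral is polynomial in $k$. The level sets of $s$ are ellipses with foci $0$ and $\mathbf{k}$. I would switch to elliptic coordinates, or simply use the layer-cake bound with the ellipse area $A(s)=\tfrac{\pi}{4}s\sqrt{s^{2}-k^{2}}$. On $\{s\le k+u\}$ the weight satisfies $|\mathbf{q}||\mathbf{k}-\mathbf{q}|\le s^{2}/4$. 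Combining these,
\[
\int |\mathbf{q}||\mathbf{k}-\mathbf{q}|\,e^{-z_{0}u}\,d\mathbf{q}\le\int_{0}^{\infty}\tfrac{(k+u)^{2}}{4}\,z_{0}\,e^{-z_{0}u}\,A(k+u)\,du.
\]
Because $A(k+u)\le \tfrac{\pi}{4}(k+u)\sqrt{u(2k+u)}$, the integrand is a polynomial in $(k,u)$ up to a square root, times $e^{-z_{0}u}$. The $u$-integral is therefore bounded by $C_{z_{0}}(1+k)^{m}$ with, for example, $m=4$. This yields the bound~\eqref{eq:app-decay}.

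Summing over $a$ gives the statement for the linear map $\rho\mapsto\sum_{a}|G_{az}|^{2}\ast\rho$. By the convolution theorem, its Fourier multiplier is $\sum_{a}\widehat{|G_{az}|^{2}}(\mathbf{k})$, which obeys the same envelope. The obstacle I anticipate is bookkeeping rather than a conceptual difficulty: making the layer-cake/elliptic-coordinate estimate rigorous and uniform in $k$, including small $k$ where the ellipses degenerate to discs. If that becomes messy, my fallback is a Paley--Wiener argument. The function $|G_{az}|^{2}$ extends analytically to the tube $|\mathrm{Im}\,\mathbf{x}|<z_{0}$, with singularities only where $\mathbf{x}\cdot\mathbf{x}+z_{0}^{2}=0$, and decays there. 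Shifting the contour to $\mathrm{Im}\,\mathbf{x}=(z_{0}-\delta)\mathbf{k}/k$ then gives an envelope $e^{-k(z_{0}-\delta)}$ with a $\delta$-dependent constant. Optimizing $\delta\sim 1/k$ recovers $e^{-kz_{0}}$ up to the algebraic factor.
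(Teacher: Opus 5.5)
Your proposal is correct and follows the paper's route: write the channels as Coulomb-potential second derivatives to get $|\widehat{G_{az}}|\le C\,k\,e^{-kz_{0}}$, express $\widehat{|G_{az}|^{2}}$ as an autoconvolution, and extract $e^{-kz_{0}}$ via $|\mathbf{q}|+|\mathbf{k}-\mathbf{q}|\ge k$. Your ellipse layer-cake estimate supplies the residual polynomial bound that the paper only asserts, and it is already uniform in $k$ (at $k=0$ the ellipses are discs), so the Paley--Wiener fallback is unnecessary; two cosmetic fixes are to start from the absolutely convergent Poisson-kernel transform $\int z(\norm{\mathbf{x}}^{2}+z^{2})^{-3/2}e^{-i\mathbf{k}\cdot\mathbf{x}}\,d\mathbf{x}=2\pi e^{-kz}$ rather than differentiating under the non-integrable $\norm{\mathbf{R}}^{-1}$, and to note that $G_{az}\in L^{1}$ outright, since $R^{-3}$ decay is integrable in 2D.
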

\begin{proof}
Let $\Phi(\mathbf{r},z)=(4\pi)^{-1}(\norm{\mathbf{r}}^{2}+z^{2})^{-1/2}$ be the half-space magnetostatic scalar potential. Its 2D Fourier transform in $\mathbf{r}$ is, up to convention constants, $\widehat{\Phi}(\mathbf{k},z)=k^{-1}e^{-k z}$. Each component $G_{ia}$ is a second derivative of $\Phi$ in real space; Fourier differentiation contributes factors $i k_{i}$ or $-k$, hence $|\widehat{G_{az}}(\mathbf{k};z_{0})|\leq C_{a}\,k\,e^{-k z_{0}}$ for a constant $C_{a}$. Since $|G_{az}|^{2}$ is the pointwise square of a real-valued function, its 2D Fourier transform is the autoconvolution $\widehat{|G_{az}|^{2}}=(2\pi)^{-2}\widehat{G_{az}}\!\ast\!\widehat{\overline{G_{az}}}$. Therefore
\[
\bigl|\widehat{|G_{az}|^{2}}(\mathbf{k})\bigr|
\;\leq\; C\!\int_{\mathbb{R}^{2}}\!\norm{\mathbf{q}}\,\norm{\mathbf{k}-\mathbf{q}}\,e^{-z_{0}(\norm{\mathbf{q}}+\norm{\mathbf{k}-\mathbf{q}})}\,d\mathbf{q}
\;\leq\; C_{z_{0}}\,(1+k)^{m}\,e^{-k z_{0}},
\]
where the second inequality uses the triangle bound $\norm{\mathbf{q}}+\norm{\mathbf{k}-\mathbf{q}}\geq k$ and bounds the residual integral by an algebraic factor in $k$ for fixed $z_{0}$.
\end{proof}
The implication for optimization is that the eigenvalues of the linearized forward map drop off geometrically across spatial scales, making the inverse map exponentially ill-conditioned at high $k$. We exploit this prediction in two concrete ways: (a) we use a coordinate neural field with annealed positional encoding so that low-$k$ modes are fit before high-$k$ modes (Section~\ref{sec:method}), and (b) we predict (and measure) a measurable degradation when the curriculum order is reversed.

\subsection{(P2) Finite-window center bias}
\label{app:illposed-window}
On a bounded sensing window $\Omega$, the convolutional footprint of a single source pixel at $\mathbf{r}_{0}$ under $\mathcal{F}_{2}$ is the translated channel-summed power kernel $\sum_{a}\abs{G_{az}(\,\cdot\,-\mathbf{r}_{0})}^{2}$, whose energy on $\Omega$ is maximized when $\mathbf{r}_{0}$ is at the window center and decreases monotonically as $\mathbf{r}_{0}$ approaches the boundary, because the translated footprint is increasingly truncated by $\partial\Omega$. The forward Jacobian column for source pixel $\mathbf{r}_{0}$ has squared norm
\begin{equation}
\label{eq:app-window}
\norm{\frac{\partial\mathcal{F}_{2}}{\partial\rho(\mathbf{r}_{0})}}^{2}
\;=\;\int_{W}\!\int_{\Omega}\biggl(\sum_{a\in\{x,y,z\}}\abs{G_{az}(\mathbf{r}-\mathbf{r}_{0})}^{2}\biggr)^{2}\,L(\omega;\omega_{L}(\mathbf{r}))^{2}\,d\mathbf{r}\,d\omega,
\end{equation}
which expands into auto-channel terms $\sum_{a}\abs{G_{az}}^{4}$ plus positive cross-channel terms $\sum_{a\neq a'}\abs{G_{az}}^{2}\abs{G_{a'z}}^{2}$. Since the integrand is a nonnegative localized footprint, truncation by $\partial\Omega$ preserves the center-vs-edge asymmetry. Even from a uniform initialization $\rho_{0}\equiv c$, the iter-$0$ gradient of the data-fidelity term therefore points more strongly toward central pixels than toward boundary pixels, before any structure has formed. We measure this directly in Section~\ref{sec:experiments}: the iter-$0$ physics-loss gradient under $\mathcal{F}_{2}$ has a center-to-outer ratio of $18.29\times$ on a representative sample, with the peak at the grid center, even though the ground-truth density is off-center. (P2) is therefore a bias of the operator and the window, not a bug of any one solver.

\subsection{(P3) Max-normalization peak coupling}
\label{app:illposed-maxnorm}
The data-fidelity term in Eq.~\eqref{eq:fidelity} compares max-normalized noise maps $\widehat{N}_{i}=N_{i}/M$, $M=\max_{j}N_{j}$. Differentiating $\widehat{N}$ with respect to the unnormalized $N$:
\begin{equation}
\label{eq:app-maxnormgrad}
\frac{\partial\widehat{N}_{i}}{\partial N_{j}}
\;=\;\frac{\delta_{ij}}{M}\;-\;\frac{N_{i}}{M^{2}}\,\frac{\partial M}{\partial N_{j}},\qquad
\frac{\partial M}{\partial N_{j}}\;=\;\delta_{jk}\quad\text{where}\quad k=\arg\max_{j}N_{j}.
\end{equation}
For a non-peak pixel ($j\neq k$) this reduces to $\delta_{ij}/M$ (purely local), but for the peak pixel ($j=k$) Eq.~\eqref{eq:app-maxnormgrad} contains an extra term $-N_{i}/M^{2}$ that couples the gradient at $N_{k}$ to \emph{every} pixel $i$ where $N_{i}$ is appreciable, with weight proportional to $N_{i}$. The chain-rule combination of Eq.~\eqref{eq:app-maxnormgrad} with the (P2) center bias amplifies the peak whenever the current peak is at the window center, which is exactly the iter-$0$ situation produced by (P2) on a uniform initialization. The qualitative consequence is that free-density methods (Tikhonov, ADMM) following $\nabla_{\rho}\mathcal{D}$ literally execute this center-amplifying step at iter $0$, and once a centered peak forms it self-reinforces through Eq.~\eqref{eq:app-maxnormgrad}. We document the resulting energy-barrier between the centered-collapse minimum and the ground-truth density in Section~\ref{sec:experiments}.

\subsection{(P4) Source merging and joint \texorpdfstring{$(\rho,\omega_{L})$}{(rho, omega\_L)} ambiguity}
\label{app:illposed-resolution}
The effective point-spread width $w_{\mathrm{psf}}$ of $\mathcal{F}_{2}$ is set by the standoff $z_{0}$ through the kernel decay of Lemma~\ref{lem:freqdecay}: each tensor channel $|G_{az}|^{2}$ has spatial extent $\sim z_{0}$ on the source side, so two sources separated by $\Delta r<w_{\mathrm{psf}}\sim z_{0}$ produce nearly identical forward responses and cannot be disambiguated from the data alone. A simple lower-bound estimate~\citep{grech2008review} useful for our setup is
\begin{equation}
\label{eq:app-psf}
w_{\mathrm{psf}}\;\approx\;2 z_{0}\sqrt{\,\log 2\,/\,(k z_{0}+3)\,}\quad\text{at characteristic mode}\;k,
\end{equation}
which we use as a working threshold to label "ill-conditioned" data regimes in Section~\ref{sec:experiments}. Beyond pairwise merging, in many-source scenes a continuous family of densities can fit the same forward measurement to within noise, producing axis-aligned cross-shaped artifacts when the optimizer settles on a low-cost, anisotropic explanation~\citep{zhang2018structured}.

A separate non-uniqueness arises from the joint role of $\rho$ and $\omega_{L}$ in Eq.~\eqref{eq:s1s2}: at any pixel $\mathbf{r}$ where the source contribution $\sum_{a}|G_{az}|^{2}\!\ast\rho(\mathbf{r})$ is small, the Lorentzian $L(\omega;\omega_{L}(\mathbf{r}))$ multiplies a near-zero coefficient and the fidelity term places no constraint on $\omega_{L}(\mathbf{r})$. The Larmor map is therefore identifiable only on the support of $\rho$. NeTMY enforces this scope at the architectural level (Appendix~\ref{app:method-arch}, Eq.~\eqref{eq:app-larmor-mask}): the predicted Larmor field is multiplicatively masked by an indicator on the predicted support of $\rho_{\theta}$, so gradients flow into $\omega_{L}(\mathbf{r})$ only where $\rho_{\theta}(\mathbf{r})$ is appreciable. Without such gating, any local minimum acquires arbitrary Larmor values on the background, contaminating the per-pixel loss landscape and slowing convergence on the support.

\subsection{Summary: which pathology drives which experiment}
\label{app:illposed-summary}
Table~\ref{tab:app-pathologies} maps each ill-posedness pathology to the empirical signature it predicts and to the experimental section that reports the measurement.
\begin{table}[ht]
\centering
\small
\setlength{\tabcolsep}{4pt}
\begin{tabular}{p{0.16\textwidth}p{0.30\textwidth}p{0.22\textwidth}p{0.22\textwidth}}
\toprule
Pathology & Predicted signature & Reported in & Mitigated by \\
\midrule
(P1) Frequency suppression & Coarse-to-fine $>$ fine-to-coarse curriculum & Sec.~\ref{sec:experiments}, ablation & Sec.~\ref{sec:method}, annealed PE \\
(P2) Window center bias & Iter-$0$ centered gradient & Sec.~\ref{sec:experiments}, mechanism & Sec.~\ref{sec:method}, parameterization \\
(P3) Max-norm peak coupling & Energy barrier at center collapse & Sec.~\ref{sec:experiments}, mechanism & Sec.~\ref{sec:method}, gating + PE \\
(P4) PSF merging, Larmor ambiguity & Cross artifacts, background Larmor noise & Sec.~\ref{sec:experiments}, failure & Sec.~\ref{sec:method}, masked Larmor head \\
\bottomrule
\end{tabular}
\caption{Mapping from ill-posedness pathology to empirical signature, the experimental section that reports it, and the design choice in Section~\ref{sec:method} that mitigates it.}
\label{tab:app-pathologies}
\end{table}

\section{Method Details}
\label{app:method}

This appendix gives the full functional and algorithmic details deferred from Section~\ref{sec:method}: the architecture of the coordinate MLP and density/Larmor heads, the frequency-annealing schedule, the per-stage training schedule, the explicit forms of all loss terms, the post-training scale correction, and the formal derivation of the Jacobian-induced filtering relation $\Delta\rho \approx -\eta\,G_{\theta}\nabla_{\rho}\mathcal{L}$.

\subsection{Architecture and Output Heads}
\label{app:method-arch}
The coordinate field $f_{\theta}$ in Eq.~\eqref{eq:method-field} is realized by the network $f_{\theta}(\mathbf{x})=\mathrm{Heads}\bigl(\mathrm{MLP}_{\theta}(\gamma_{\beta}(\mathbf{x}))\bigr)$, where $\mathbf{x}=(x,y)\in\Omega$, $\gamma_{\beta}$ is the annealed Fourier encoder (Appendix~\ref{app:method-anneal}), $\mathrm{MLP}_{\theta}$ is a 6-layer fully connected backbone, and $\mathrm{Heads}$ is the two-output transform of Eq.~\eqref{eq:method-density-head} and Eq.~\eqref{eq:app-larmor-mask}. Coordinates are normalized to the cube $[-1,1]^{2}$ before encoding so that the Fourier basis frequencies are device-agnostic. Table~\ref{tab:app-method-arch} summarizes the architectural choices used in every reported run.

\begin{table}[h]
\centering
\small
\setlength{\tabcolsep}{6pt}
\begin{tabular}{ll}
\toprule
Component & Setting \\
\midrule
Coordinate input & $(x,y)\in[-1,1]^{2}$, normalized by half grid extent \\
Positional encoding & annealed Fourier features, $K=12$ octaves, base 2 \\
Encoded input dim & $2 + 2\cdot 2\cdot K = 50$ \\
Backbone & 6 fully connected layers, hidden $320$, $\tanh$ activation \\
Skip connection & encoded input re-injected at layer 3 \\
Output channels & $3$: density logit $h_{\rho}$, density gate $g$, Larmor logit $h_{\omega}$ \\
Density head & $\rho_{\theta}=\mathrm{softplus}(h_{\rho})\,\sigma(g)$ \\
Larmor head & $\omega_{L,\theta}=\omega_{L,\min}+(\omega_{L,\max}-\omega_{L,\min})\,\sigma(h_{\omega})$, with mask in Eq.~\eqref{eq:app-larmor-mask} \\
Parameter count & $\approx\!4.5\times 10^{5}$ \\
Optimizer & AdamW, weight decay $10^{-4}$, gradient clip $\norm{\nabla\theta}\leq 1$ \\
Learning-rate schedule & cosine annealing per stage to $0.01\cdot\eta_{\mathrm{base}}$ \\
\bottomrule
\end{tabular}
\caption{NeTMY architectural and optimization settings. All reported main-body and ablation runs use these defaults except where the ablation explicitly toggles a component.}
\label{tab:app-method-arch}
\end{table}

The density-head transform of Eq.~\eqref{eq:method-density-head} is a gated softplus: the softplus enforces nonnegativity, and the sigmoid gate $\sigma(g)\in(0,1)$ acts as a pixelwise on/off switch that the optimizer can drive close to zero without saturating the softplus. Empirically, removing the gate (i.e., $\rho_{\theta}=\mathrm{softplus}(h_{\rho})$) makes background suppression substantially harder and weakens sparse-localization metrics (Section~\ref{sec:experiments}, ablation).

The Larmor head is sigmoid-mapped into the device-determined band and then multiplied by a hard predicted-support mask,
\begin{equation}
\label{eq:app-larmor-mask}
\omega^{\mathrm{mask}}_{L,\theta}(\mathbf{r})
\;=\;m_{\theta}(\mathbf{r})\,\omega^{\mathrm{band}}_{L,\theta}(\mathbf{r}),\qquad
m_{\theta}(\mathbf{r})\;=\;\mathbf{1}\bigl\{\rho_{\theta}(\mathbf{r}) > \tau\,\max_{\mathbf{r}'}\rho_{\theta}(\mathbf{r}')\bigr\},\qquad\tau=0.3,
\end{equation}
with $m_{\theta}$ treated as a stop-gradient constant during back-prop. The nondifferentiable indicator is therefore not differentiated, and the Larmor logit receives gradients only where $m_{\theta}=1$. Pixels with $m_{\theta}=0$ are assigned the fixed value $0$, which lies outside $[\omega_{L,\min},\omega_{L,\max}]$ but is not interpreted as a physical Larmor frequency: since $\rho_{\theta}\simeq 0$ there, support-weighted $\ell_{2}$ losses and metrics on $\omega_{L}$ ignore those pixels. This is the operational counterpart of the (P4) identifiability fact (Appendix~\ref{app:illposed-resolution}) that the data constrains $\omega_{L}$ only on the support of $\rho$: without the mask, the gradient through $\omega_{L}$ on the background is a near-zero coefficient that nonetheless steers the optimizer toward arbitrary Larmor values, contaminating the per-pixel loss landscape.

\subsection{Annealed Fourier Features}
\label{app:method-anneal}
Following the Nerfies frequency-annealing schedule~\citep{park2021nerfies,tancik2020fourier}, the encoded input is
\begin{align}
\label{eq:app-anneal}
\gamma_{\beta}(\mathbf{x})
&\;=\;\Bigl(\,\mathbf{x},\;\bigl\{\,w_{k}(\beta)\,\sin(2^{k}\pi\,\mathbf{x}),\;w_{k}(\beta)\,\cos(2^{k}\pi\,\mathbf{x})\,\bigr\}_{k=0}^{K-1}\Bigr),\\
w_{k}(\beta)
&\;=\;\frac{1-\cos\!\bigl(\pi\,\mathrm{clip}(\beta-k,0,1)\bigr)}{2},\nonumber
\end{align}
where $K=12$ and $\beta\in[0,K]$ is a per-stage progress scalar. We schedule $\beta$ linearly within each stage, $\beta(t)=K\cdot t/T$, with $t$ the within-stage epoch and $T$ the stage length. At $\beta=0$ only the linear coordinate input contributes; at $\beta=K$ all $K$ octaves are fully active and Eq.~\eqref{eq:app-anneal} reduces to the standard Fourier feature encoding of~\citep{tancik2020fourier}. The cosine envelope $w_{k}(\beta)$ activates each band smoothly, avoiding the gradient-shock that abrupt high-frequency activation would induce.

The annealing schedule is the operational counterpart of (P1) (Lemma~\ref{lem:freqdecay}): low spatial frequencies of $\rho$ carry $\Theta(1)$ forward signal, while high frequencies are suppressed with envelope $e^{-k z_{0}}$ up to algebraic factors. Fitting low frequencies first delivers a coarse minimum that is closer to the true support before the high-frequency capacity of the MLP is unlocked. We measure this as a coarse-to-fine vs.~fine-to-coarse curriculum ablation in Section~\ref{sec:experiments}.

\subsection{Multiscale Training Schedule}
\label{app:method-multiscale}
Optimization runs in two stages on a single MLP (parameters carried across stages); only the coordinate grid resolution and the resolution at which $\mathcal{F}_{2}$ is evaluated change between stages. Table~\ref{tab:app-method-schedule} summarizes the schedule used in every reported NeTMY run.

\begin{table}[h]
\centering
\small
\setlength{\tabcolsep}{6pt}
\begin{tabular}{lcccl}
\toprule
Stage & Resolution & Epochs (default) & Learning rate & Notes \\
\midrule
1 & $32\times 32$ & $3000$ & $\eta_{\mathrm{base}}=10^{-3}$ & coarse low-frequency support recovery \\
2 & $64\times 64$ & $7000$ & $0.5\cdot\eta_{\mathrm{base}}$ & fine high-frequency refinement \\
\bottomrule
\end{tabular}
\caption{NeTMY two-stage multiscale schedule. Total $10{,}000$ epochs per measurement. Within each stage, learning rate decays under cosine annealing to $0.01\cdot\eta_{\mathrm{base}}$. Frequency annealing $\beta$ is reset to $0$ at the start of each stage so high-frequency PE bands are re-introduced from coarse to fine within each resolution.}
\label{tab:app-method-schedule}
\end{table}

The same MLP weights are carried across stages with no re-initialization, so the fitted parameters at the end of Stage~1 act as a structured initialization for Stage~2. Coordinates are recreated at the new resolution, and the differentiable forward operator $\mathcal{F}_{2}$ is evaluated at that resolution using a precomputed FFT kernel cache. Per-instance wall-clock on a single A6000 GPU ranges from $273$~s (median) to $\sim\!780$~s (untrimmed mean) over the main $512$-sample benchmark; runtime is reported in Appendix~\ref{app:exp-runtime}.

\subsection{Loss Terms}
\label{app:method-losses}
The full per-measurement objective of Eq.~\eqref{eq:method-loss} is the sum of a canonical fidelity, two standard regularizers, and two NeTMY-specific physics losses. Below we give the explicit form and the role of each term; default loss weights are listed in Table~\ref{tab:app-method-losses}.

\begin{itemize}[leftmargin=*]
\item \textbf{Spectrum fidelity} $\mathcal{D}$: the pixelwise log-MSE on max-normalized noise maps as defined in Eq.~\eqref{eq:fidelity}. This is the sole data-fidelity term.
\item \textbf{Sparsity} $\norm{\rho_{\theta}}_{1}=\sum_{\mathbf{r}}\abs{\rho_{\theta}(\mathbf{r})}$: encourages point-like sources.
\item \textbf{Anisotropic total variation} $\mathrm{TV}(\rho_{\theta})$ as defined after Eq.~\eqref{eq:invprob}: smooths small-scale noise without penalizing sparse peaks.
\item \textbf{Mean-normalized noise-map loss} $\mathcal{R}_{\mathrm{nm}}=\mathrm{MSE}\bigl(N_{\mathrm{pred}}/\overline{N}_{\mathrm{pred}},\,N_{\mathrm{obs}}/\overline{N}_{\mathrm{obs}}\bigr)$, where $\overline{N}$ denotes the spatial mean of the frequency-summed noise map. This is a scale-stable companion to $\mathcal{D}$ that anchors gradients on the support without the max-normalization peak coupling of (P3). In the $64{\times}64$ stage we replace the log-MSE form of $\mathcal{D}$ with $\mathcal{R}_{\mathrm{nm}}$ as the primary fidelity term, since the log-domain comparison is most useful for early-stage support discovery.
\item \textbf{Direct-density loss} $\mathcal{R}_{\mathrm{ds}}=\mathrm{MSE}\bigl(\rho_{\theta}^{2}/\overline{\rho_{\theta}^{2}},\,N_{\mathrm{obs}}/\overline{N}_{\mathrm{obs}}\bigr)$: a soft, oracle-free support regularizer. The interpretation ``spectrum integral $\propto\rho^{2}$'' is exact only for the coherent point-source operator $\mathcal{F}_{1}$; under the thermal operator $\mathcal{F}_{2}$ used in this paper, the spectrum is linear in $\rho$, so $\mathcal{R}_{\mathrm{ds}}$ is a heuristic that nudges large predicted density values toward bright observed noise pixels rather than an exact functional identity. We retain it because the ablation shows that it accelerates support discovery without harming $\mathcal{F}_{2}$ inversion. No ground-truth density labels enter this term.
\end{itemize}

\begin{table}[h]
\centering
\small
\setlength{\tabcolsep}{6pt}
\begin{tabular}{lcl}
\toprule
Term & Weight & Role \\
\midrule
$\mathcal{D}$ & $2.0$ & spectrum fidelity, log-MSE \\
$\mathcal{R}_{\mathrm{nm}}$ & $0.5$ & noise-map fidelity, mean-norm \\
$\mathcal{R}_{\mathrm{ds}}$ & $0.1$ & direct-density proxy, mean-norm \\
$\norm{\rho_{\theta}}_{1}$ & $10^{-2}+10^{-3}$ & sparsity (combined: support + L1) \\
$\mathrm{TV}(\rho_{\theta})$ & $10^{-3}$ & smoothness on small scales \\
\bottomrule
\end{tabular}
\caption{Default loss weights for NeTMY. The two $\ell_{1}$ entries are listed separately because the codebase implements two L1 contributions with different weights (one as a "sparsity" prior across both stages, one as an explicit L1 regularizer that can be ablated independently).}
\label{tab:app-method-losses}
\end{table}

\subsection{Energy-Anchored Scale Correction}
\label{app:method-scale}

After the per-measurement optimization terminates, the predicted density is
rescaled by the energy-ratio anchor of Proposition~\ref{prop:scale}. For a
given inversion operator $\mathcal{F}\in\{\mathcal{F}_{1},\mathcal{F}_{2}\}$,
we forward-evaluate
\begin{equation}
    \widehat{S}=\mathcal{F}(\widehat{\rho},\omega_{L,\theta})
\end{equation}
on the final $64{\times}64$ grid and compute the pre-normalization energies
\begin{equation}
    E_{\mathrm{pred}}=\sum_{\omega,\mathbf{r}}\widehat{S}(\omega,\mathbf{r}),
\qquad
    E_{\mathrm{obs}}=\sum_{\omega,\mathbf{r}}S_{\mathrm{obs}}(\omega,\mathbf{r}).
\end{equation}
We then apply the operator-matched scale correction
\begin{equation}
\label{eq:app-method-scale}
\widehat{\rho}_{\mathrm{scaled}}
=
\alpha_{\mathcal{F}}\widehat{\rho},
\qquad
\alpha_{\mathcal{F}}
=
\begin{cases}
\sqrt{E_{\mathrm{obs}}/E_{\mathrm{pred}}}, & \mathcal{F}=\mathcal{F}_{1},\\[2mm]
E_{\mathrm{obs}}/E_{\mathrm{pred}}, & \mathcal{F}=\mathcal{F}_{2}.
\end{cases}
\end{equation}
The square-root correction for $\mathcal{F}_{1}$ follows from the quadratic
homogeneity $\mathcal{F}_{1}(c\rho,\omega_L)=c^{2}\mathcal{F}_{1}(\rho,\omega_L)$,
whereas the linear correction for $\mathcal{F}_{2}$ follows from
$\mathcal{F}_{2}(c\rho,\omega_L)=c\mathcal{F}_{2}(\rho,\omega_L)$; the derivation
is given in Appendix~\ref{app:scale-s1}.
This correction mainly affects density-scale-dependent quantities such as
density MSE. The primary localization-oriented metrics reported in
Section~\ref{sec:experiments}, including GMSD, Hungarian F1, and Sliced-WD, are
insensitive or only weakly sensitive to the absolute density scale. Density MSE
is therefore reported as a secondary metric.
We apply the correction as a one-shot post-processing step rather than as a soft
penalty inside Eq.~\eqref{eq:method-loss}. This avoids reweighting the
$\mathcal{D}$-gradient by a scalar that depends on the current iterate, which
would otherwise couple the support-discovery dynamics to the absolute amplitude
estimate. Within each fixed operator setting, all methods are evaluated with the
same operator-matched scale convention, so comparisons across parameterizations
are not affected by method-specific scale-correction choices.

\subsection{Filtering View of the Update}
\label{app:method-geometry}
We give the formal version of the filtering identity Eq.~\eqref{eq:method-geometry} that motivates the optimization-geometry interpretation in Section~\ref{sec:method-geometry}.

\begin{lemma}[Jacobian-induced filtering of the density update]
\label{lem:app-filter}
Let $f_{\theta}:\Omega\to\mathbb{R}_{\geq 0}$ be a coordinate field parameterized by $\theta\in\mathbb{R}^{P}$, write $\rho_{\theta}=f_{\theta}$ as a vector in $\mathbb{R}^{|\Omega|}$, and let $\mathcal{L}(\rho)$ be a differentiable scalar loss on density. A vanilla first-order step on $\theta$ at learning rate $\eta>0$, $\theta_{t+1}=\theta_{t}-\eta\,\nabla_{\theta}\mathcal{L}(\rho_{\theta_{t}})$, induces
\begin{equation}
\label{eq:app-filter}
\Delta\rho\;=\;\rho_{\theta_{t+1}}-\rho_{\theta_{t}}\;=\;-\eta\,J_{\theta}J_{\theta}^{\top}\,\nabla_{\rho}\mathcal{L}(\rho_{\theta_{t}})\;+\;O\!\bigl(\eta^{2}\,\norm{J_{\theta}}_{\mathrm{op}}^{2}\,\norm{\nabla_{\rho}\mathcal{L}}^{2}\,\norm{\partial_{\theta}^{2}f_{\theta}}_{\mathrm{op}}\bigr),
\end{equation}
where $J_{\theta}=\partial f_{\theta}/\partial\theta\in\mathbb{R}^{|\Omega|\times P}$ is the parameterization Jacobian and $\norm{\partial_{\theta}^{2}f_{\theta}}_{\mathrm{op}}$ is the bilinear operator norm of the second-derivative tensor of $f_{\theta}$. The first-order term is the raw density gradient $\nabla_{\rho}\mathcal{L}$ filtered by the positive semidefinite kernel $G_{\theta}=J_{\theta}J_{\theta}^{\top}\in\mathbb{R}^{|\Omega|\times|\Omega|}$.
\end{lemma}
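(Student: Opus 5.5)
The plan is a chain rule for $\nabla_{\theta}\mathcal{L}$ followed by a second-order Taylor expansion of $\theta\mapsto f_{\theta}$ along the step direction. Assume $f_{\theta}$ is twice continuously differentiable in $\theta$. This holds for the tanh backbone with softplus/sigmoid heads; the stop-gradient Larmor mask does not enter $\rho_{\theta}$.

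\textbf{Step 1: the parameter step.} The loss depends on $\theta$ only through $\rho_{\theta}$, so the chain rule gives $\nabla_{\theta}\mathcal{L}(\rho_{\theta})=J_{\theta}^{\top}\nabla_{\rho}\mathcal{L}(\rho_{\theta})$. The vanilla step is therefore
\[
\Delta\theta=\theta_{t+1}-\theta_{t}=-\eta\,J_{\theta}^{\top}\nabla_{\rho}\mathcal{L}.
\]
Its norm satisfies $\norm{\Delta\theta}\leq\eta\,\norm{J_{\theta}}_{\mathrm{op}}\norm{\nabla_{\rho}\mathcal{L}}$.

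\textbf{Step 2: Taylor expansion.} Expand each pixel coordinate $\rho_{\theta}(\mathbf{r})$ with integral remainder, and collect the pixels into the vector identity
\[
\rho_{\theta+\Delta\theta}-\rho_{\theta}=J_{\theta}\Delta\theta+\int_{0}^{1}(1-s)\,\partial_{\theta}^{2}f_{\theta+s\Delta\theta}[\Delta\theta,\Delta\theta]\,ds .
\]
Substituting Step 1 into the linear term gives exactly $-\eta J_{\theta}J_{\theta}^{\top}\nabla_{\rho}\mathcal{L}$. The remainder is bounded by $\tfrac12\sup_{s}\norm{\partial_{\theta}^{2}f}_{\mathrm{op}}\norm{\Delta\theta}^{2}$. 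Inserting the bound on $\norm{\Delta\theta}$ from Step 1 yields the stated $O(\eta^{2}\norm{J_{\theta}}_{\mathrm{op}}^{2}\norm{\nabla_{\rho}\mathcal{L}}^{2}\norm{\partial_{\theta}^{2}f_{\theta}}_{\mathrm{op}})$.

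\textbf{Step 3: positive semidefiniteness.} For any $v$, $v^{\top}J_{\theta}J_{\theta}^{\top}v=\norm{J_{\theta}^{\top}v}^{2}\geq0$, and the kernel is symmetric. This completes the statement.

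\textbf{Main obstacle.} The second-derivative norm in the statement is evaluated at $\theta_{t}$, whereas the remainder uses the Hessian along the segment $[\theta_{t},\theta_{t+1}]$. I would handle this in one of two ways:
\begin{itemize}
\item interpret the norm as the supremum over that segment, which is finite on compact sets by $C^{2}$ smoothness; or
\item use local Lipschitz continuity of $\partial_{\theta}^{2}f$ to replace the supremum by its value at $\theta_{t}$, at the cost of an $O(\eta^{3})$ correction that is absorbed into the $O(\cdot)$.
\end{itemize}

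For the experiments, two caveats should be stated: the identity is proved for vanilla gradient descent, not AdamW, and gradient clipping only rescales $\eta$.
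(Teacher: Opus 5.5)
Your proposal is correct and follows the paper's proof: the chain rule gives $\Delta\theta=-\eta J_{\theta}^{\top}\nabla_{\rho}\mathcal{L}$ with $\norm{\Delta\theta}\leq\eta\norm{J_{\theta}}_{\mathrm{op}}\norm{\nabla_{\rho}\mathcal{L}}$, and a second-order Taylor expansion of $f_{\theta}$ then has its quadratic term controlled by that step bound. Your integral-remainder version is a slight tightening: it needs only $C^{2}$ and makes explicit that the Hessian norm is really a supremum over the segment $[\theta_{t},\theta_{t+1}]$, whereas the paper writes $\tfrac12\partial_{\theta}^{2}f_{\theta}[\Delta\theta,\Delta\theta]+O(\norm{\Delta\theta}^{3})$, which tacitly assumes a locally Lipschitz Hessian (your second option).
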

\begin{proof}
Let $g=\nabla_{\rho}\mathcal{L}(\rho_{\theta_{t}})$. Chain rule gives $\nabla_{\theta}\mathcal{L}=J_{\theta}^{\top}g$, hence $\Delta\theta=-\eta\,J_{\theta}^{\top}g$ and $\norm{\Delta\theta}\leq\eta\,\norm{J_{\theta}}_{\mathrm{op}}\,\norm{g}$. Taylor expansion of $f_{\theta}$ in $\theta$ yields $\rho_{\theta_{t}+\Delta\theta}=\rho_{\theta_{t}}+J_{\theta}\Delta\theta+\tfrac{1}{2}\,\partial_{\theta}^{2}f_{\theta}[\Delta\theta,\Delta\theta]+O(\norm{\Delta\theta}^{3})$, which combined with the bound on $\norm{\Delta\theta}^{2}$ gives Eq.~\eqref{eq:app-filter}.
\end{proof}
NeTMY uses AdamW rather than vanilla gradient descent, so this lemma should be read as the leading-order geometry of an unpreconditioned step; AdamW inserts an adaptive parameter-space preconditioner $P_{t}$ in place of the identity, but the resulting density update is still pushed forward by $J_{\theta}$ into the same image-space subspace, so the qualitative filtering picture is preserved.

\textbf{Consequence for ill-posedness.}
The filtering kernel
\begin{equation}
G_{\theta}=J_{\theta}J_{\theta}^{\top}
\end{equation}
is positive semidefinite by construction. It acts as a structurally coupled filtering operator across pixels. Writing the Jacobian column functions as
\begin{equation}
\psi_p(\mathbf r)
=
\frac{\partial f_\theta(\mathbf r)}{\partial \theta_p},
\end{equation}
we have, for any raw density gradient $g(\mathbf r)$,
\begin{equation}
(G_\theta g)(\mathbf r)
=
\sum_{p=1}^{P}
\psi_p(\mathbf r)\langle \psi_p,g\rangle
\in
\operatorname{span}\{\psi_1,\ldots,\psi_P\}.
\end{equation}
Hence $G_\theta$ maps the raw pixel-wise gradient into the structured function space allowed by the coordinate parameterization, rather than allowing an arbitrary free pixel-wise update. Equivalently,
\begin{equation}
\operatorname{rank}(G_\theta)
=
\operatorname{rank}(J_\theta J_\theta^\top)
\le
\min(|\Omega|,P),
\end{equation}
so the induced density update is constrained to a structured low-dimensional subspace of the full pixel space.

For a coordinate MLP with annealed Fourier features, the functions $\psi_p(\mathbf r)$ are smooth in the early stages because high-frequency Fourier bands are suppressed in the encoding $\gamma_\beta$. If
\begin{equation}
w_k(\beta)\approx 0,
\qquad k>k_\beta ,
\end{equation}
then the effective bandwidth of the encoding is approximately
\begin{equation}
B_\beta\sim 2^{k_\beta}\pi .
\end{equation}
Consequently, the Jacobian columns are dominated by modes below $B_\beta$, and
\begin{equation}
\widehat{\psi_p}(\boldsymbol\xi)\approx 0,
\qquad
|\boldsymbol\xi|>B_\beta .
\end{equation}
It follows that
\begin{equation}
\widehat{G_\theta g}(\boldsymbol\xi)
=
\sum_{p=1}^{P}
\langle \psi_p,g\rangle
\widehat{\psi_p}(\boldsymbol\xi)
\approx 0,
\qquad
|\boldsymbol\xi|>B_\beta .
\end{equation}
Thus $G_\theta$ initially behaves as a low-pass smoothing operator, consistent with the exponential frequency suppression in (P1). As the annealing parameter $\beta$ increases, higher Fourier bands are gradually activated, the effective bandwidth of the Jacobian columns grows, and the spectrum of $G_\theta$ becomes progressively richer.
In this sense, the coordinate parameterization redistributes a single-pixel density perturbation into a smooth, spatially coupled update during early training, and only later allows sharper, higher-frequency corrections.
A sharp center spike in $\nabla_{\rho}\mathcal{L}$ (induced by (P2)+(P3) on a uniform initialization) is therefore mapped to a spatially distributed image-space update whose center-to-outer ratio is bounded by the corresponding ratio of $G_{\theta}$ rather than by $\nabla_{\rho}\mathcal{L}$ itself. We measure this directly in Section~\ref{sec:experiments}: the realized iter-$0$ NeTMY update $|\Delta\rho|$ does not exhibit a singular center spike, in contrast to the iter-$0$ free-density gradient under the same forward operator and same uniform initialization.

\textbf{Comparison to free-density and quasi-Newton solvers.} For Tikhonov and ADMM, $\rho$ is the optimization variable and the parameter-space update is a (sub-)gradient step, so the effective parameterization Jacobian is the identity ($J=I$, $G_{\theta}=I$) and Eq.~\eqref{eq:app-filter} reduces to $\Delta\rho=-\eta\,\nabla_{\rho}\mathcal{L}$: the raw density gradient is executed verbatim, including the (P2) center bias and the (P3) self-reinforcing peak coupling. L-BFGS also optimizes $\rho$ directly, but replaces the gradient direction with $-H_{t}^{-1}\nabla_{\rho}\mathcal{L}$ via a low-rank approximate-inverse-Hessian $H_{t}^{-1}$ assembled from past gradient differences; this is a parameter-space preconditioner orthogonal to the parameterization-Jacobian filter analyzed here, and provides an independent route to escape the centered-collapse basin (consistent with the L-BFGS center-mass ratio of $0.081$ in Section~\ref{sec:experiments}). For GaussianSplat (Gaussian splats), $G_{\theta}$ is rank-bounded by the number of primitives times the dimension of the per-primitive parameter space, which is small, so $G_{\theta}$ is a strongly localized smoother that cannot represent dense distributions but does limit isolated single-pixel updates on sparse scenes. The empirical ranking documented in Section~\ref{sec:experiments} is consistent with this filtering view: among solvers whose update geometry is dominated by Eq.~\eqref{eq:app-filter}, methods with richer, smoother $G_{\theta}$ are less attracted to the (P2)+(P3) center collapse, with NeTMY (smooth, fully connected MLP across the grid) outperforming GaussianSplat (localized splats) which in turn outperforms Tikhonov / ADMM (no parameterization). We do not claim that this filtering view is the only mechanism at play, only that it is sufficient to explain the observed iter-$0$ image-space update structure and the ranking direction within the gradient-step solver family.

\subsection{Algorithm Summary}
\label{app:method-algo}
\begin{algorithm}[h]
\caption{NeTMY per-measurement optimization (operator $\mathcal{F}\in\{\mathcal{F}_{1},\mathcal{F}_{2}\}$)}
\label{alg:netmy}
\begin{algorithmic}[1]
\State \textbf{Input:} measured spectrum $S_{\mathrm{obs}}$, inversion operator $\mathcal{F}\in\{\mathcal{F}_{1},\mathcal{F}_{2}\}$, total epochs $T_{\mathrm{tot}}$, base lr $\eta_{0}$
\State Initialize MLP parameters $\theta\sim$ default init
\For{stage $s\in\{1,2\}$ with resolution $R_{s}\in\{32,64\}$ and epochs $T_{s}$, lr $\eta_{s}$}
    \State Build coordinate grid $\{\mathbf{x}_{ij}\}$ at resolution $R_{s}$
    \State Precompute FFT kernels for $\mathcal{F}$ at $R_{s}$
    \For{$t=1,\dots,T_{s}$}
        \State $\beta\leftarrow K\cdot t/T_{s}$
        \State $(\rho_{\theta},\omega_{L,\theta})\leftarrow f_{\theta}(\mathbf{x}_{ij};\beta)$ \Comment{Eq.~\eqref{eq:method-density-head}, Eq.~\eqref{eq:app-larmor-mask}}
        \State $\widehat{S}\leftarrow\mathcal{F}(\rho_{\theta},\omega_{L,\theta})$ \Comment{Eq.~\eqref{eq:s1s2}}
        \State $\mathcal{L}\leftarrow$ Eq.~\eqref{eq:method-loss} with weights from Table~\ref{tab:app-method-losses}
        \State $\theta\leftarrow$ AdamW step with cosine-annealed lr, gradient clip
    \EndFor
\EndFor
\State $E_{\mathrm{pred}}\leftarrow\sum_{\omega,\mathbf{r}}\mathcal{F}(\rho_{\theta},\omega_{L,\theta});\;E_{\mathrm{obs}}\leftarrow\sum_{\omega,\mathbf{r}}S_{\mathrm{obs}}$
\State $\alpha_{\mathcal{F}}\leftarrow E_{\mathrm{obs}}/E_{\mathrm{pred}}$ if $\mathcal{F}=\mathcal{F}_{2}$, else $\sqrt{E_{\mathrm{obs}}/E_{\mathrm{pred}}}$
\State $\widehat{\rho}\leftarrow\alpha_{\mathcal{F}}\,\rho_{\theta}$ \Comment{Eq.~\eqref{eq:app-method-scale}}
\State \Return $(\widehat{\rho},\omega_{L,\theta})$
\end{algorithmic}
\end{algorithm}

\section{Experimental Details}
\label{app:experiments}

This appendix expands Section~\ref{sec:experiments} along five axes: dataset construction (Appendix~\ref{app:exp-datasets}), baseline implementation (Appendix~\ref{app:exp-baselines}), metric definitions (Appendix~\ref{app:exp-metrics}), the full $\mathcal{F}_{2}/\mathcal{F}_{2}$ matched-operator results and per-class breakdowns (Appendix~\ref{app:exp-matched}), the mechanism aggregate (Appendix~\ref{app:exp-mechanism}), the full ablation sweep including hyperparameter sweeps (Appendix~\ref{app:exp-ablation}), the supervised-baseline distribution-shift study (Appendix~\ref{app:exp-supervised}), the four-step real-data protocol (Appendix~\ref{app:exp-realdata-full}), runtime and failure modes (Appendix~\ref{app:exp-runtime}, Appendix~\ref{app:exp-failure}). Implementation hyperparameters that are NeTMY-specific (architecture, optimizer, multiscale schedule, loss weights, scale correction) are in Appendix~\ref{app:method}.

\subsection{Datasets}
\label{app:exp-datasets}

\textbf{Cross-fidelity benchmark (Section~\ref{sec:exp-benchmark}, 512 samples).} The cross-fidelity benchmark uses spectra generated by the source-side direct simulator $\mathcal{F}_{3}$ (Appendix~\ref{app:operator-direct}) on a $64\times 64$ grid at standoff $z_{0}=20$\,nm and grid spacing $20$\,nm, with sensor noise $\sigma_{\varepsilon}^{2}$ matched to per-sample dynamic range. Source counts and minimum separations are drawn from eight scene classes that cover the regimes most relevant to NV relaxometry: \textit{few/close}, \textit{few/medium}, \textit{few/far}, \textit{medium/close}, \textit{medium/medium}, \textit{medium/far}, \textit{many/close}, \textit{many/far}. The Larmor field $\omega_{L}\in[1.5,2.5]$\,GHz is evaluated on a $50$-frequency grid and is constant within each ground-truth source. Sample IDs follow the pattern $\langle\text{seq}\rangle\_\langle\text{class}\rangle$ used internally for cross-referencing. Each method is run on the same $512$ samples under both $\mathcal{F}_{1}$ and $\mathcal{F}_{2}$ for inversion (so the table reports $4\times 2=8$ method-operator conditions).

\textbf{Matched-operator benchmark (Section~\ref{sec:exp-benchmark}, 512 samples).} The matched-operator benchmark uses spectra generated and inverted by $\mathcal{F}_{2}$, on the same physical setup as the cross-fidelity benchmark. The samples cover the same scene classes and are used as a broader-coverage view that includes amortized and untrained-prior baselines (DeepDecoder, HybridNeTMY) for which the $\mathcal{F}_{3}$ form is not implemented. The matched-operator setup is intentionally an inverse-crime configuration~\citep{kaipio2007statistical}; we report it only as a complementary view, with the cross-fidelity benchmark as the primary evidence.

\textbf{Real-data dataset (Section~\ref{sec:exp-realdata}).} The $\alpha$-RuCl$_{3}$ dataset is the publicly released NV-relaxometry data from~\citet{kumar2024room}, comprising $T_{1}$-with-flake and $T_{1}$-without-flake measurements at $8$ NVs (NV2, NV3, NV4, NV6, NV7, NV8, NV9, NV10) at the same magnetic-field setting, plus a magnetic-field-dependent dataset on a separate NV (used only for spectral calibration). The crystallographic spin density per layer is $\rho_{0}=6.45\times 10^{18}$\,m$^{-2}$ and the effective magnetic moment is $\mu_{\mathrm{eff}}=2.31\,\mu_{B}$~\citep{kumar2024room}. The SRIM-implied NV-depth prior is $14\pm 5$\,nm, derived from the reported ion implantation energy of $9.8$\,keV and SRIM ion-range/straggle simulation.

\subsection{Baseline Implementation}
\label{app:exp-baselines}

All baselines minimize Eq.~\eqref{eq:invprob} on the same coordinate grid as NeTMY, using the same forward operator and the same observed spectrum, but differ in how $\rho$ is parameterized and which auxiliary regularizers are added. We give compact algorithmic settings for each.

\textbf{Tikhonov.} Free-density $\rho\in\mathbb{R}_{\geq 0}^{|\Omega|}$, optimized by Adam~\citep{kingma2014adam} with learning rate $5\!\times\!10^{-3}$, weight decay $10^{-5}$, and gradient clipping at $1$ for $5000$ epochs. Regularizers are $\ell_{2}$ (weight $10^{-3}$) and TV (weight $10^{-3}$), matching the NeTMY weights up to the absent NeTMY-specific physics losses. The scale-correction post-processing of Eq.~\eqref{eq:app-method-scale} is applied to all methods identically for paired comparison.

\textbf{ADMM.} Variable-splitting on $\rho=z$ with augmented-Lagrangian penalty $\mu=10^{-3}$, box constraints $[0,\rho_{\mathrm{max}}]$ via projection, and proximal $\ell_{1}$ shrinkage with weight $10^{-2}$. The data-fidelity step uses Adam at learning rate $5\!\times\!10^{-3}$ for $30$ inner iterations per outer cycle, run for $200$ outer cycles. Stopping criteria are $\norm{\rho-z}_{\infty}<10^{-3}$ or maximum iterations.

\textbf{GaussianSplat.} Explicit set of $K=64$ Gaussian primitives with parameters $(\boldsymbol{\mu}_{k},\boldsymbol{\sigma}_{k},a_{k})\in\mathbb{R}^{2}\times\mathbb{R}_{>0}^{2}\times\mathbb{R}_{\geq 0}$, where $\boldsymbol{\sigma}_{k}$ is the per-primitive width vector (distinct from the sigmoid $\sigma(\cdot)$ and from the Gaussian-ansatz width $\sigma_{g}$ used in Section~\ref{sec:exp-realdata}). Prune/split/clone/merge schedule from~\citep{kerbl20233d} adapted to the dipolar forward operator. Optimizer Adam at learning rate $10^{-3}$ for $T=400$ iterations. The number of primitives is dynamic but capped at $128$.

\textbf{DeepDecoder.} Untrained convolutional decoder prior of~\citet{heckel2018deep} with $5$ upsampling stages, channel widths $[128,128,128,128,1]$, and bilinear upsampling. Optimized for $5000$ steps with Adam at learning rate $10^{-3}$. The pixel output is multiplied by $1$ to remain nonnegative; values $<0$ are clamped at zero.

\textbf{U-Net (and U-Net+physics).} Four-level U-Net with channel widths $[32,64,128,256]$, instance normalization, and a final pixelwise softplus, trained on $300$ dense paired samples for $200$ epochs with Adam at learning rate $10^{-3}$. The U-Net+physics variant additionally optimizes a physics-residual loss $\norm{\mathcal{F}_{2}(\widehat{\rho},\omega_{L})-S_{\mathrm{obs}}}_{2}^{2}$ alongside the supervised pixel loss. We report only the supervised-trained model and evaluate on sparse test scenes (which are out-of-distribution by construction); see Appendix~\ref{app:exp-supervised}.

\textbf{GAN-SL+physics.} Generative adversarial network with the same backbone as the U-Net for the generator, a PatchGAN discriminator, and a physics-loss term applied to the generator's output. Training and evaluation protocols mirror the U-Net.

\textbf{HybridNeTMY.} A coordinate-image hybrid in which the input to a U-Net is concatenated with a Fourier-feature encoding of the coordinate grid, mirroring the coordinate-feature input of NeTMY but retaining a CNN architecture and supervised training. Reported as a baseline only because the user community has asked whether the coordinate-feature input alone suffices to produce NeTMY-like behavior under supervised training; the answer (Appendix~\ref{app:exp-supervised}) is that it does not.

\textbf{L-BFGS.} Limited-memory BFGS~\citep{liu1989limited} on free-density $\rho$ with history depth $20$, line-search Wolfe conditions, and $100$ outer iterations. Used only in the mechanism analysis (Section~\ref{sec:exp-mechanism}, Appendix~\ref{app:exp-mechanism}) as a reference quasi-Newton free-density solver.

\subsection{Metric Definitions}
\label{app:exp-metrics}

\textbf{Gradient Magnitude Similarity Deviation (GMSD).} GMSD~\citep{xue2013gradient} measures structural fidelity through the standard deviation of a pointwise gradient-magnitude similarity map. Given the predicted density $\widehat{\rho}$ and ground-truth density $\rho_{\star}$, both rescaled to share the maximum value, define gradient magnitudes $G_{X}=\sqrt{(D_{x}*X)^{2}+(D_{y}*X)^{2}}$ for $X\in\{\widehat{\rho},\rho_{\star}\}$ with the Prewitt operators $D_{x},D_{y}$. The pointwise GMS is $\mathrm{GMS}(\mathbf{r})=(2 G_{\widehat{\rho}}G_{\rho_{\star}}+c)/(G_{\widehat{\rho}}^{2}+G_{\rho_{\star}}^{2}+c)$ with stability constant $c$, and $\mathrm{GMSD}=\sqrt{\mathrm{Var}_{\mathbf{r}}\,\mathrm{GMS}(\mathbf{r})}$. Lower is better.

\textbf{Hungarian F$_{1}$.} A localized peak-matching score: peaks are extracted from $\widehat{\rho}$ and $\rho_{\star}$ by local-maximum filtering above an absolute threshold of $5\%$ of the per-image maximum, paired by the Hungarian algorithm~\citep{kuhn1955hungarian} under a fixed matching radius of $r_{\mathrm{match}}=2$ pixels (corresponding to one half of the Section~\ref{sec:illposed} effective PSF width $w_{\mathrm{psf}}$ at the standoff and characteristic mode used in the cross-fidelity benchmark), and scored as $F_{1}=2\,\mathrm{TP}/(2\,\mathrm{TP}+\mathrm{FP}+\mathrm{FN})$. Higher is better. F$_{1}$ is well-defined only when both reconstruction and ground truth contain at least one supra-threshold peak; we report $F_{1}=0$ for the degenerate case.

\textbf{Sliced Wasserstein Distance (SWD).} The sliced Wasserstein-2 distance~\citep{bonneel2015sliced}, evaluated as the mean over $128$ random projections drawn uniformly from the unit sphere of the $1$-D Wasserstein-2 distance between the two projected mass distributions $P_{\theta\#}\widehat{\rho}/\norm{\widehat{\rho}}_{1}$ and $P_{\theta\#}\rho_{\star}/\norm{\rho_{\star}}_{1}$. Lower is better.

\textbf{Density MSE.} The pixelwise mean square error $\mathrm{MSE}=|\Omega|^{-1}\sum_{\mathbf{r}\in\Omega}(\widehat{\rho}(\mathbf{r})-\rho_{\star}(\mathbf{r}))^{2}$ on the post-scale-corrected reconstruction. Reported as a secondary metric only because the background-dominated character of sparse density fields makes MSE insensitive to localization quality.

\textbf{Masked SSIM.} The structural similarity index~\citep{wang2004image} evaluated on the support of $\rho_{\star}$ (i.e., on pixels where $\rho_{\star}>0$), reported in Section~\ref{sec:exp-ablation} alongside MSE.

\subsection{Matched-Operator Density-MSE Leaderboard}
\label{app:exp-matched}

Table~\ref{tab:app-matched} reports the per-method density-MSE leaderboard on the $512$-sample $\mathcal{F}_{2}/\mathcal{F}_{2}$ matched-operator benchmark, with secondary metrics.

\begin{table}[h]
\centering
\small
\setlength{\tabcolsep}{6pt}
\caption{Density-MSE leaderboard on the $512$-sample $\mathcal{F}_{2}/\mathcal{F}_{2}$ matched-operator benchmark. Density MSE and noise-map MSE are means over $512$ sparse samples. Train time is mean wall-clock on a single A6000 GPU.}
\label{tab:app-matched}
\begin{tabular}{lcccc}
\toprule
Method & Density MSE$\downarrow$ & Noise MSE$\downarrow$ & Train time (s) & Notes \\
\midrule
\textbf{NeTMY (Ours)} & \textbf{0.0037} & \textbf{0.21} & 780 & per-instance, no labels \\
GaussianSplat                  & 0.0067 & 1.09 & 402 & explicit Gaussians \\
Tikhonov                & 0.0117 & 0.81 & 1.5 & free-density + $\ell_{2}$/TV \\
ADMM                    & 0.2555 & 22.94 & 1.1 & free-density + ADMM \\
DeepDecoder             & 0.2558 & 24.25 & 259 & untrained CNN prior \\
HybridNeTMY             & 1.011 & 0.23 & 399 & supervised, dense-trained \\
\bottomrule
\end{tabular}
\end{table}

The matched-operator ranking is consistent with the cross-fidelity benchmark of Table~\ref{tab:main} on density-MSE, but, as discussed in Section~\ref{sec:exp-benchmark}, density-MSE alone is insensitive to centered-collapse failure modes that the cross-fidelity benchmark exposes through Hungarian F$_{1}$ and SWD. The supervised HybridNeTMY's poor MSE reflects the dense-to-sparse distribution-shift failure documented in Appendix~\ref{app:exp-supervised}.

\subsection{Mechanism Aggregate and Per-Method Diagnostics}
\label{app:exp-mechanism}

\textbf{32-run ADMM aggregate sweep.} To isolate the operator effect from solver-specific choices on the centered-collapse phenomenon, we run $32$ ADMM configurations (4 samples $\times$ 8 loss-baseline settings) under both $\mathcal{F}_{1}$ and $\mathcal{F}_{2}$ inversion and report the center-mass ratio at termination. Mean center-mass ratio rises from $0.553\pm 0.344$ under $\mathcal{F}_{1}$ to $0.776\pm 0.369$ under $\mathcal{F}_{2}$ (Table~\ref{tab:app-admm-sweep}), confirming that the more faithful operator drives a stronger center collapse for free-density solvers. The 32-run sweep is ADMM-only by construction; the cross-method evidence is the per-sample comparison of Figure~\ref{fig:mechanism}.

\begin{table}[h]
\centering
\small
\setlength{\tabcolsep}{6pt}
\caption{ADMM-only center-collapse aggregate over $32$ runs ($4$ samples $\times$ $8$ loss-baseline configurations). Higher center-mass ratio means more centered collapse.}
\label{tab:app-admm-sweep}
\begin{tabular}{lcc}
\toprule
Operator & Center-mass ratio (mean$\pm$std) & Sample count \\
\midrule
$\mathcal{F}_{1}$ (scalar) & $0.553\pm 0.344$ & 32 \\
$\mathcal{F}_{2}$ (tensor) & $0.776\pm 0.369$ & 32 \\
\bottomrule
\end{tabular}
\end{table}

\textbf{NeTMY first-step image-space update.} Lemma~\ref{lem:app-filter} predicts that the realized iter-$0$ NeTMY update $|\Delta\rho|$ should be smoothed and structurally coupled across pixels rather than executed verbatim as a singular center spike. We measure this directly on the same many-source sample (sample $024$): the iter-$0$ image-space update $|\Delta\rho|$ is spatially distributed with a center-to-outer ratio of approximately $1.6\times$ (vs $18.29\times$ for the raw density-space gradient). This $\sim\!11\times$ damping is the empirical signature of $G_{\theta}=J_{\theta}J_{\theta}^{\top}$.

\subsection{Component Ablation: Per-Class and Hyperparameter Sweeps}
\label{app:exp-ablation}

The cumulative ablation of Table~\ref{tab:ablation} aggregates over $512$ samples. The largest per-class failure under any ablation is case \textit{medium/medium}, where $-\ell_{1}$ raises density MSE by $\times\!\sim\!2.7$ and degrades SSIM from $0.96$ to $0.82$. The smallest is case  \textit{few/close}, where most ablations leave density MSE essentially unchanged (because a single source is reconstructed correctly by every variant). These sample-level patterns are consistent with the difficulty class.

\textbf{Hyperparameter sweeps.} Table~\ref{tab:app-hp} reports three single-axis sweeps (learning rate, hidden width, PE octave count) on held-out samples. NeTMY is robust to learning rate and hidden width within roughly an order of magnitude of the defaults; PE octave count $K\!=\!12$ is the smallest setting that recovers high-frequency sparse detail without divergent peaks.

\begin{table}[h]
\centering
\small
\setlength{\tabcolsep}{6pt}
\caption{Hyperparameter sweeps over a $64$-sample held-out set. Best per axis in bold.}
\label{tab:app-hp}
\begin{tabular}{lccccc}
\toprule
Axis & \multicolumn{5}{c}{Setting} \\
\midrule
Learning rate $\eta_{0}$ & $10^{-4}$ & $5\!\times\!10^{-4}$ & $\mathbf{10^{-3}}$ & $5\!\times\!10^{-3}$ & $10^{-2}$ \\
\quad density MSE         & 0.0049 & 0.0040 & \textbf{0.0037} & 0.0038 & 0.0067 \\
Hidden width              & $128$ & $256$ & $\mathbf{320}$ & $384$ & $512$ \\
\quad density MSE         & 0.0046 & 0.0040 & \textbf{0.0037} & 0.0038 & 0.0040 \\
PE octaves $K$            & $4$ & $8$ & $\mathbf{12}$ & $16$ & $20$ \\
\quad density MSE         & 0.0083 & 0.0048 & \textbf{0.0037} & 0.0040 & 0.0046 \\
\bottomrule
\end{tabular}
\end{table}

\subsection{Supervised Baselines and Distribution Shift}
\label{app:exp-supervised}

\textbf{Setup.} We train four supervised baselines (GAN-SL+physics, U-Net-SL, U-Net-SL+physics, HybridNeTMY) on dense scenes ($300$ paired samples drawn from a dense-source generator with $30$--$60$ sources per scene) and evaluate on the sparse test scenes used by NeTMY. The dense $\to$ sparse mismatch is severe by design: training scenes have nearly continuous mass distributions while test scenes have $1$--$8$ off-center delta-like sources.

\textbf{Failure observation.} Across the four supervised baselines, the dense-to-sparse evaluation produces three characteristic failure patterns: (i) \textbf{GAN-SL+physics} fails to recover the full sparse support and instead concentrates its prediction on only a few dominant responses, suppressing weaker peaks and producing an incomplete sparse reconstruction. (ii) \textbf{U-Net-SL+physics} shows a smoother failure mode, with low-amplitude diffuse predictions and residual background activation, indicating a tendency toward blurred interpolation rather than faithful sparse peak recovery. (iii) \textbf{U-Net-SL} is the most unstable of the supervised baselines in this experiment, often generating exaggerated peak amplitudes, spurious activations, and broad background artifacts under dense-to-sparse transfer. The \textbf{HybridNeTMY} model, despite its coordinate-feature input, behaves like a supervised CNN under the dense-trained regime: its density MSE on the matched-operator $\mathcal{F}_{2}/\mathcal{F}_{2}$ benchmark is $1.011$ (Table~\ref{tab:app-matched}), the worst of the seven methods. NeTMY is unaffected because it is optimized per measurement without paired density labels.

\textbf{What the failure rules out.} The supervised distribution-shift failure rules out the hypothesis that the NeTMY advantage on sparse scenes is reducible to a coordinate-feature inductive bias: HybridNeTMY shares the coordinate-feature input but, when trained supervised on dense scenes, behaves like the other supervised baselines. The advantage of NeTMY is in the per-measurement optimization-against-physics protocol of Section~\ref{sec:method}, not in the feature space alone.

\subsection{Real-Data Cross-Check: Full Four-Step Protocol}
\label{app:exp-realdata-full}

The Section~\ref{sec:exp-realdata} cross-check runs four steps. We expand each below.

\textbf{Step 1: SRIM-anchored amplitude calibration.} For each NV $i\in\{2,3,4,6,7,8,9,10\}$, we compute the implied NV depth $d_{i}^{\star}$ under each operator by inverting $\Gamma_{\mathrm{Ru}}^{\mathrm{pred}}(d_{i}^{\star})=\Gamma_{\mathrm{Ru}}^{i,\mathrm{meas}}$. Under $\mathcal{F}_{2}$, $1$ NV (NV4) lands in $[9,20]$\,nm at $d^{\star}_{4}=14.49$\,nm; under $\mathcal{F}_{1}$, no NVs land in the window because $\Gamma_{\mathrm{Ru}}^{\mathcal{F}_{1}}(d)>\Gamma_{\mathrm{Ru}}^{\mathrm{meas}}$ for all $d>9$\,nm at the SRIM-prior amplitude. The point amplitude ratio at $d=14.5$\,nm is $\Gamma^{\mathcal{F}_{1}}/\Gamma^{\mathcal{F}_{2}}=11{,}353.5/54.0\approx 210\times$.

\textbf{Step 2: Power-law analysis.} The independent quantity $\Gamma_{\mathrm{int}}\propto d^{-\alpha}$ with $\alpha=1.06\pm 0.22$ provided by~\citet{kumar2024room} lets us eliminate $d$ and convert each operator's depth scaling into an internal-rate scaling: $\mathcal{F}_{2}$ predicts $b_{\mathrm{pred}}=4/\alpha=3.77$, $\mathcal{F}_{1}$ predicts $b_{\mathrm{pred}}=2/\alpha=1.89$. The measured slope is $b_{\mathrm{exp}}=1.43\pm 0.21$. Slope alone is closer to the $\mathcal{F}_{1}$ prediction; we therefore explicitly report this axis as ambiguous, since slope and intercept are independent.

\textbf{Step 3: Spectral calibration.} Magnetic-field-dependent $T_{1}$ measurements at $B\in\{2,313,383.5\}$\,Gauss (single NV) calibrate the spin-bath correlation time $\tau_{c}=0.41$\,ns and bath linewidth $\gamma_{\mathrm{bath}}=2.45$\,GHz. Both operators share the Lorentzian spectral response, so this calibration is operator-neutral; we report it for completeness.

\textbf{Step 4: Loss-landscape identifiability.} On the Gaussian density ansatz $\rho(\mathbf{r};A,\sigma_g)=A\exp(-\norm{\mathbf{r}}^{2}/2\sigma_g^{2})$, where $\sigma_g$ denotes the Gaussian width, we evaluate the log-MSE loss $L(A,\sigma_g)=|N_{\mathrm{NV}}|^{-1}\sum_{i}(\log\Gamma_{\mathrm{Ru}}^{\mathrm{pred}}(A,\sigma_g,d_{i})-\log\Gamma_{\mathrm{Ru}}^{i,\mathrm{meas}})^{2}$ on a $90\times 90$ grid in $(\log_{10}A,\sigma_g)$. The Hessian condition number at the minimum is $\kappa_{\mathcal{F}_{2}}=931$ and $\kappa_{\mathcal{F}_{1}}=301{,}139$ (ratio $\sim\!323\times$). The $\mathcal{F}_{2}$ landscape is a parabolic bowl; the $\mathcal{F}_{1}$ landscape is a degenerate valley along $A^{2}\sigma_g^{2}=\mathrm{const}$, which is the algebraic consequence of $\Gamma\propto A^{2}$ in $\mathcal{F}_{1}$ and is non-identifiable on this dataset.

\textbf{Read-back.} Two of three axes (depth-amplitude, conditioning) disfavor $\mathcal{F}_{1}$; the third (slope) is ambiguous. We frame the result as a multi-axis consistency check, not as a validation, because the dataset is $8$ NVs with a $1$-D depth axis and a Gaussian density ansatz, not a spatially resolved reconstruction benchmark.

\subsection{Runtime}
\label{app:exp-runtime}

NeTMY's per-instance wall-clock on the $512$-sample matched-operator benchmark on a single NVIDIA A6000 GPU is long-tailed: median $273$\,s, trimmed mean (drop top $5$ samples) $426$\,s, untrimmed mean $780$\,s. The top-$5$ samples are dense \textit{many/close} configurations where the $64\times 64$ stage requires more iterations to converge on the full source set. Free-density baselines (Tikhonov $1.55$\,s, ADMM $1.10$\,s) are roughly $200$--$700\times$ faster on the same hardware; GaussianSplat ($402$\,s) and DeepDecoder ($259$\,s) are within an order of magnitude of NeTMY. In the regime that motivated this work (per-instance reconstruction without paired density labels), the relevant comparison is to the supervised amortized baselines, which are fast at inference but require a labeled training set that is generally not available for NV-relaxometry samples; in regimes where labels are abundant, amortized methods are the appropriate choice.

\subsection{Failure Modes}
\label{app:exp-failure}

We document three known failure modes that the experimental analysis surfaces.

\textbf{Cross-shaped artifacts on dense scenes.} As shown in Figure~\ref{fig:Failure1}, on samples in the \textit{many/medium} class with $\sim\!30$--$40$ sources spaced at the resolution limit, NeTMY occasionally produces axis-aligned cross-shaped artifacts in the reconstructed density. The artifact morphology is consistent with the (P4) anisotropic-explanation prediction of Section~\ref{sec:illposed}: when many sources lie within $w_{\mathrm{psf}}$ of each other, the optimizer converges to a low-cost density that places mass along low-cost axis-aligned tracks rather than at the true source positions. Anti-artifact regularizers (isotropic TV, Laplacian, Fourier-axis penalties) reduce visible cross artifacts but slightly worsen aggregate density MSE; we keep the standard anisotropic TV in the main runs.

\textbf{High-frequency leakage on small clusters.} The medium/medium row of Figure~\ref{fig:Failure2} shows that NeTMY occasionally leaks small-cluster mass into nearby pixels. This is the high-frequency tail of the annealed PE schedule: at full $\beta=K$ the network can fit non-source-supported pixels, and the $\ell_{1}$/gate combination is the only mechanism preventing this. On samples with $\sim\!8$--$15$ closely spaced sources the leakage is visible; on simpler sparse scenes it is suppressed.

\textbf{Centered minimum trapping under random initialization.} For $\sim\!5\%$ of the matched-operator samples, NeTMY's stage-1 minimum is in the centered-collapse basin documented in Section~\ref{sec:exp-mechanism}. Stage-2 refinement at higher resolution and lower learning rate escapes this basin via the $\beta$-schedule reset, but the fraction of samples in which this happens is sensitive to the random initialization of the MLP and to stage-1 epoch count. We did not observe any systematic correlation with scene class.

\begin{figure}[h]
    \centering
    \includegraphics[width=0.6\textwidth]{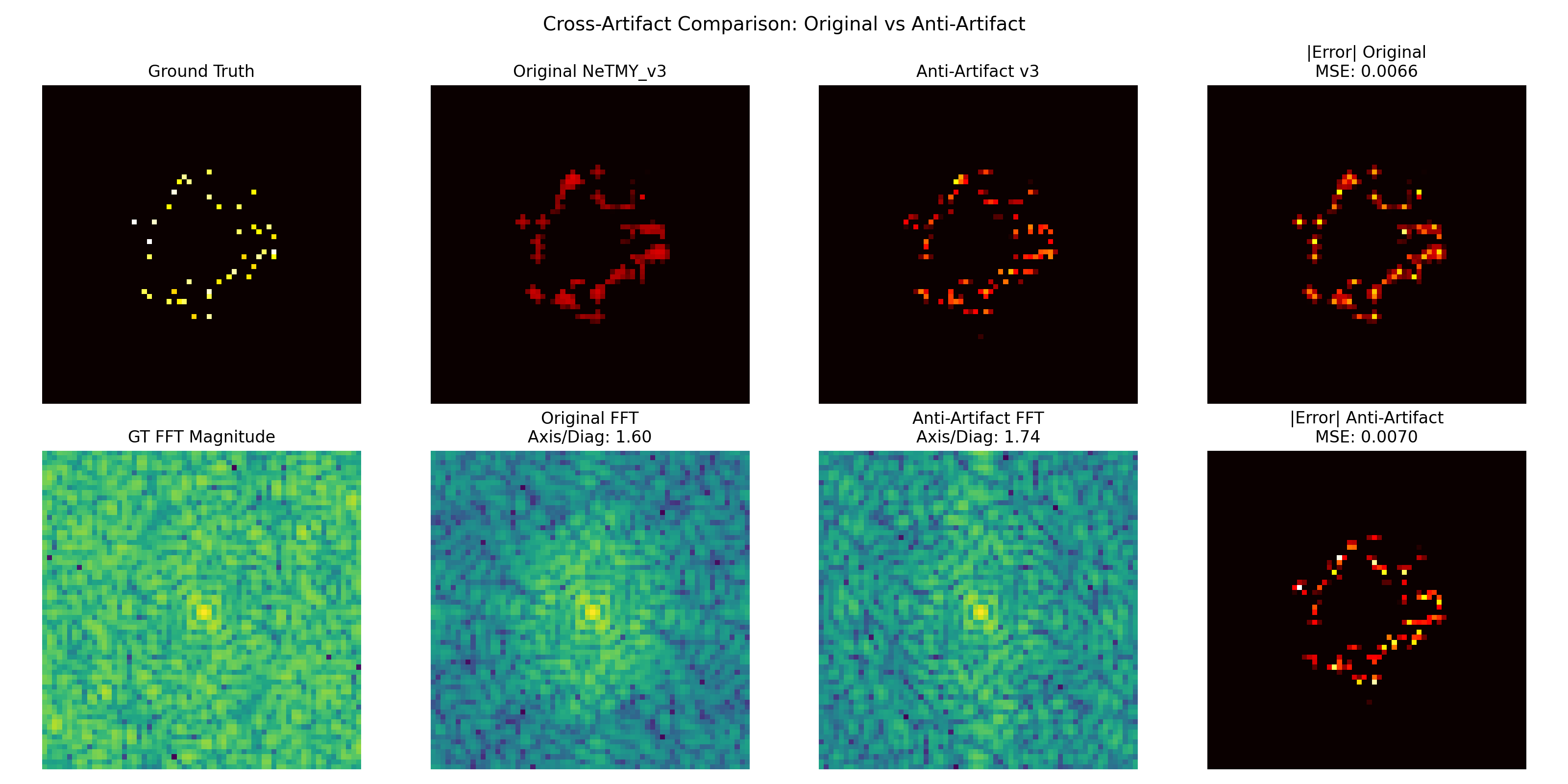}
    \caption{The reconstruction captures the broad support of the active region but develops strong cross-like, axis-aligned artifacts around high-intensity locations. The anti-artifact variant substantially suppresses these artifacts, albeit at the cost of slightly degraded reconstruction quality.}
    \label{fig:Failure1}
\end{figure}

\begin{figure}[h]
    \centering
    \includegraphics[width=0.5\textwidth]{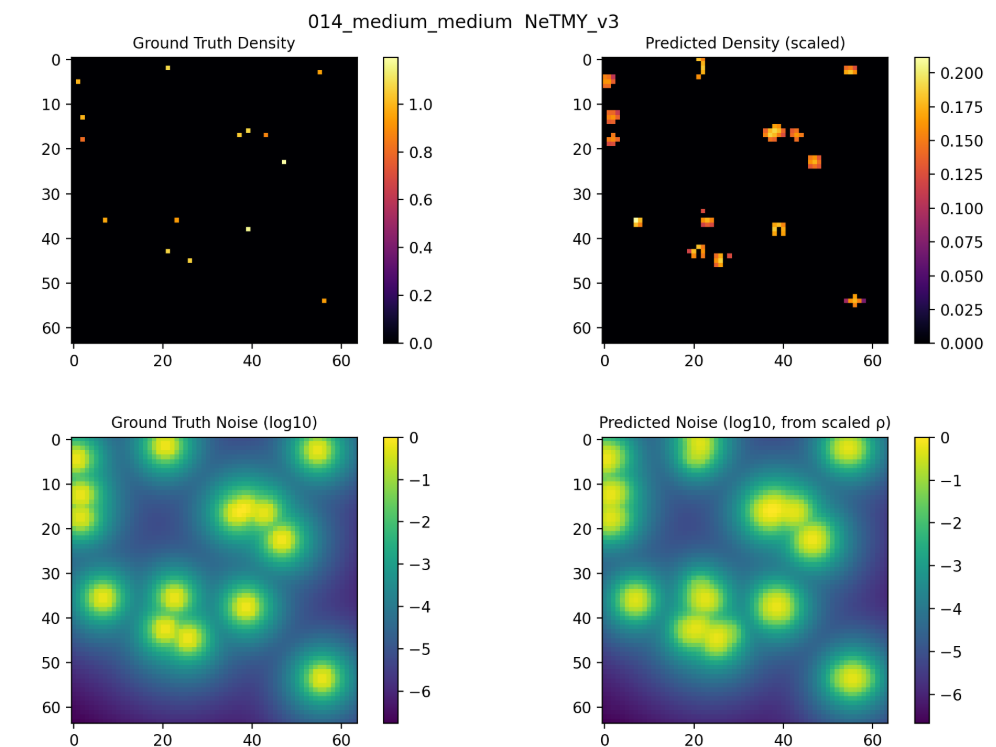}
    \caption{High-frequency leakage on a medium/medium sample. The predicted density spreads small-cluster mass into neighboring pixels beyond the true source locations, a leakage artifact driven by the high-frequency tail of the annealed PE schedule. The $\ell_1$/gate combination partially suppresses this effect, but residual leakage remains visible given the $\sim$8--15 closely spaced sources in this scene.}
    \label{fig:Failure2}
\end{figure}

\begin{figure}[H]
    \centering
    \includegraphics[width=0.5\textwidth]{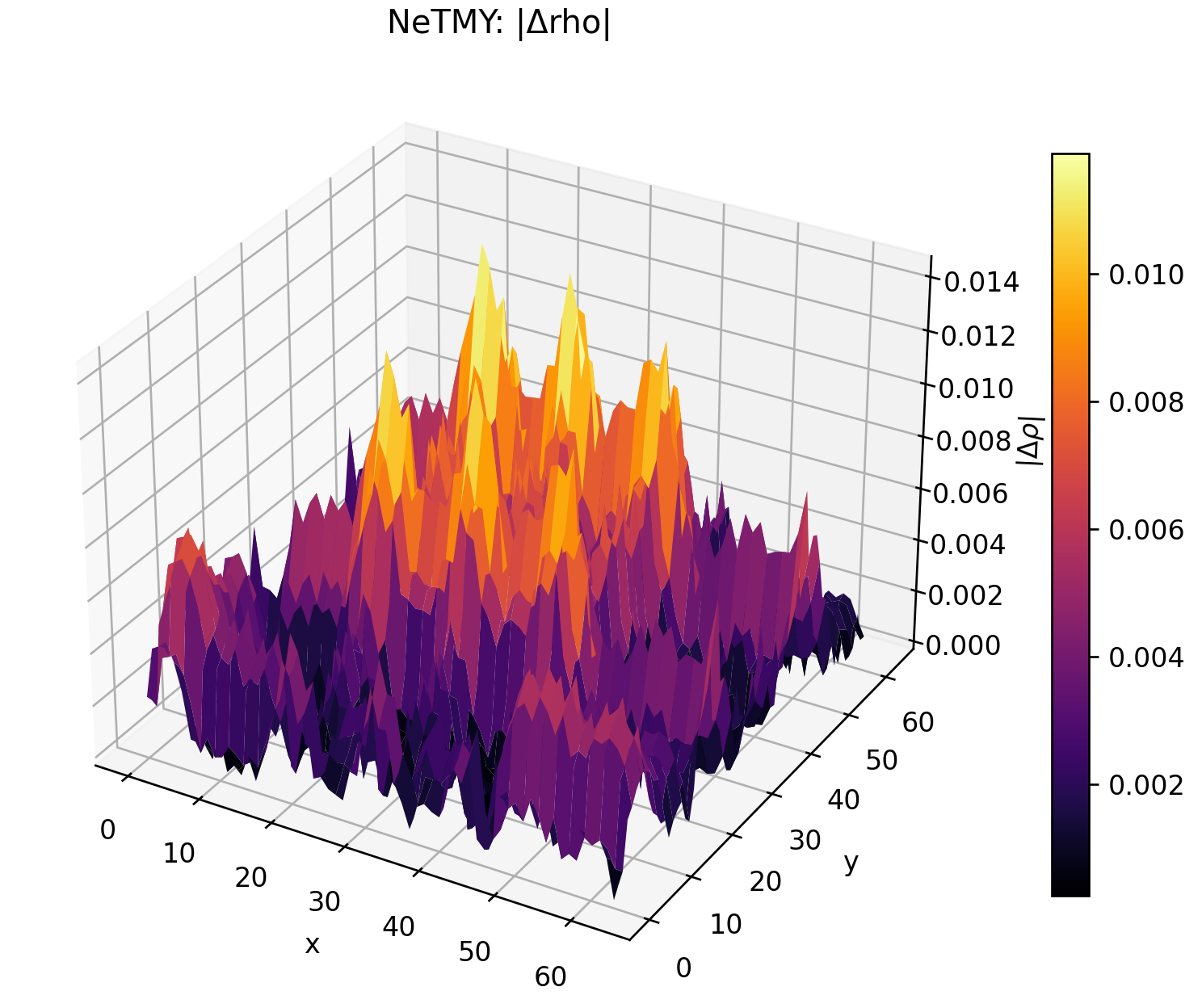}
    \caption{Realized first-step image-space update $|\Delta\rho|$ produced by NeTMY, where $\Delta\rho=\rho_{1}-\rho_{0}$. The surface does \emph{not} exhibit a singular center spike, in contrast to the iter-0 free-density gradient under the same forward operator (Section~\ref{sec:exp-mechanism}). This is the empirical signature of the Lemma~\ref{lem:app-filter} filtering kernel $G_{\theta}=J_{\theta}J_{\theta}^{\top}$ redistributing the raw density-space gradient through the parameterization.\label{fig:app-netmy-delta}}
\end{figure}

\subsection{Qualitative Examples}
\label{app:exp-examples}

This subsection presents additional per-sample reconstruction grids that complement the cross-fidelity benchmark of Section~\ref{sec:exp-benchmark} and the matched-operator benchmark of Appendix~\ref{app:exp-matched}. Figures~\ref{fig:QTD-008}--\ref{fig:QTD-018} show four cross-fidelity samples ($\mathcal{F}_{3}$-generated, inverted under both $\mathcal{F}_{1}$ and $\mathcal{F}_{2}$) and illustrate the operator-fidelity reshaping discussed in Section~\ref{sec:exp-benchmark}. Figures~\ref{fig:QTD-Shallow} and~\ref{fig:QTD-Physical} show four matched-operator samples each under $\mathcal{F}_{1}/\mathcal{F}_{1}$ and $\mathcal{F}_{2}/\mathcal{F}_{2}$, respectively, drawn from the matched-operator benchmark of Appendix~\ref{app:exp-matched}. Within each figure, columns correspond to (left to right) the ground-truth density, Tikhonov, ADMM, GaussianSplat, and NeTMY reconstructions; the centered iter-$0$ collapse of free-density solvers under $\mathcal{F}_{2}$ is visible as a concentrated central blob in the Tikhonov / ADMM panels, while NeTMY recovers off-center sparse structure consistent with the ground truth.

\begin{figure}[H]
    \centering
    \includegraphics[width=0.8\textwidth]{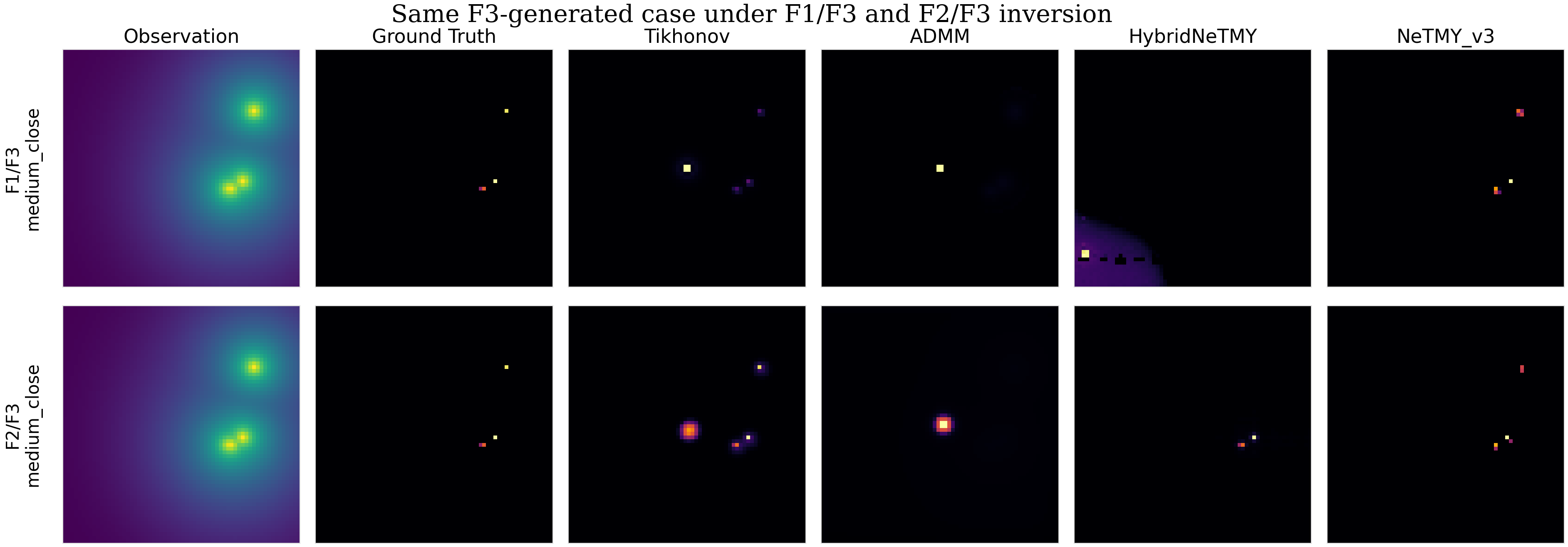}
    \caption{Cross-fidelity sample~$008$ ($\mathcal{F}_{3}$-generated). Reconstructions under $\mathcal{F}_{1}$ and $\mathcal{F}_{2}$ inversion. Free-density solvers (Tikhonov, ADMM) center-collapse under the more faithful $\mathcal{F}_{2}$, while NeTMY recovers the off-center support; GaussianSplat sits between the two regimes. See Section~\ref{sec:exp-benchmark} for aggregate metrics.}
    \label{fig:QTD-008}
\end{figure}

\begin{figure}[H]
    \centering
    \includegraphics[width=0.8\textwidth]{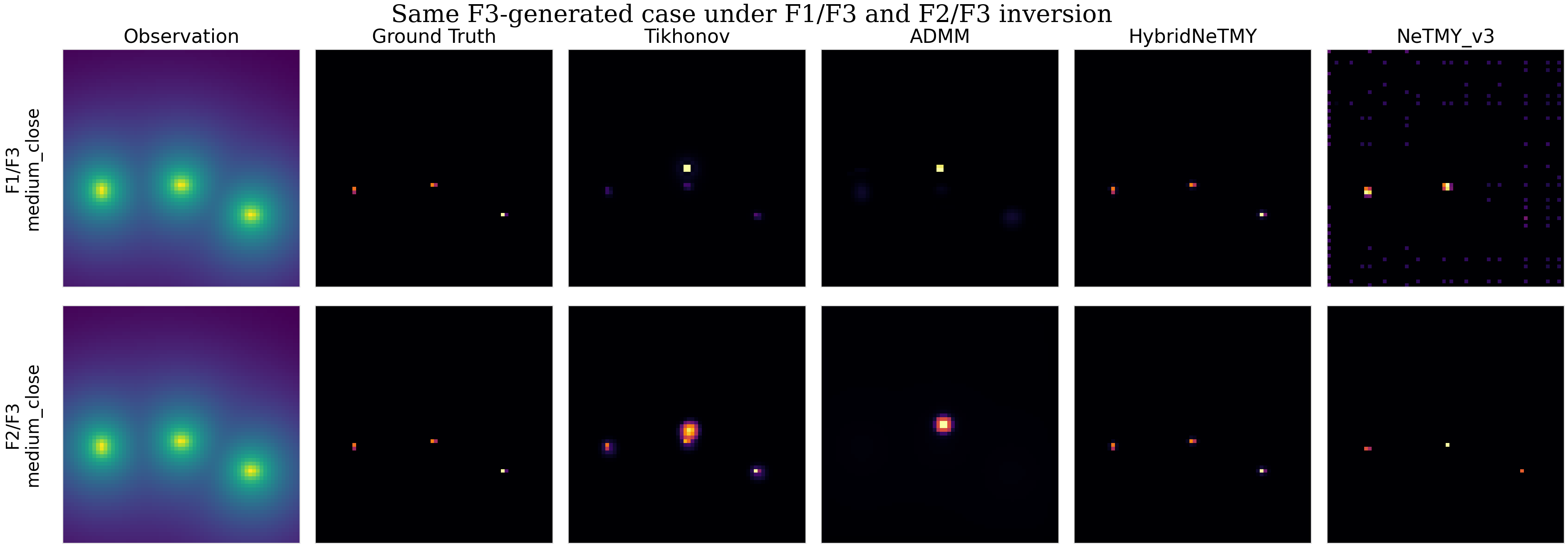}
    \caption{Cross-fidelity sample~$009$ ($\mathcal{F}_{3}$-generated). Same layout as Figure~\ref{fig:QTD-008}: $\mathcal{F}_{1}$ inversion (top row) vs. $\mathcal{F}_{2}$ inversion (bottom row), columns are Tikhonov, ADMM, GaussianSplat, NeTMY. The $\mathcal{F}_{2}$ → $\mathcal{F}_{1}$ ranking flip discussed in Section~\ref{sec:exp-benchmark} is visible at the per-sample level.}
    \label{fig:QTD-009}
\end{figure}

\begin{figure}[H]
    \centering
    \includegraphics[width=0.8\textwidth]{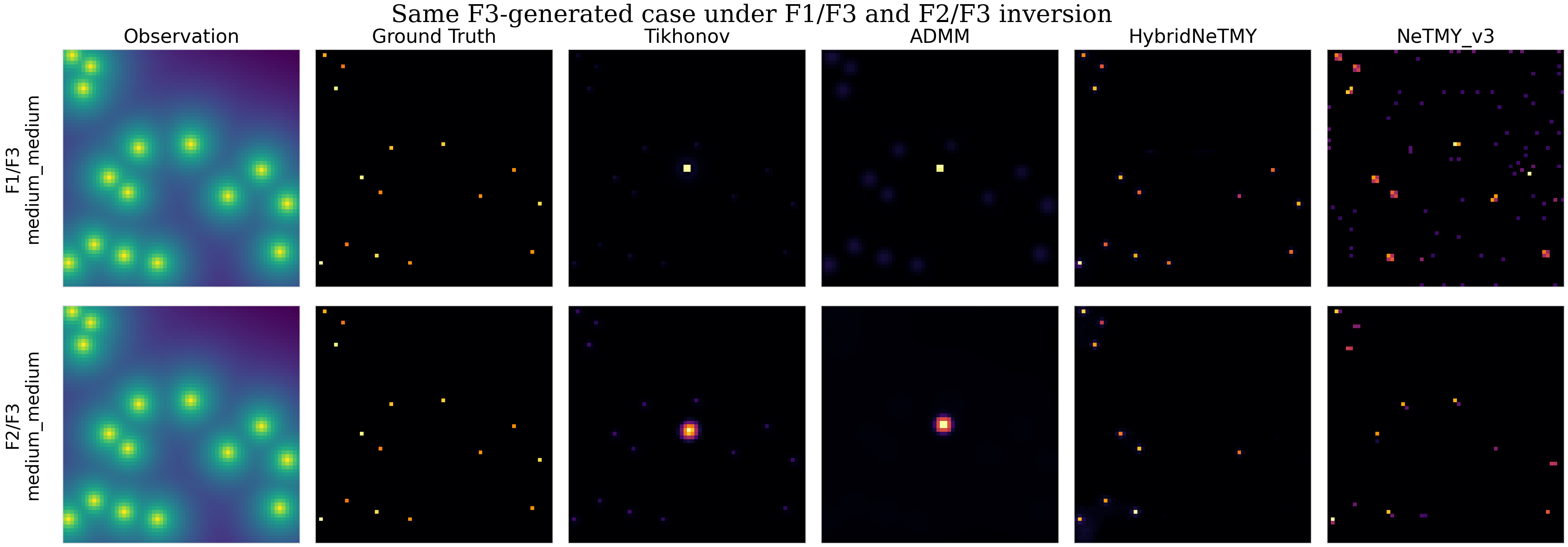}
    \caption{Cross-fidelity sample~$010$ ($\mathcal{F}_{3}$-generated). A \textit{many/close} scene where the (P4) merging pathology and the (P2)+(P3) center-collapse pathology jointly stress the free-density solvers; NeTMY preserves the most peaks within the matching radius of Appendix~\ref{app:exp-metrics}.}
    \label{fig:QTD-010}
\end{figure}

\begin{figure}[H]
    \centering
    \includegraphics[width=0.8\textwidth]{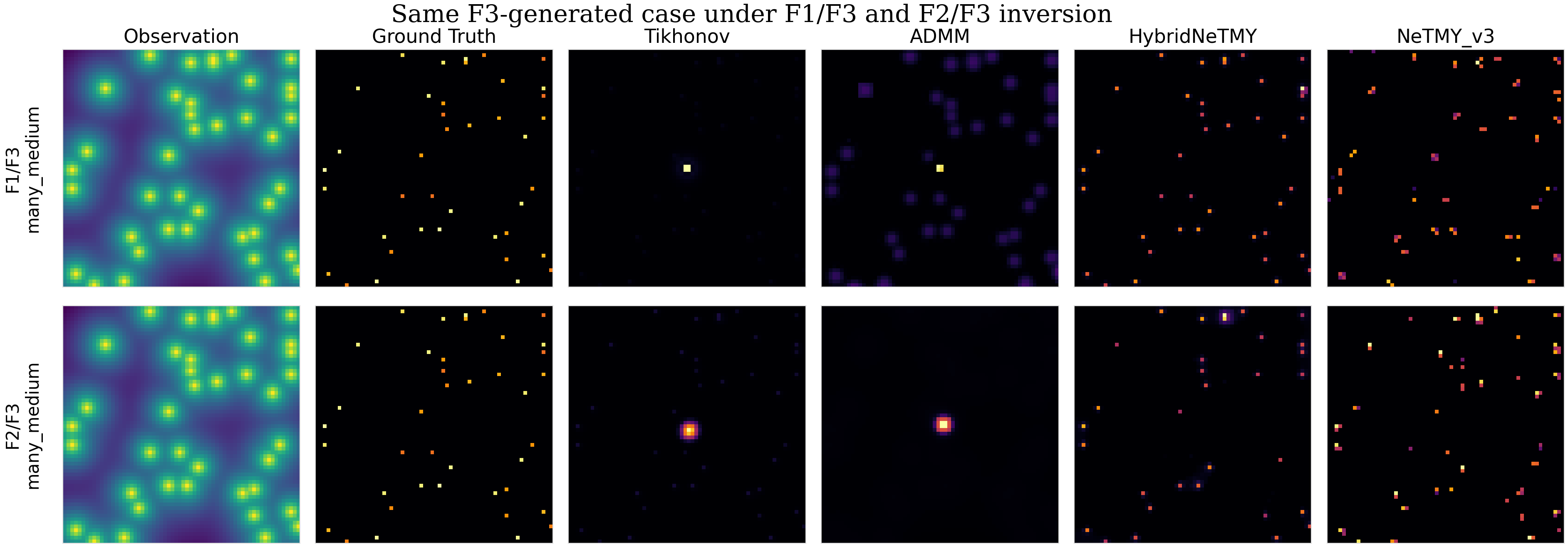}
    \caption{Cross-fidelity sample~$018$ ($\mathcal{F}_{3}$-generated). A \textit{medium/medium} scene; GaussianSplat is competitive on Hungarian F$_{1}$ here while NeTMY remains best on Sliced-Wasserstein, consistent with the table-level rankings of Section~\ref{sec:exp-benchmark}.}
    \label{fig:QTD-018}
\end{figure}

\begin{figure}[H]
    \centering
    \includegraphics[width=1\textwidth]{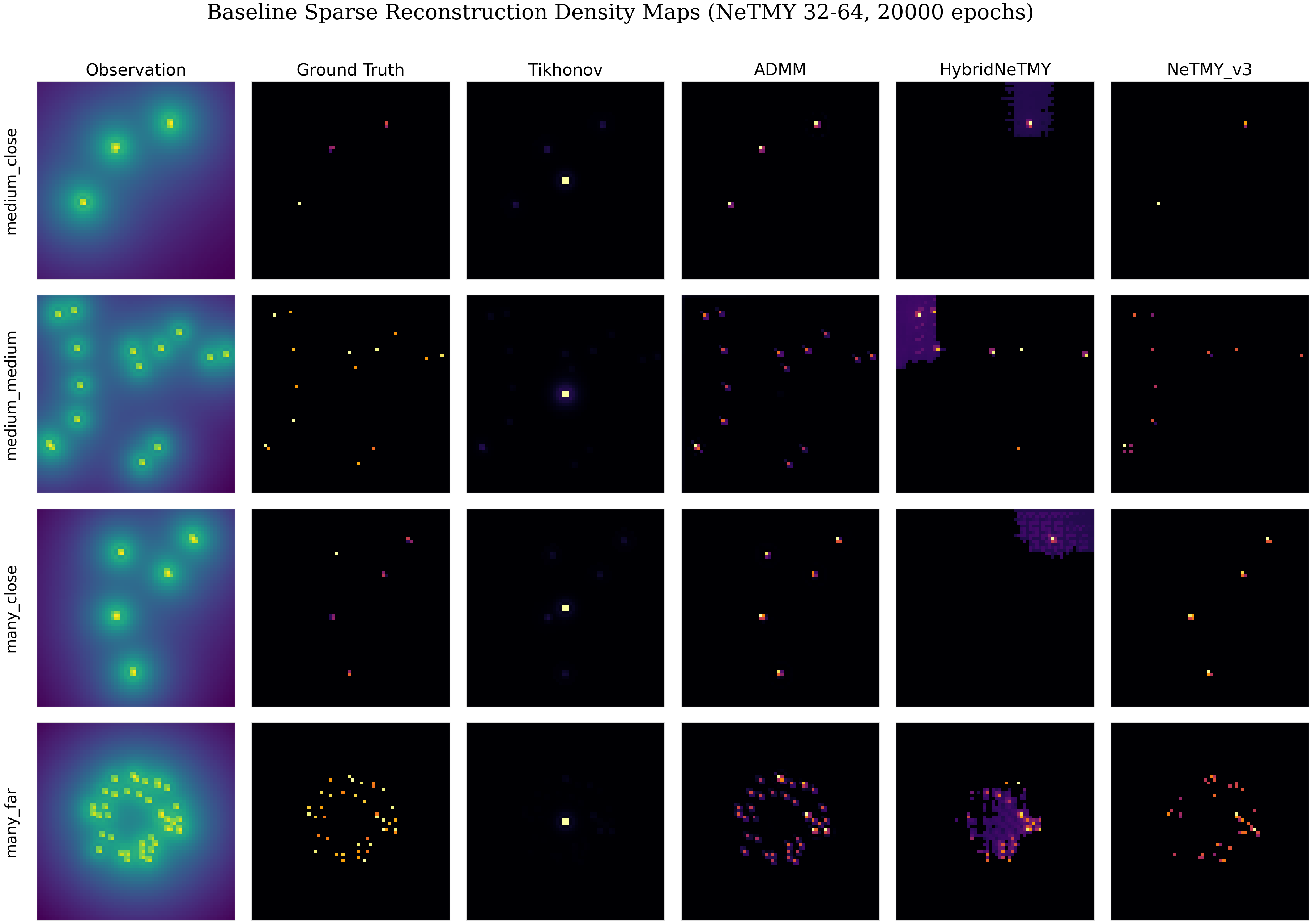}
    \caption{Matched-operator $\mathcal{F}_{1}/\mathcal{F}_{1}$ examples (samples $012$, $016$, $022$, $029$). Under the simplified scalar/coherent operator there is no centering pathology, and ADMM is the lowest-MSE method (Table~\ref{tab:matched}); NeTMY remains visually competitive with cleaner peaks but does not dominate.}
    \label{fig:QTD-Shallow}
\end{figure}

\begin{figure}[H]
    \centering
    \includegraphics[width=1\textwidth]{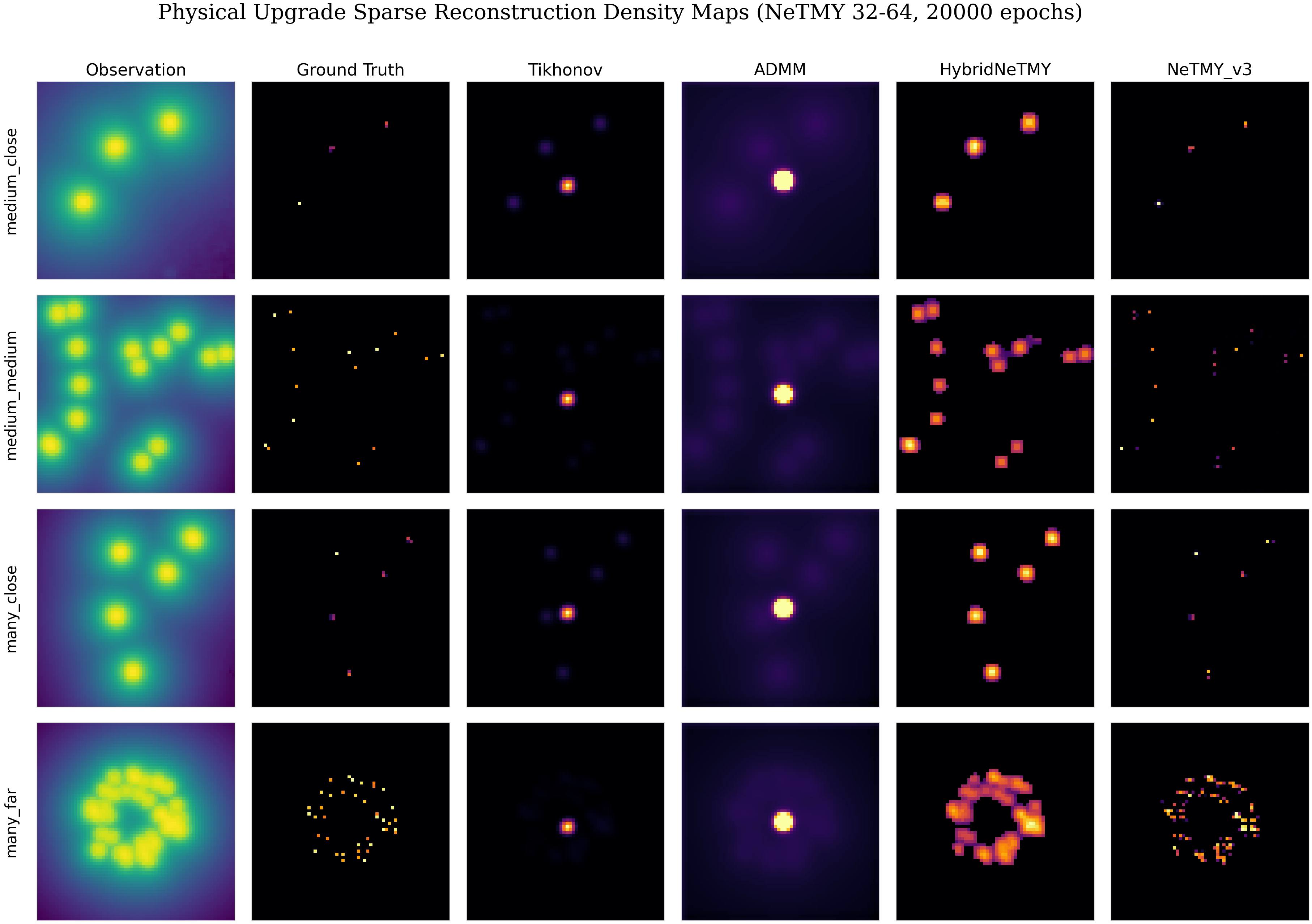}
    \caption{Matched-operator $\mathcal{F}_{2}/\mathcal{F}_{2}$ examples (samples $012$, $016$, $022$, $029$). Under the tensor power-summed operator the same ADMM run degrades to a centered collapse, while NeTMY produces the cleanest off-center reconstructions and tops every $\mathcal{F}_{2}/\mathcal{F}_{2}$ metric in Table~\ref{tab:matched}.}
    \label{fig:QTD-Physical}
\end{figure}



\end{document}